\theoremstyle{definition}
\newtheorem{theorem}{\textbf{Theorem}}
\theoremstyle{definition}
\newtheorem{lemma}{\textbf{Lemma}}
\theoremstyle{definition}
\theoremstyle{definition}
\newtheorem{definition}{\textbf{Definition}}
\theoremstyle{definition}
\newtheorem{assumption}{\textbf{Assumption}}
\theoremstyle{definition}
\newtheorem{remark}{\textbf{Remark}}
\definecolor{darkgreen}{RGB}{0,200,0}
\begin{document}

\title{FedMeld: A Model-dispersal Federated Learning Framework for Space-ground Integrated Networks}

\author{Qian Chen, \IEEEmembership{Member, IEEE}, Xianhao Chen, \IEEEmembership{Member, IEEE}, and Kaibin Huang, \IEEEmembership{Fellow,~IEEE}
\thanks{
Q. Chen, X. Chen, and K. Huang are with the Department of Electrical and
Electronic Engineering, The University of Hong Kong, Hong Kong (Email: \{qchen, xchen, huangkb\}@eee.hku.hk). 

\textit{(Corresponding author: Xianhao Chen and Kaibin Huang)}}
}

\markboth{Journal of \LaTeX\ Class Files}%
{Shell \MakeLowercase{\textit{et al.}}: Bare Advanced Demo of IEEEtran.cls for IEEE Computer Society Journals}


\IEEEtitleabstractindextext{
\begin{abstract}
To bridge the digital divide, space-ground integrated networks (SGINs) are expected to deliver artificial intelligence (AI) services to every corner of the world. One key mission of SGINs is to support federated learning (FL) at a global scale. However, existing space-ground integrated FL frameworks involve ground stations or costly inter-satellite links, entailing excessive training latency and communication costs. To overcome these limitations, we propose an infrastructure-free \underline{fed}erated learning framework based on a \underline{m}od\underline{el} \underline{d}ispersal (FedMeld) strategy, which exploits periodic movement patterns and store-carry-forward capabilities of satellites to enable parameter mixing across large-scale geographical regions. We theoretically show that FedMeld leads to global model convergence and quantify the effects of round interval and mixing ratio between adjacent areas on its learning performance. Based on the theoretical results, we formulate a joint optimization problem to design the staleness control and mixing ratio (SC-MR) for minimizing the training loss. By decomposing the problem into sequential SC and MR subproblems without compromising the optimality, we derive the round interval solution in a closed form and the mixing ratio in a semi-closed form to achieve the \textit{optimal} latency-accuracy tradeoff. Experiments using various datasets demonstrate that FedMeld achieves superior model accuracy while significantly reducing communication costs as compared with traditional FL schemes for SGINs.
\end{abstract}

\begin{IEEEkeywords}
Edge intelligence, federated learning, handover, space-ground integrated networks, convergence analysis.
\end{IEEEkeywords}}

\maketitle

\IEEEdisplaynontitleabstractindextext

\IEEEpeerreviewmaketitle

\section{Introduction}
Edge intelligence and space-ground integrated networks (SGINs) are two new features of sixth-generation (6G) mobile networks. Edge intelligence aims to deliver pervasive, real-time artificial intelligence (AI) services to users~\cite{8808168,qumobile}.
Satellites are particularly valuable in extending coverage to remote and underserved regions without terrestrial infrastructure, and in serving as a complementary tier to offload traffic in congested urban areas. With their wide coverage footprints enabling successive service across different regions, SGINs can ensure ubiquitous worldwide coverage~\cite{sheng2023coverage}.
The convergence of these two trends in 6G marks a transformative step toward integrating networking and intelligence in space, enabling AI services to reach every corner of the planet. For instance, the Starlink constellation has deployed over 30,000 Linux nodes and more than 6,000 microcontrollers in space~\cite{Space-Linux}, creating a satellite server network orbiting around the Earth. This advancement extends traditional 2D edge AI into a 3D space-ground integrated AI paradigm~\cite{chen2024space}, offering mission-critical AI services anytime, anywhere~\cite{10579820}.

One primary objective of space-ground integrated AI is to enable federated learning (FL)~\cite{chen2024space}, referred to as space-ground integrated FL (SGI-FL). Based on such technologies, pervasive ground clients collaboratively train AI models without sharing raw data~\cite{mcmahan2017communication,10453386,9562559}, while satellites act as parameter servers to periodically receive from, aggregate over, and redistribute model parameters to ground clients. 
Recent advances in direct satellite-to-device communication have further strengthened the practicality of this paradigm. With the inclusion of non-terrestrial networks in 3GPP~\cite{3GPP38.811,3GPP38.108} and the commercial adoption of satellite connectivity in mainstream smartphones (e.g., Apple iPhone and Huawei Mate 60), direct user–satellite links have become both technically feasible and cost-effective~\cite{wang2025constellation}. 
A representative use case of SGI-FL is healthcare: hospitals and clinics in sparsely populated areas can jointly train diagnostic or triage models without transmitting sensitive medical records, thereby protecting patient privacy and meeting strict data sovereignty requirements \cite{gdpr,10555359}. Similarly, in agriculture and climate monitoring, satellites and dispersed ground sensors can train global predictive models for crop diseases, drought risks, or extreme weather events, where cross-border regulations and site-specific sensitivities make centralized data collection impractical. In both scenarios, SGI-FL enables global-scale model convergence while safeguarding data privacy.

SGI-FL generally aims to train a global consensus model across a \textit{large-scale geographical area} by learning from distributed data. Since a parameter server in SGI-FL, such as a low Earth orbit (LEO) satellite, can only cover a limited area~\cite{9568716}, cooperation among multiple satellites is often needed.
Therefore, several classical FL algorithms, such as horizontal FL frameworks~\cite{10024766}, are unsuitable for the global training scenarios considered in SGI-FL. These frameworks rely on centralized cloud aggregation, where globally distributed clients are expected to upload model updates directly to a cloud server. Such a design requires stable and persistent connectivity, which is often impractical in SGI-FL schemes~\cite{sheng2023coverage,10579820}.
There are two primary approaches to synchronizing models among these satellite servers.
The first approach is \textit{mixing via ground stations}, where each satellite server transmits its aggregated model to a ground station for reaching global consensus. However, since a training round cannot finish until \textit{all} satellites transmit their local updates to the ground station and receive the updated global model~\cite{10121575, lin2023fedsn}, this approach often results in prolonged idle periods of up to several hours per round. 
The other approach is \textit{mixing via inter-satellite links} (ISLs), where satellites exchange the aggregated parameters with their neighboring satellites through ISLs. This approach can overcome the limited contact time between satellites and ground stations, accelerating the learning process through frequent model exchanges~\cite{10436088,9982621}. However, it generates substantial data traffic within the satellite constellations. Furthermore, ISLs are not fully supported in many LEO mega-constellations due to the prohibitively high cost of laser terminals~\cite{laserISL}. As a result, many LEO mega-constellations today still rely on bent-pipe frameworks, where data is routed via ground stations~\cite{frederiksen2024end,10623014}.
In a nutshell, existing SGI-FL schemes, when serving large-scale regions through multiple satellites, face three major limitations: 1) global model consensus often results in \textbf{slow convergence}; 2) \textbf{significant communication costs} occur over ISLs in large-scale constellations; and 3) satellite cooperation must rely on \textbf{extra infrastructure} such as ISLs, which are unavailable in many commercial satellites.

To overcome all the above issues, we propose an SGI-FL framework called \underline{Fed}erated learning with \underline{M}od\underline{el} \underline{D}ispersal (FedMeld) as our answer. Our mechanism is inspired by a key observation: Given the repetitive trajectories and store-carry-forward (SCF) capabilities of co-orbital satellites, as shown in Fig. \ref{fig:earth_mixing}, a satellite can transfer model parameters from one serving region $i$ (e.g., A) to the next region $i+1$ (e.g., B) by mixing models from these two adjacent areas; Similarly, the circular nature of satellite orbits ensures that the model parameters from region $i+1$ (e.g., B) will eventually be returned to region $i$ (e.g., A). Consequently, without requiring dedicated ground stations or ISLs, \textit{parameter mixing across all regions is naturally achieved through model dispersal via satellite trajectories}, akin to how animals disperse seeds across continents. Within the FedMeld framework, users can continue training while satellites carrying model parameters traverse between adjacent regions, introducing a latency-accuracy tradeoff in managing model staleness. Moreover, model mixing ratios across regions should also be determined judiciously to achieve optimal learning performance. 
The core advantages of FedMeld are summarized as follows:
\begin{itemize}
    \item \textbf{Infrastructure-free model aggregation:} 
 By utilizing repetitive trajectories and SCF behaviour of co-orbital satellites, FedMeld eliminates the reliance on ground stations or ISLs for model dispersal across regions, turning the satellite's orbital mobility from a foe to a friend.

    \item \textbf{Flexible control of latency-accuracy tradeoff:} The design introduces tunable parameters for staleness and mixing ratio, enabling clients to keep training without waiting for satellite movement and clients training in other regions.
\end{itemize}

\begin{figure}[t]
	\centering
    \includegraphics[width = 0.35\textwidth]{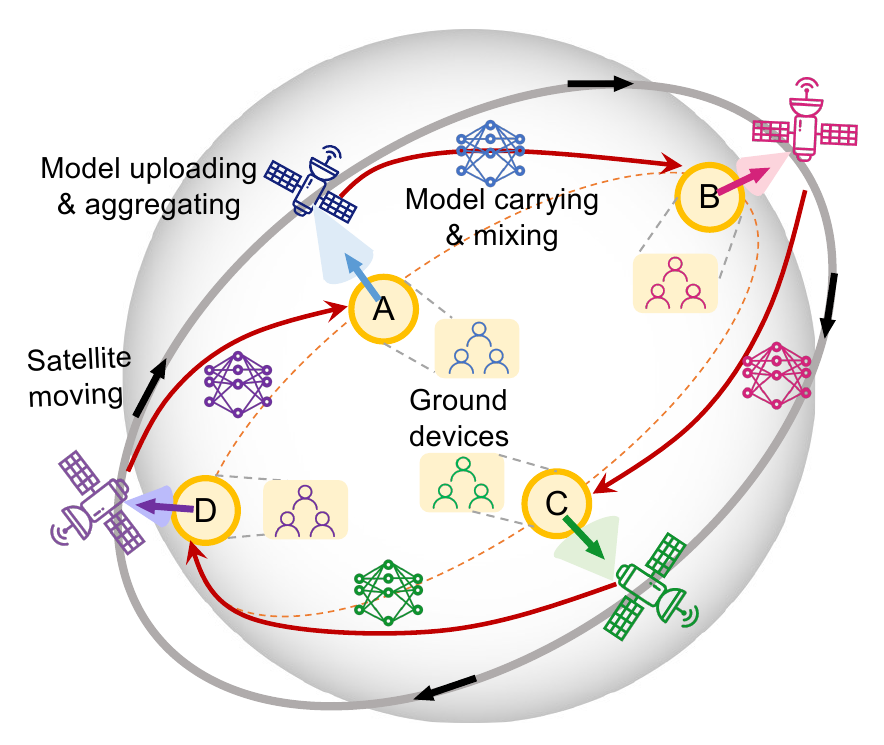}
\caption{Illustration of parameter mixing across different regions in the proposed FedMeld framework. \label{fig:earth_mixing}}
\end{figure}

To fully explore the advantages of FedMeld, we will answer two fundamental questions in this paper: 1) Does the proposed model dispersal method ensure global model convergence? and 2) how can we manage model staleness and mixing ratios to optimize training convergence? The key challenges in the convergence analysis of FedMeld primarily lie in the following two aspects.
\textbf{First}, while mobility-aware FL schemes have been explored for terrestrial networks~\cite{9154300,9759241,10128968,chen2024mobility}, these schemes often adopt discrete Markov chains to model user mobility. Unlike terrestrial users, satellites move at high speeds along predetermined and \textit{repetitive} orbital trajectories~\cite{10436074}, resulting in model merging across regions via model dispersal. Therefore, the proposed FedMeld framework differs from existing distributed FL schemes due to its special circular topology. 
\textbf{Second}, due to computing capabilities and geographical locations, asynchronous FL must be carried out to avoid prolonged client idleness~\cite{10746330,so2022fedspace,9674028,xie2020asynchronous}, which further complicates convergence analysis.
Existing asynchronous FL frameworks, such as FedSat~\cite{9674028}, FedAsync~\cite{xie2020asynchronous}, and AsyncFLEO~\cite{elmahallawy2022asyncfleo}, address model staleness by introducing heuristic weighting or discounting factors to mix current and outdated models, while FedCM accumulates global gradients across rounds~\cite{xu2021fedcm}. Although these approaches improve performance empirically, they lack a rigorous theoretical analysis of how staleness impacts convergence. In contrast, our proposed FedMeld provides a theoretical foundation by deriving a semi-closed-form expression for the optimal mixing ratio, which is intuitively justified and validated through simulations.
To address these challenges, our main contributions are summarized as follows:
\begin{itemize}
\item \textbf{FedMeld with convergence analysis:}
We propose the FedMeld framework for efficient decentralized FL in SGINs. To support effective control, we theoretically characterize the convergence bound of FedMeld under both full and partial participation scenarios, explicitly incorporating the impact of satellite constellation, inter-region model staleness, and model mixing ratio. Our analysis reveals a tradeoff between training latency and model accuracy, highlighting the need for carefully designed control strategies.

\item \textbf{Control of model staleness and mixing ratio:}  
Building on the theoretical insights, we formulate a joint optimization problem to determine the staleness control and mixing ratio (SC-MR) that minimizes the training loss. This optimization problem is decomposed into two sequential subproblems without compromising optimality. We derive the optimal solutions for both decision variables in closed and semi-closed forms, respectively. The results suggest that tighter training latency leads to higher model staleness, while a higher non-IID degree calls for a lower mixing ratio of historical models.

\item \textbf{Experimental results:}  
We perform extensive simulations on diverse datasets to validate the effectiveness of our framework. Compared with existing solutions, FedMeld is not only infrastructure-free but also achieves higher accuracies with reduced training time.  

\end{itemize}

The remainder of this paper is organized as follows. Models and metrics are introduced in Section \ref{sec:model_metric}. The FedMeld algorithm and its convergence analysis are detailed in Section \ref{sec:FedMeld_Alg_Convergence}. Building upon the convergence analysis, the joint optimization problem of SC-MR is formulated, and the optimal solutions are derived in Section \ref{sec:optimization}. Experimental results are provided in Section \ref{sec:experiment}, followed by concluding remarks in Section \ref{sec:conclusion}. 
The main symbols and parameters used in this paper are summarized in Table~\ref{tab:notations} for clarity.

	\begin{table}[t]
		\caption{{Summary of main notations}} \label{tab:notations}
		\vspace{-10pt}
		\begin{center}
			{\footnotesize	\begin{tabular}{|c|p{5cm}|}\hline  
		\textbf{Notation} & \textbf{Definition} \\ \hline
		$ \mathcal{M}, \mathcal{N}$ &  Set of areas and ground edge devices. \\ \hline
        $\mathcal{N}_i$, $\mathcal{U}_{k,i}$ &  Subset of devices in area $i$ and the set of participating clients of area $i$ in the $k$-th global round. \\ \hline
        $R$, $E$ & Total number of training iterations (steps) and the number of local training iterations performed by each client within the same region before local aggregation by its serving satellite. \\ \hline
        $K$ & Number of global training rounds that a serving satellite provides for each region. \\ \hline
        $\varsigma _{t,j}$, $\eta_t$ & Selected data batch from device $j$ at step $t$ and the learning rate at step $t$. \\ \hline
        ${\mathbf{w}}_{t,j}$, ${\mathbf{w}}_t$, ${\mathbf{z}}_t$ & Model parameters of device $j$, global virtual sequence under full participation, and global virtual sequence under partial participation at step $t$. \\ \hline
        $F\left ( \mathbf{w} \right ) $, $F_j\left ( \mathbf{w} \right )$ & Global loss function and local loss function of client $j$. \\ \hline
        $\delta$, $\alpha$ & Global round interval and mixing ratio of model mixing between adjacent regions. \\ \hline
      $T_{k,i }^{{\rm{total}}}$,  $T_{i,i + 1}^{{\rm{idle}}}$, $T_{i,i + 1}^{{\rm{fly}}}$ & Total training latency of area $i$ in global round $k$, total duration from the SCF satellite leaving area $i$ until completing model aggregation for area $\left( i+1\right)$, and idle duration of clients in the region $(i+1)$. \\ \hline
      $T_{\max}$ & Maximum tolerable training time. \\ \hline
      $L, \mu, G, \Gamma$ & Hyperparameter related to assumptions in the convergence analysis. \\ \hline
      $f\left ( \delta, \alpha \right ) $ & Convergence bound of FedMeld as a function of $\delta $ and $\alpha $. \\ \hline
      $\zeta_1, \zeta_2, \zeta_3, \kappa_1, \kappa_2$  & Key parameters related to the convergence bound of FedMeld. \\ \hline
			\end{tabular}}
		\end{center}
	\end{table}

\section{Models and Metrics}\label{sec:model_metric}
We consider an FL framework in SGINs, where a mega-LEO satellite constellation collaborates with groups of terrestrial edge devices to train a neural network. The constellation consists of multiple circular orbits, with satellites uniformly distributed at equal distance intervals within each orbit. These AI-empowered satellites act as parameter servers and move along predetermined orbits. We assume that there are a set $\mathcal{M}$ of areas and a set $ \mathcal{N} $ of ground edge devices, with $ \mathcal{M} = \left\{1, \ldots, M\right\} $ and $ \mathcal{N} = \left\{1, \ldots, N\right\} $. 
These ground areas are distributed along a specific orbit, and
each area $ i \in \mathcal{M} $ covers a subset of devices, denoted by $ \mathcal{N}_i $ with cardinality $ N_i $. 
The learning and communication models are described as follows.


\subsection{Learning Model}
We first introduce the procedure of FedAvg underpinning FedMeld.
The training process consists of $R$ iterations (steps), denoted by $ {\mathcal{R}} = \left\{1, \ldots, R\right\}$. 
We assume that each serving satellite aggregates the model from its covered clients every $E$ iterations, during which all participating clients within the same area perform synchronized local updates.
We define $ {\mathcal{I}}_E = \left\{ {\left. {kE} \right|k = 1,2, \ldots, \left\lfloor {\frac{R}{E}} \right\rfloor } \right\}$ as the set of synchronization steps of each area, where $k$ is the index of the global round.
Since the devices in the same region are located nearby, we assume that they will be handed over to a new satellite parameter server at the same time according to some criteria such as distance and signal-to-noise ratio (SNR).
In addition, considering partial device participation, in the $k$-th global round, the serving (nearest) satellite of area $ i $ randomly selects a device set, $ \mathcal{U}_{k,i} $ (with cardinality $ U_i $), out of $ \mathcal{N}_i $ to participate in the training process.

Each device $ j $ has a private dataset $ {\mathcal{D}}_j $ of size $ \left| {{{\mathcal{D}}_j}} \right| $. 
Denote the $ n $-th training data sample in device $ j $ by $ {\xi  _{j,n}} $, where $ n \in {{{\mathcal{D}}_j}} $.
The training objective of FL is to seek the optimal model parameter $\mathbf{w}^{\ast}$ for the following optimization problem to minimize the global loss function:
\begin{align}
	\mathop {\min }\limits_{\mathbf{w}} F\left( {\mathbf{w}} \right) = \frac{1}{M}\sum\limits_{i \in {\mathcal{M}}} {\frac{1}{{{N_i}}}} \sum\limits_{j \in {{{\cal N}}_i}} {{F_j}\left( {\mathbf{w}} \right)}.
\end{align}
Here, $ {F_j}\left( \cdot \right)  $ is the local loss function, defined by
\begin{equation}
	{F_j}\left( {\mathbf{w}} \right) \triangleq \frac{1}{{\left| {{{\mathcal{D}}_j}} \right|}}\sum\limits_{n \in {{\mathcal{D}}_j}} {\ell}\left( {{\mathbf{w}},{\xi  _{j,n}}} \right),
\end{equation}
where $ {\ell} \left(\cdot, \cdot \right) $ is a user-specified loss function.

Let $ {\mathbf{w}}_{t,j} $ be the model parameters of device $ j $ at step $ t \in {\mathcal{R}}$. 
Consider a global round of the \textit{standard FedAvg} algorithm. First, when $t+1 \notin  \mathcal{I}_E$, the participating clients conduct $E$ local iterations with the latest received model parameters and local data samples. Each local iteration executes ${\mathbf{w}_{t + 1,j}} ={\mathbf{w}_{t,j}} - {\eta _t}{\nabla {F_j}\left( {\mathbf{w}_{t,j},\varsigma _{t,j}} \right)}$, where $ \eta_t $ is the learning rate, $\varsigma_{t,j}  $ is the selected batch from device $ j $ at step $ t $ with $  | \varsigma |$ data samples, and $\nabla {F_j}\left( {\mathbf{w}_{t,j},\varsigma _{t,j}} \right)$ is the stochastic gradient of ${F_j}\left( {\mathbf{w}_{t,j}} \right)$.
Then, when $t+1 \in \mathcal{I}_E$, the serving satellite aggregates the updated model parameters from these clients, conducts FedAvg by executing ${\mathbf{w}_{t + 1,j}} ={\frac{1}{{{N_i}}}\sum\limits_{j \in {{\mathcal{N}}_i}} \left (  {\mathbf{w}_{t,j}} - {\eta _t}{\nabla {F_j}\left( {\mathbf{w}_{t,j},\varsigma _{t,j}} \right)} \right )  }$, and broadcasts the average model back to the clients.

\subsection{Communication Model}
The training process is conducted in consecutive global rounds until the stopping condition is reached. 
Without loss of generality, we focus on the global round $k$ and one ground area $i \in \mathcal{M}$ for analysis and provide detailed latency expressions as follows. 

First, the selected clients conduct $E$ local iterations. For the client $j \in \mathcal{U}_{k,i}$, the computing latency of each local iteration is calculated as
\begin{equation}
	T_j^{\rm{local}}=\frac{ | \varsigma | \Phi}{L_j}, \; \forall j \in \mathcal{N},
\end{equation}
where $ \Phi $ is the computing workload of local iteration and $L_j$ denotes the computing capability (namely the number of floating point operations per second) of client $j$.

Then, clients in each area upload their models to their associated satellites. We consider orthogonal frequency-division multiple access (OFDMA) for uplink transmissions~\cite{9210812,zhang2024lolafl}, and each selected device per global round is assigned a dedicated sub-channel with bandwidth $ W^{\rm{U}} $\footnote{{In addition to digital transmission via OFDMA, analog computation methods like over-the-air aggregation become increasingly advantageous in large-scale systems, as they leverage waveform superposition to aggregate concurrent transmissions without requiring per-device orthogonalization~\cite{10552192,8870236,9272666}.}}.
Then, the uplink rate between device $ j $ and its associated satellite at global round $k$ can be expressed as
\begin{equation}
	{C_{k,j}^{\rm{U}}} =  W^{\rm{U}}{\log _2}\left( {1 + \frac{{{P_{\rm{UE}}}{G_{\rm{UE}}}{G_\mathrm{SAT}} L_{k,j}^{\rm{U,PL}} L^{\mathrm{AL}}}}{{{n_0} W^{\rm{U}}}}} \right),
\end{equation}
where $ P_{\rm{UE}} $ is the transmit power of ground devices, $ G_{\rm{UE}} $ and $ G_{\mathrm{SAT}} $ are transmitting and receiving antenna gain of device and satellite, respectively. The free-space path loss $ L_{k,j}^{\rm{U,PL}} $ is defined as $ L_{k,j}^{\rm{U,PL}} = {\left( {\frac{\lambda_{j}^{\rm{U}} }{{4\pi {d_{k,j}^{\rm{U}}}}}} \right)^2} $, where $ \lambda_{j}^{\rm{U}}  $ is the wavelength of uplink signal and $ d_{k,j}^{\rm{U}}  $ is the uplink transmission distance between device $j$ and its connected satellite. The additional loss $ L^{\mathrm{AL}} $ is used to characterize the attenuation due to environmental factors, and $ n_0 $ is the noise power spectral density.
After the serving satellite conducts FedAvg with latency $T^{\rm{agg}}$, the satellite broadcasts the averaged results to the clients, with the downlink rate between satellite and client $j$ being
\begin{equation}
	{C_{k,j}^{\rm{D}}} =  W^{\rm{D}}{\log _2}\left( {1 + \frac{{{P_{\rm{SAT}}}{G_{\rm{UE}}}{G_\mathrm{SAT}}L_{k,j}^{\mathrm{D,PL}}L^{\mathrm{AL}}}}{{{n_0} W^{\rm{D}}}}} \right),
\end{equation}
where $P_{\rm{SAT}}$ is the transmit power of satellites and $ W^{\rm{D}} $ is the bandwidth that the satellite broadcasts the global model to the users. The free-space path loss of downlink transmission can be calculated by $ L_{k,j}^{\rm{D,PL}} = {\left( {\frac{\lambda_{j}^{\rm{D}} }{{4\pi {d_{k,j}^{\rm{D}}}}}} \right)^2} $, where $ \lambda_{j}^{\rm{D}}  $ is the wavelength of downlink signal and $ d_{k,j}^{\rm{D}}  $ is the downlink transmission distance.
In this paper, we mainly discuss users located in remote areas, where satellite links experience fewer obstacles and weaker multi-path effects compared with urban environments. This assumption is used solely for channel modeling simplification, while the proposed FedMeld framework itself remains applicable to cross-region model aggregation in other urban areas. Therefore, we ignore the impact of small-scale fading caused by the multi-path effect, being consistent with \cite{9174937}.

Given the data rate, the communication latency over uplink and downlink between client $j$ and its associated satellite at the global round $k$ can be expressed as $ T_{k,j}^{\mathrm{U}} = \frac{q}{{C_{k,j}^{\text{U}}}} $ and $ T_{k,j}^{\mathrm{D}} = \frac{q}{{C_{k,j}^{\text{D}}}} $, where $ q $ denotes the model size in bits. 
In addition, the long transmission distance between satellites and ground clients leads that propagation latency cannot be neglected as in terrestrial networks. 
Thus, the propagation latency between device $j$ and its associated satellite at the global round $k$ can be expressed as $T_{k,j}^{\mathrm{P}} = \frac{d_{k,j}^{\mathrm{U}}+d_{k,j}^{\mathrm{D}}}{c}$, where $c$ is the light speed.
Then, the total latency of global round $ k $ can be given by
\begin{align}
	T_{k,i}^{\mathrm{total}} = \mathop {\max }\limits_{j \in {\mathcal{U}_{k,i}}} \left\{ {ET_j^{\mathrm{local}} + T_{k,j}^{\mathrm{U}}} + T_{k,j}^{\mathrm{D}} + T_{k,j}^{\mathrm{P}} \right\} + T_{k,i}^{\rm{agg}},
\end{align}
where $T_{k,i}^{\rm{agg}}$ denotes the computation time required by the serving satellite to perform model aggregation (e.g., FedAvg) over the received client models.

Since we consider the nearest association strategy in the given satellite constellation, the serving duration of each serving satellite over a specific region remains stable \cite{9566290}. 
This coverage duration determines the number of global training rounds the satellite can complete with the clients in a specific region, denoted by $ K $. That is, $K$ is related to the characteristics of the satellite constellation, such as orbital period and satellite density.

\subsection{Performance Metric}
For area $i$, we define an area virtual sequence $ {{{\overline {\mathbf{w}}}_{t,i}}} $ as ${{{\overline {\mathbf{w}}}_{t,i}}} = {\frac{1}{{{N_i}}}\sum\limits_{j \in {{\mathcal{N}}_i}} {{{\mathbf{w}}_{t,j}}} }$. We also define a global virtual sequence as $ {{\overline {\mathbf{w}}}_t} = \frac{1}{M}\sum\limits_{i \in {\mathcal{M}}} {{{\overline {\mathbf{w}}}_{t,i}}} $. When $ t \notin \mathcal{I}_E $, both area and global's sequences are inaccessible. When $ t \in \mathcal{I}_E $, only $ {{{\overline {\mathbf{w}}}_{t,i}}} $ can be fetched for any area $ i \in \mathcal{M} $. After the last training step $ R $, the models can be averaged over all areas to obtain the final global model $ {{\overline {\mathbf{w}}}_R} $.

At the $t$-th iteration, the convergence rate of FedMeld algorithm is defined as the difference between the expectation of the global loss function at the $t$-th step, i.e.,  $F\left( {{{\overline {\mathbf{w}}}_t}} \right)$, and the optimal objective value $F\left( {{{\mathbf{w}}^{ \star }}} \right)$, as
\begin{equation}
    \mathbb{E}\left[ {F\left( {{{\overline {\mathbf{w}}}_t}} \right)} \right] - F\left( {{{\mathbf{w}}^{ \star }}} \right).
\end{equation}
For the general partial participation scheme, the global virtual sequence $ \overline{\mathbf{w}}_t $ is substituted by $ {\overline {\mathbf{z}}_t} = \frac{1}{M}\sum\limits_{i \in {\mathcal{M}}} {\frac{1}{{{U_i}}}\sum\limits_{j \in {{\mathcal{U}}_{t,i}}} {{{\mathbf{w}}_{t,j}}} } $.

\section{FedMeld Algorithm and Convergence Analysis}\label{sec:FedMeld_Alg_Convergence}
In this section, we first describe our proposed FedMeld algorithm. Then, we conduct a convergence analysis of FedMeld under full and partial participation schemes. The results will be used for optimizing the FedMeld in the next section.

\subsection{FedMeld Algorithm}
To realize infrastructure-free model aggregation, FedMeld classifies serving satellites into two functional roles based on their capability to carry and forward models across regions:
\begin{itemize}
    \item \textbf{SCF satellites:} They are responsible for inter-regional model mixing by storing aggregated models from one region and carrying them to the next along their orbital path. Upon arrival at a new region, they mix the stored model from the previous region with the newly aggregated model from local clients, enabling infrastructure-free parameter mixing across adjacent regions.

     \item \textbf{Non-SCF satellites\footnote{{The spatial distribution of ground clusters may be uneven, leading to variations in the flight interval between adjacent regions. 
     The number of global training rounds $K$ for each region is jointly determined by the satellite constellation configuration, the geographical size of each region, the spatial distance between adjacent regions, and the user–satellite communication latency. When two regions are geographically close, the time left for the satellite to serve the next region after completing the current one may be limited. Therefore, $K$ should be carefully designed considering the above factors to ensure that the satellite can complete $K$ rounds within each coverage period. Conversely, when regions are far apart, the idle interval can be effectively utilized by non-SCF satellites, which continue local training before receiving the model from the previous region.}}:} They act as temporary parameter servers that perform local model aggregation (FedAvg) for the current region only. These satellites do not store or forward models across regions and simply discard the aggregated models after completing the local service.
     As a result, their operation introduces asynchronous model updates across regions, since clients in one area can continue local training while awaiting the arrival of an SCF satellite carrying models from previous regions.

\end{itemize}

\begin{remark}
    (\textit{Necessity of Non-SCF Satellites})
    The primary role of non-SCF satellites is to utilize the idle time when an SCF satellite flies from one region to the next. Without non-SCF satellites, clients in a region must wait until the SCF satellite arrives to perform model mixing, resulting in synchronous training and idle periods. In contrast, the presence of non-SCF satellites enables asynchronous model mixing, allowing clients in the next region to continue training and aggregating models before an SCF satellite carrying the model from the previous region arrives.
It is worth noting that designating all satellites as SCF is not desirable. In such a case, multiple outdated models would be repeatedly forwarded and mixed across regions even when fresher models are already available, which diminishes their contribution to convergence.
This SCF-based division aligns with the orbital movement of satellites, where only selected satellites, determined by their orbital paths and the order in which they serve regions, are designated to carry forward models between consecutive regions. 
\end{remark}

For $ t \in \mathcal{I}_E $, there are two cases: 

1) When the serving satellite of area $ i $ is a non-SCF satellite, the devices within area $ i $ upload their models to this satellite for FedAvg.

2) When the serving satellite is an SCF satellite, area $ (i-1) $'s model carried by this satellite will be mixed with area $ i $'s model after performing FedAvg for the collected clients' model in step $t$.

Without loss of generality, we assume that handover occurs after the previous serving satellite broadcasts the latest averaged model to the clients. To accelerate the training process, devices in each region continue additional training rounds with non-SCF satellites before the arrival of the incoming SCF satellite. We define the global round interval of model mixing between adjacent regions as $\delta$, indicating that when model mixing occurs at step $t$, area $i$'s model at step $t$ will be mixed with area $(i-1)$'s model from step $t-\delta E$. 
Since we consider that all the regions are covered by a specific circular orbit, area $i=0$ is equivalent to the last area (i.e., $i=M$), and $i=M+1$ equals the first area (i.e., $i=1$).

Fig. \ref{fig:K_delta} provides an example of how models from four regions are mixed. The procedure of FedMeld is presented below.
\begin{enumerate} 
	\item At the beginning of a mix cycle, users in each area and their serving (nearest) SCF satellite execute $ K $ global rounds cooperatively until the handover happens. Within each global round, the SCF satellite selects a set of participating clients to conduct $ E $ local iterations, collects the uploaded models from these users, runs FedAvg, and broadcasts the averaged model back to users.
 The client selection adopts a random without-replacement strategy, ensuring that each client participates at most once per round and that the global update remains unbiased in expectation.
	
	\item 
	Before the SCF satellite which carries area $\left( i-1 \right) $'s model becomes the serving satellite for area $ i $, users in region $ i $ and their connected non-SCF satellites conduct $ \delta - 1 $ global rounds collaboratively.
	
	\item 
	When the SCF satellite serves area $ i $, users in region $  i  $ conduct $ E $ local iterations and upload their models to this satellite. After running FedAvg for the collected models of area $ i $, the satellite mixes this averaged result with its stored historical model of area $ \left( i-1 \right) $, where the mixing proportion of the historical model is $ \alpha $.
    Let a binary variable, $A_t \in \left\{0,1\right\}$, denote whether the area's model mixes with its adjacent model at step $ t $. When $t \mod (K + \delta) = 0 $, $A_t=1$ holds, otherwise, $A_t =0$.
Therefore, the model of device $ j \in \mathcal{N}_i$ can be updated as follows:
\begin{equation}\label{equ:update_rule_w}
\begin{split}
 & {\mathbf{w}_{t,j}} = \\
 &  \left\{ \begin{array}{l}
		 {\mathbf{w}_{t-1,j}} - {\eta _{t-1}}{\nabla {F_j}\left( {\mathbf{w}_{{t-1},j},\varsigma _{{t-1},j}} \right)} ,  \quad  {\text{if}} \; t \notin {\mathcal{I}}_E, \\
		\left( {1 - \alpha {A_t}} \right){\frac{1}{{{N_i}}}\sum\limits_{j \in {{\mathcal{N}}_i}} \left (  {\mathbf{w}_{t,j}} - {\eta _t}{\nabla {F_j}\left( {\mathbf{w}_{t,j},\varsigma _{t,j}} \right)} \right )  }  \\
        + {\alpha}{A_t}{{\overline {\mathbf{w}}}_{t - {\delta E },i - 1}},  \quad {\text{if}} \; t \in {\mathcal{I}}_E.
	\end{array} \right.  
\end{split}
\end{equation}

	\item Go back to Step 1) until the stopping condition is met.
\end{enumerate}

\begin{figure}[h]
	\centering
	\includegraphics[width = 0.5\textwidth]{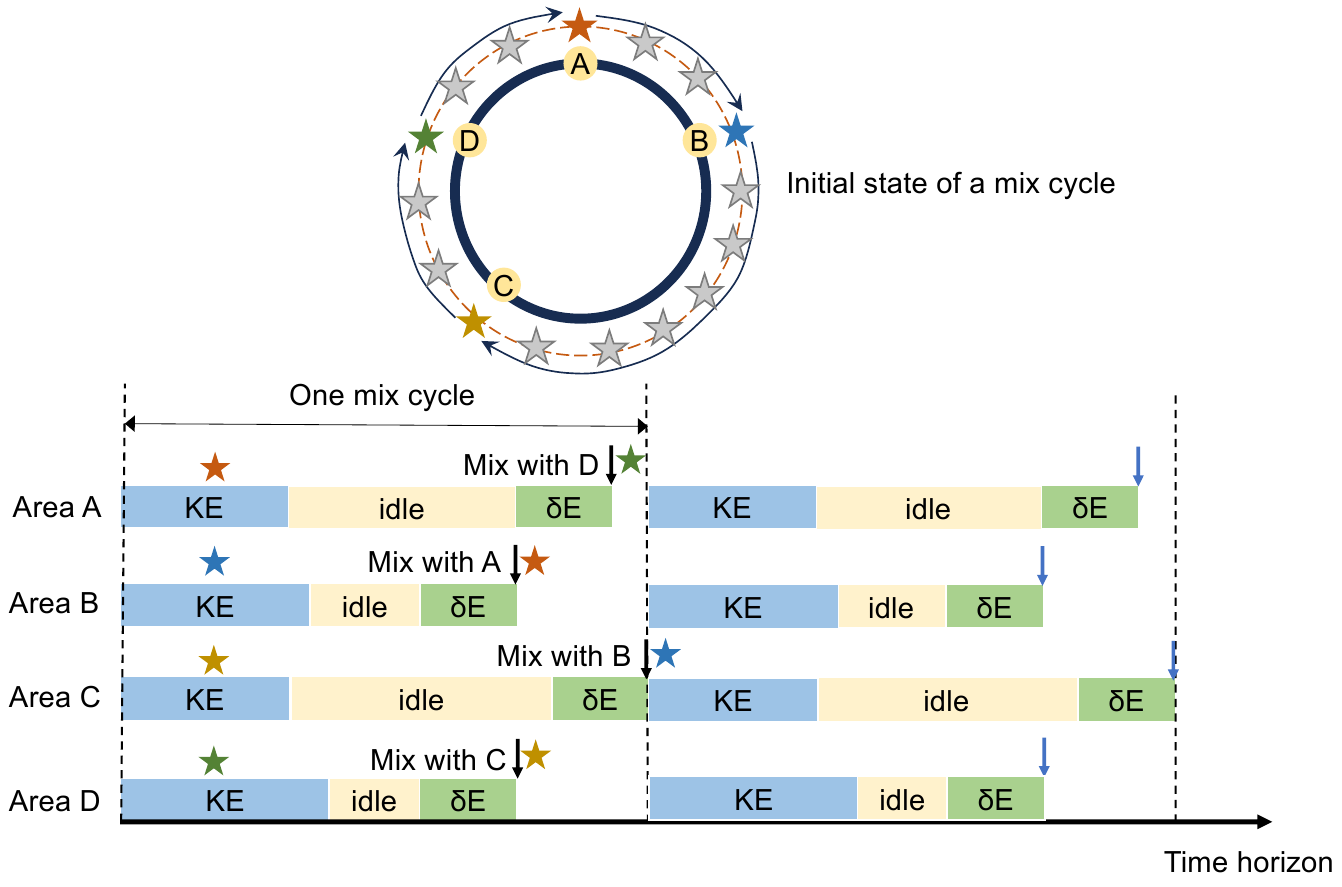}
	\caption{A diagram of parameter mixing in FedMeld algorithm. Colored satellites represent SCF satellites that carry aggregated models across adjacent regions and perform inter-region mixing upon arrival. Gray satellites denote non-SCF satellites, which only perform local aggregation (FedAvg) during their service period without model transfer to subsequent regions.\label{fig:K_delta}}
\end{figure}

The FedMeld algorithm is summarized in Algorithm \ref{alg:FedMeld}.
In FedMeld, each time an SCF satellite arrives at the next region, a parameter mixing occurs with the adjacent regional models. As a result, after $M-1$ mixing, each region incorporates parameter information from all other regions, naturally achieving global model fusion with FedMeld.

\begin{algorithm}
	\caption{FedMeld Algorithm}\label{alg:FedMeld}
	\begin{algorithmic}[1]
		\State \textbf{Input:} Learning rate $\eta_t$, number of local iterations $E$, number of total iterations $R$,  handover condition $A_t$, number of global rounds trained by each SCF satellite $K$, mixing ratio $\alpha$, global round interval of model mixing between adjacent regions $\delta$
		\State \textbf{Output:} Model parameter $\mathbf{w}$
		\State Initialize model parameter $\mathbf{w}$ with $\mathbf{w}_0$
		\For{each global round $k = 1, 2, \ldots, \frac{R}{E}$}
		\For{each serving satellite of area $i = 1 \ldots M$ \textbf{parallel}}
		\State Randomly select a set of active clients $\mathcal{U}_{k,i}$
		\For{each client $j \in \mathcal{U}_{k,i}$ {in parallel}}
		\If{$t \notin {\mathcal{I}}_E$}
		\State{$\mathbf{w}_{t,j} = \mathbf{v}_{t,j}$}
		\Else
		\State{$\mathbf{w}_{t,j}  = \left( {1 - \alpha {A_t}} \right){{\overline {\mathbf{v}}}_{t,i}}  + {\alpha}{A_t}{{\overline {\mathbf{w}}}_{t - {\delta E },i - 1}}$}
		\EndIf
		\EndFor
		\EndFor
		\EndFor
		
		\State \Return $\mathbf{w}$
		
	\end{algorithmic}
\end{algorithm}

\subsection{Convergence Analysis}
Intuitively, FedMeld results in model parameter exchanges across regions by satellite mobility. However, does it converge, and how quickly does it converge? This requires a convergence analysis of FedMeld.

We proceed by making the widely adopted assumptions on client $j$'s local loss function ${F_j}\left( {\bf{w}} \right)$. Let $\nabla {F_j}\left( {\bf{w}} \right) \buildrel \Delta \over = {{\mathbb{E}}_{{\varsigma _j}}}\left[ {\nabla {F_j}\left( {{\bf{w}};{\varsigma _j}} \right)} \right] $.

\begin{assumption}[L-smoothness]\label{ass:smooth}
	For device $j \in {\mathcal{N}}$, ${F_j}\left( {\bf{w}} \right)$ is differentiable and there exists a constant $L \ge 0$ such that for all $ \bf{w}$ and $ \bf{w}'$, ${F_j}\left( \mathbf{w} \right) \le {F_j}\left( \mathbf{w}' \right) + {\left( {\mathbf{w} - \mathbf{w'}} \right)^T}\nabla {F_j}\left( \mathbf{w'} \right) + \frac{L}{2}{\left\| {\mathbf{w} - \mathbf{w'}} \right\|^2}$ holds.
\end{assumption}

\begin{assumption}[$ \mu $-strongly convex]\label{ass:convex}
	For device $j \in {\mathcal{N}}$, ${F_j}\left( \mathbf{w} \right) \ge {F_j}\left( \mathbf{w}' \right) + {\left( {\mathbf{w} - \mathbf{w'}} \right)^T}\nabla {F_j}\left( \mathbf{w'} \right) + \frac{\mu}{2}{\left\| {\mathbf{w} - \mathbf{w'}} \right\|^2}$ holds.
\end{assumption}

\begin{assumption}[Unbiasedness and bounded stochastic gradient variance]\label{ass:unbiased_and_variance}
	For device $j \in {\mathcal{N}}$, the stochastic gradient is unbiased with variance bounded by $\sigma^2$ such that ${\mathbb{E}} \left[\nabla {F_j}\left( {{\mathbf{w}}_j;{\varsigma _j}} \right) \right] = \nabla {F_j}\left( {\mathbf{w}}_j \right)$ and $\mathbb{E}_{\varsigma_j }\left\|\nabla F_j\left({\mathbf{w}}_j ; \varsigma_j\right)-\nabla F_j({\mathbf{w}}_j)\right\|^2 \leq  \sigma_j^2 $ for all $ {\mathbf{w}} $.
\end{assumption}

\begin{assumption}[Bounded gradient]\label{ass:bounded_gradient}
	The expected value of the squared norm of stochastic gradients is consistently bounded: 
	${{{\left\| \nabla {F_j}\left( {{\mathbf{w}}_j;{\varsigma _j}} \right) \right\|}^2}} \leq G^2$ for all $ j \in \mathcal{N} $.
\end{assumption}

We also quantify the degree of non-independent and identically distributed (non-IID) based on the following definition.
\begin{definition}[Degree of non-IID]
	Let $F^*$ and $F_j^*$ be the minimum values of $F$ and $F_j$, respectively. We use a constant $ \Gamma $, denoted by $\Gamma  = {F^*} - \frac{1}{M}\sum\limits_{i \in {\mathcal{M}}} \frac{1}{N_i} {\sum\limits_{j \in {\mathcal{N}}_i} {F_j^*} } $, to quantify the non-IID degree of data in SGINs. If the data are IID, $\Gamma$ goes to zero when the number of data increases. 
If the data are non-iid, we will have $\Gamma >0$ and its magnitude shows the data heterogeneity.
\end{definition}

Based on the assumptions above, we first analyze the convergence bound of FedMeld under the full participation scheme and then extend the result to the partial participation one.
For analysis, we introduce an auxiliary variable $ {\mathbf{v}_{t + 1,j}} $ to denote the intermediate result of one step stochastic gradient descent (SGD) update from $ {\mathbf{w}_{t,j}} $. That is, 
\begin{equation}
    {\mathbf{v}_{t+1,j}} = {\mathbf{w}_{t,j}} - {\eta _t}{\nabla {F_j}\left( {\mathbf{w}_{t,j},\varsigma _{t,j}} \right)}.
\end{equation}
Then, the update rule of the model parameter of client $j$ under FedMeld can be rewritten as follows:
\begin{equation}
    {\mathbf{w}_{t,j}} = \begin{cases}
 {\mathbf{v}_{t,j}} & \text{ if } t \notin  \mathcal{I}_E  \\
\left( {1 - \alpha {A_t}} \right){{\overline {\mathbf{v}}}_{t,i}}  + {\alpha}{A_t}{{\overline {\mathbf{w}}}_{t - {\delta E },i - 1}},  & \text{ if } t \in \mathcal{I}_E
\end{cases}.
\end{equation}
We also define a virtual sequence as $ {{\overline {\mathbf{v}}}_t} = \frac{1}{M}\sum\limits_{i \in {\mathcal{M}}} {{{\overline {\mathbf{v}}}_{t,i}}}= \frac{1}{M}\sum\limits_{i \in {\mathcal{M}}} {\frac{1}{{{N_i}}}\sum\limits_{j \in {{\mathcal{N}}_i}} {{{\mathbf{v}}_{t,j}}} } $, where $ {{{\overline {\mathbf{v}}}_{t,i}}} $ is area $ i $'s virtual sequence.

To facilitate the subsequent convergence proof, we first recall a classical result of one-step SGD, which serves as the foundation for our later derivations.

\begin{lemma}\label{lemma:one_step_SGD}
(Results of one step SGD \cite[Lemma 1]{Li2020n})
	Let Assumption \ref{ass:smooth} and \ref{ass:convex} hold. 
	Notice that $ {{\overline {\mathbf{v}}}_{t + 1}} = {{\overline {\mathbf{w}}}_t} - \eta_t {\mathbf{g}_t} $ always holds.
	If $ \eta_t \leq \frac{1}{4L} $,  we have
	\begin{equation}\label{equ:one_step_SGD}
		\begin{split}
			&	{\left\| {{{\overline {\mathbf{v}}}_{t + 1}} - {{\mathbf{w}}^{ \star }}} \right\|^2} 	\leq \left( {1 - \mu {\eta _t}} \right)\mathbb{E}{\left\| {{{\overline {\mathbf{w}}}_t} - {{\mathbf{w}}^{ \star }}} \right\|^2} + \eta _t^2\mathbb{E}{\left\| {{{\mathbf{g}}_t} - {{\overline {\mathbf{g}}}_t}} \right\|^2}   \\
			& + 6L\eta _t^2\Gamma + 2\mathbb{E}\sum\limits_{i \in {\mathcal{M}}} {\sum\limits_{j \in {\mathcal{N}_i}} {\frac{1}{{M{N_i}}}} } {\left\| {  {\mathbf{w}_{t,j}} - {{\overline {\mathbf{w}}}_t}} \right\|^2},
		\end{split}	
	\end{equation}
	where $ {\mathbf{g}_t} = \frac{1}{M}\sum\limits_{i \in {\mathcal{M}}} {\frac{1}{{{N_i}}}\sum\limits_{j \in {\mathcal{N}_i}} {\nabla {F_j}\left( {\mathbf{w}_{t,j},\varsigma _{t,j}} \right)} } $, $ {{\overline {\mathbf{g}}}_t}  = \frac{1}{M}\sum\limits_{i \in {\mathcal{M}}} {\frac{1}{{{N_i}}}\sum\limits_{j \in {\mathcal{N}_i}} {\nabla {F_j}\left( {{\mathbf{w}_{t,j}}} \right)} } $, and $ \Gamma = {F^*} - \sum\limits_{i \in {\mathcal{M}}} {\sum\limits_{j \in {\mathcal{N}_i}} {\frac{1}{{M{N_i}}}} } F_j^* $.
\end{lemma}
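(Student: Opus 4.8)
The statement is the standard one-step progress bound for FedAvg-type iterations, here written for the two-level average $\overline{\mathbf{w}}_t = \frac{1}{M}\sum_{i\in\mathcal{M}}\frac{1}{N_i}\sum_{j\in\mathcal{N}_i}\mathbf{w}_{t,j}$. The plan is to treat the weight $p_j := \frac{1}{M N_i}$ attached to device $j\in\mathcal{N}_i$ as a single probability vector over $\mathcal{N}$ (these satisfy $\sum_{i\in\mathcal{M}}\sum_{j\in\mathcal{N}_i} p_j = 1$), so that $\overline{\mathbf{w}}_t$, $\mathbf{g}_t$ and $\overline{\mathbf{g}}_t$ all become ordinary $p_j$-weighted averages and the argument of \cite[Lemma 1]{Li2020On} applies directly. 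The first step is to substitute $\overline{\mathbf{v}}_{t+1} = \overline{\mathbf{w}}_t - \eta_t\mathbf{g}_t$ and insert $\pm\eta_t\overline{\mathbf{g}}_t$:
\begin{equation*}
\overline{\mathbf{v}}_{t+1} - \mathbf{w}^{\star} = \left(\overline{\mathbf{w}}_t - \mathbf{w}^{\star} - \eta_t\overline{\mathbf{g}}_t\right) + \eta_t\left(\overline{\mathbf{g}}_t - \mathbf{g}_t\right).
\end{equation*}
Expanding the squared norm and taking expectation, the unbiasedness in Assumption \ref{ass:unbiased_and_variance} makes the cross term vanish once we condition on the current iterates, which isolates the variance term $\eta_t^2\mathbb{E}\|\mathbf{g}_t - \overline{\mathbf{g}}_t\|^2$ exactly as it appears on the right-hand side and leaves the deterministic quantity $\mathbb{E}\|\overline{\mathbf{w}}_t - \mathbf{w}^{\star} - \eta_t\overline{\mathbf{g}}_t\|^2$ to be controlled.

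Next I would expand this deterministic term as $\|\overline{\mathbf{w}}_t - \mathbf{w}^{\star}\|^2 - 2\eta_t\langle\overline{\mathbf{w}}_t - \mathbf{w}^{\star}, \overline{\mathbf{g}}_t\rangle + \eta_t^2\|\overline{\mathbf{g}}_t\|^2$ and bound the two nontrivial pieces separately. For the quadratic term, convexity of $\|\cdot\|^2$ pulls the average outside and Assumption \ref{ass:smooth} ($L$-smoothness), in the form $\|\nabla F_j(\mathbf{w}_{t,j})\|^2 \le 2L(F_j(\mathbf{w}_{t,j}) - F_j^*)$, gives $\|\overline{\mathbf{g}}_t\|^2 \le 2L\sum_{i\in\mathcal{M}}\sum_{j\in\mathcal{N}_i} p_j(F_j(\mathbf{w}_{t,j}) - F_j^*)$. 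For the inner product, I would insert $\overline{\mathbf{w}}_t - \mathbf{w}^{\star} = (\overline{\mathbf{w}}_t - \mathbf{w}_{t,j}) + (\mathbf{w}_{t,j} - \mathbf{w}^{\star})$ inside each summand. The first part is handled by a weighted Young/AM--GM inequality, tuned so that it produces the consensus term $\sum_j p_j\|\overline{\mathbf{w}}_t - \mathbf{w}_{t,j}\|^2$ with the required coefficient, together with a further $\|\nabla F_j\|^2$ contribution that is again converted through $L$-smoothness; the second part is handled by Assumption \ref{ass:convex} ($\mu$-strong convexity), which yields both the descent factor $-\mu\eta_t\sum_j p_j\|\mathbf{w}_{t,j} - \mathbf{w}^{\star}\|^2$ and the optimality gaps $-2\eta_t\sum_j p_j(F_j(\mathbf{w}_{t,j}) - F_j(\mathbf{w}^{\star}))$.

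Collecting the terms, Jensen's inequality $\sum_j p_j\|\mathbf{w}_{t,j} - \mathbf{w}^{\star}\|^2 \ge \|\overline{\mathbf{w}}_t - \mathbf{w}^{\star}\|^2$ converts the strong-convexity factor into the desired $(1-\mu\eta_t)\|\overline{\mathbf{w}}_t - \mathbf{w}^{\star}\|^2$, and the displacement terms assemble into the consensus term with coefficient $2$. The step I expect to be the main obstacle is the bookkeeping of the function-value gaps: the two $L$-smoothness contributions and the strong-convexity gap must be combined into a single expression in which the coefficient multiplying the aggregated optimality gap $\sum_j p_j(F_j(\mathbf{w}_{t,j}) - F^{*})$ is rendered nonpositive precisely by the hypothesis $\eta_t \le \frac{1}{4L}$, so that these nonnegative gaps may be discarded. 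What survives is then reorganized through the splitting $F_j(\mathbf{w}_{t,j}) - F_j^* = (F_j(\mathbf{w}_{t,j}) - F^{*}) + (F^{*} - F_j^*)$ together with the definition $\Gamma = F^{*} - \sum_{i\in\mathcal{M}}\sum_{j\in\mathcal{N}_i}\frac{1}{MN_i}F_j^*$, and, following the constant tracking in \cite[Lemma 1]{Li2020On}, collapses into the stated heterogeneity term $6L\eta_t^2\Gamma$. Reproducing the precise numerical constant through the two smoothness applications and the step-size condition is the only delicate point; everything else is a routine rearrangement.
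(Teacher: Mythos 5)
First, a point of reference: the paper does not actually prove this lemma anywhere — it is imported verbatim (modulo the two-level weights $p_j = \tfrac{1}{MN_i}$, $j \in \mathcal{N}_i$, which as you note sum to one) from the cited source \cite[Lemma 1]{Li2020On}. So your reconstruction has to be measured against that source's argument. Your setup follows it faithfully: collapsing to a single probability vector, inserting $\pm\eta_t\overline{\mathbf{g}}_t$ with the cross term killed by unbiasedness, the smoothness bound $\|\nabla F_j(\mathbf{w}_{t,j})\|^2 \le 2L\left(F_j(\mathbf{w}_{t,j}) - F_j^*\right)$, the split of the inner product through $\mathbf{w}_{t,j}$, strong convexity, and Jensen are exactly the right moves.

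The genuine gap is in your final bookkeeping step, and it is not just "constant tracking." After your steps, the function-value terms are $4L\eta_t^2\sum_j p_j\left(F_j(\mathbf{w}_{t,j}) - F_j^*\right) - 2\eta_t\sum_j p_j\left(F_j(\mathbf{w}_{t,j}) - F_j(\mathbf{w}^\star)\right)$. Your plan — regroup against $F^*$, note the coefficient of $\sum_j p_j\left(F_j(\mathbf{w}_{t,j}) - F^*\right)$ is nonpositive, and discard that sum as nonnegative — fails because this aggregated gap need \emph{not} be nonnegative: $F^*$ is the minimum of the global objective, while the clients are evaluated at different points. For instance, with two clients, $F_1(w) = (w-1)^2$, $F_2(w) = (w+1)^2$, $w_{t,1} = 1$, $w_{t,2} = -1$, one gets $\sum_j p_j F_j(w_{t,j}) = 0 < F^* = 1$; so precisely in the non-IID regime ($\Gamma > 0$) that the lemma targets, the term you want to drop can have the wrong sign. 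The nearest valid variant of your discard — using the genuinely nonnegative gaps $F_j(\mathbf{w}_{t,j}) - F_j^*$ — only yields the heterogeneity term $2\eta_t\Gamma$, which is first order in $\eta_t$, strictly weaker than the claimed $6L\eta_t^2\Gamma$, and would break the $O(1/t)$ rate needed downstream in Theorem \ref{theorem:convergence_bound}. The missing idea in \cite{Li2020On} is an extra convexity step that moves all clients to the \emph{common} point $\overline{\mathbf{w}}_t$ before anything is discarded: one shows $\sum_j p_j\left(F_j(\mathbf{w}_{t,j}) - F_j^*\right) \ge (1-\eta_t L)\sum_j p_j\left(F_j(\overline{\mathbf{w}}_t) - F_j^*\right) - \frac{1}{2\eta_t}\sum_j p_j\|\mathbf{w}_{t,j} - \overline{\mathbf{w}}_t\|^2$ (convexity, then Cauchy--Schwarz/AM--GM, then smoothness at $\overline{\mathbf{w}}_t$). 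With $\gamma_t = 2\eta_t(1-2L\eta_t)$, the quantity ultimately discarded is $-\gamma_t(1-\eta_t L)\left(F(\overline{\mathbf{w}}_t) - F^*\right) \le 0$, which is legitimate because the evaluation point is now common; this step also contributes the second copy of the consensus term (hence the coefficient $2$, not $1$) and turns the $\Gamma$ coefficient into $2\eta_t - \gamma_t(1-\eta_t L) = 6L\eta_t^2 - 4L^2\eta_t^3 \le 6L\eta_t^2$ — which is where the constant $6$ actually comes from, and why it is larger than the $4L\eta_t^2$ your (invalid) shortcut would suggest.
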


With Lemma \ref{lemma:one_step_SGD}, we have the following theorem to provide the convergence bound of FedMeld under full participation scheme.

\begin{theorem}[Convergence bound under full participation scheme]\label{theorem:convergence_bound}
	Choose the learning rate $ \eta_{t} = \frac{2}{\mu (\gamma + t)}  $ for $ \gamma  = \max \left\{ {\frac{{8L}}{\mu },E} \right\}-1 $,
	when $ \delta E \le \frac{{m\left( \alpha  \right){{\left( {t + \gamma  + 1} \right)}^2}}}{{{{\left( {t + \gamma } \right)}^2} + m\left( \alpha  \right)\left( {t + \gamma  + 1} \right)}} $, the convergence bound with full participation at the last time step $ R $ satisfies
	\begin{equation}
		\begin{split}
			 \mathbb{E}\left[ {F\left( {{{\overline {\mathbf{w}}}_R}} \right)} \right] &- F\left( {{{\mathbf{w}}^{ \star }}} \right)  \leq\left( {\frac{1}{{1 - \alpha }} - \alpha + \frac{1}{2}  } \right)
			\frac{{L}}{{ \mu \left( {R + \gamma  } \right)}} \\
			&\cdot \left[ {\frac{{2B}}{{{\mu}}} + \frac{{\mu \left( {\gamma  + 1} \right)}}{2}\mathbb{E}{{\left\| {{{\overline {\mathbf{w}}}_1} - {{\mathbf{w}}^{ \star }}} \right\|}^2}} \right] + L{Q_{R - 1}},
		\end{split}
	\end{equation}
	where $ m\left(\alpha\right) = {\frac{{4\left( {1 - \alpha } \right)\left( {2{\alpha ^2} - \alpha  + 1} \right)}}{{\alpha \left( {2{\alpha ^2} - 3\alpha  + 3} \right)}}} $, $ {Q_{R - 1}} = \mathbb{E}\sum\limits_{i \in {\mathcal{M}}} {\sum\limits_{j \in {{\mathcal{N}}_i}} {\frac{1}{{M{N_i}}}} } {\left\| {  {\mathbf{w}_{{R - 1},j}} - {{\overline {\mathbf{w}}}_{R - 1}}} \right\|^2} $ and $ B = \sum\limits_{i \in {\mathcal{M}}} {\sum\limits_{j \in {{\mathcal{N}}_i}} {\frac{{\sigma _j^2}}{{{{\left( {M{N_i}} \right)}^2}}}} }  + 6L\Gamma  $.
\end{theorem}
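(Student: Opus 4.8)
The plan is to follow the virtual-sequence technique of \cite{Li2020On}, adding the one structural observation needed to absorb mixing and staleness. First I would establish a global mixing identity. At a synchronization step $t \in \mathcal{I}_E$ at which every area mixes ($A_t = 1$ for all $i$), averaging the update rule \eqref{equ:update_rule_w} over all $M$ areas and exploiting the cyclic indexing ($i=0$ equals $i=M$), so that $\frac{1}{M}\sum_{i}\overline{\mathbf{w}}_{t-\delta E,i-1} = \overline{\mathbf{w}}_{t-\delta E}$, gives
\[
\overline{\mathbf{w}}_t = (1-\alpha)\,\overline{\mathbf{v}}_t + \alpha\,\overline{\mathbf{w}}_{t-\delta E}.
\]
Hence the global iterate at a mixing step is a convex combination of the fresh average $\overline{\mathbf{v}}_t$ and the stale global average $\overline{\mathbf{w}}_{t-\delta E}$, and convexity of $\|\cdot\|^2$ yields $\mathbb{E}\|\overline{\mathbf{w}}_t - \mathbf{w}^{\star}\|^2 \le (1-\alpha)\mathbb{E}\|\overline{\mathbf{v}}_t - \mathbf{w}^{\star}\|^2 + \alpha\,\mathbb{E}\|\overline{\mathbf{w}}_{t-\delta E} - \mathbf{w}^{\star}\|^2$. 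This is the crucial step that converts the per-area stale mixing into a relation among global quantities.

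Next I would control the error terms on the right-hand side of Lemma~\ref{lemma:one_step_SGD}. Unbiasedness and cross-device independence of the stochastic gradients (Assumption~\ref{ass:unbiased_and_variance}) bound the variance term by $\mathbb{E}\|\mathbf{g}_t - \overline{\mathbf{g}}_t\|^2 \le \sum_{i}\sum_{j}\frac{\sigma_j^2}{(MN_i)^2}$; the heterogeneity term is already $6L\eta_t^2\Gamma$; and the client-drift term $Q_t$ is of order $\eta_t^2 E^2 G^2$ via the bounded-gradient assumption and the fact that local models diverge over at most $E$ local steps. Collecting the variance and heterogeneity contributions produces the constant $B = \sum_i\sum_j \frac{\sigma_j^2}{(MN_i)^2} + 6L\Gamma$, while the drift is tracked separately and surfaces as the residual $L Q_{R-1}$ in the final bound.

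Writing $\Delta_t = \mathbb{E}\|\overline{\mathbf{w}}_t - \mathbf{w}^{\star}\|^2$ and inserting the one-step descent of Lemma~\ref{lemma:one_step_SGD} into the mixing identity produces, at each mixing step, the delay difference inequality
\[
\Delta_t \le (1-\alpha)(1-\mu\eta_{t-1})\Delta_{t-1} + (1-\alpha)\eta_{t-1}^2 B + \alpha\,\Delta_{t-\delta E},
\]
whereas ordinary steps obey the clean recursion $\Delta_{t+1} \le (1-\mu\eta_t)\Delta_t + \eta_t^2 B$. I would then prove by induction that $\Delta_t \le \frac{v}{\gamma+t}$ for a suitable $v$ of the form $C(\alpha)\,\max\{\tfrac{25B}{4\mu^2},(\gamma+1)\Delta_1\}$, mirroring the diminishing-stepsize argument of \cite{Li2020On} with the stepsize $\eta_t = \frac{5}{\mu(\gamma+t)}$; this is exactly what generates the bracketed quantity $\frac{25B}{8\mu} + \frac{\mu(\gamma+1)}{2}\mathbb{E}\|\overline{\mathbf{w}}_1-\mathbf{w}^{\star}\|^2$. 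Finally, $L$-smoothness (Assumption~\ref{ass:smooth}) with $\nabla F(\mathbf{w}^{\star})=0$ converts the iterate bound into $\mathbb{E}[F(\overline{\mathbf{w}}_R)] - F(\mathbf{w}^{\star}) \le \frac{L}{2}\Delta_R + L Q_{R-1}$, and substituting $\Delta_R \le \frac{v}{\gamma+R}$ delivers the stated inequality.

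The hard part is closing the induction in the presence of the delayed term $\alpha\Delta_{t-\delta E}$. Unlike the staleness-free FedAvg recursion, the ansatz $\Delta_t \le \frac{v}{\gamma+t}$ must now absorb $\alpha\,\frac{v}{\gamma+t-\delta E}$, and the inductive step closes only when the gap $\frac{1}{\gamma+t-\delta E} - \frac{1}{\gamma+t}$ is sufficiently small. Resolving the resulting polynomial inequality in $t+\gamma$ is precisely what yields the admissibility threshold $\delta E \le \frac{m(\alpha)(t+\gamma+1)^2}{(t+\gamma)^2 + m(\alpha)(t+\gamma+1)}$, while the geometric accumulation of successive mixings (the $\sum_k \alpha^k = \frac{1}{1-\alpha}$ effect) produces the amplification factor $\frac{1}{1-\alpha} - \alpha + \frac{1}{2}$ inside $C(\alpha)$. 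A useful consistency check is the limit $\alpha \to 1$: there $m(\alpha)\to 0$, forcing $\delta E \to 0$, and the factor $\frac{1}{1-\alpha}$ diverges, correctly reflecting that replacing the model entirely by a stale copy destroys convergence.
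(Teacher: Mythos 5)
Your proposal follows essentially the same route as the paper's proof: the same one-step recursion from Lemma~\ref{lemma:one_step_SGD} with the variance bound producing $B$, the same split into non-mixing and mixing steps, the same delayed recursion $\Delta_{t+1} \le (1-\alpha)\left[(1-\mu\eta_t)\Delta_t + \eta_t^2 B\right] + \alpha\Delta_{t+1-\delta E}$ at mixing steps, the same induction with the amplified ansatz $\left(\frac{1}{1-\alpha}-\alpha+\frac{1}{2}\right)\frac{v}{t+\gamma}$ closed under the stated $\delta E$ condition, and the same $L$-smoothness conversion leaving the residual $LQ_{R-1}$. The only cosmetic difference is that you obtain the mixing-step inequality by applying convexity of $\|\cdot\|^2$ directly to $\overline{\mathbf{w}}_{t} = (1-\alpha)\overline{\mathbf{v}}_{t} + \alpha\overline{\mathbf{w}}_{t-\delta E}$, whereas the paper equivalently inverts this relation and lower-bounds $\mathbb{E}\left\|\overline{\mathbf{v}}_{t}-\mathbf{w}^{\star}\right\|^2$ via Cauchy--Schwarz and Young's inequality; the two derivations produce the identical recursion.
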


\begin{proof}
	Please see Appendix \ref{proof:theorem_convergence_bound}.
\end{proof}

The constraint $ \delta E \le \frac{{m\left( \alpha  \right){{\left( {t + \gamma  + 1} \right)}^2}}}{{{{\left( {t + \gamma } \right)}^2} + m\left( \alpha  \right)\left( {t + \gamma  + 1} \right)}} $ in Theorem \ref{theorem:convergence_bound} imposes an upper bound on the round interval $\delta$, which quantifies the staleness of models exchanged between adjacent regions. Practically, this ensures that the model carried by the SCF satellite is not excessively outdated when mixed with the latest model in the next region. If $\delta$ exceeds this limit, the stale models will lead to performance degradation and hinder convergence.

To analyze the evolution of model divergence across clients and clusters, we define $ Q_t =  \mathbb{E}\sum\limits_{i \in {\mathcal{M}}} {\sum\limits_{j \in {\mathcal{N}_i}} {\frac{1}{{M{N_i}}}} } {\left\| {  {\mathbf{w}_{t,j}} - {{\overline {\mathbf{w}}}_t}} \right\|^2}$,
which measures the expected variance of local models with respect to the global mean at step $t$. This quantity serves as a key indicator in the subsequent analysis.
With this definition, we next establish an upper bound for $Q_t$, which forms the basis for the overall convergence proof.

\begin{lemma}\label{lemma:relationship_Qt}
	For any $ t \geq 0 $ satisfying $ t \in \mathcal{I}_E $, $ Q_t$ has the following upper bound,
	\begin{equation}\label{equ:relationship_Qt}
		\begin{split}
			{Q_t}&  \leq \left( {1 + b} \right){\left( {1 - \alpha {A_t}} \right)^2}{Q_{t-E}} + \left( {1 + b} \right){\left(\alpha {A_t}\right) ^2}{Q_{t - \delta E }} \\
			& + 4\left( {1 + \frac{1}{b}} \right){\left( {1 - \alpha {A_t}} \right)^2}{\left(E-1\right)^2}{G^2}\eta _t^2,
		\end{split}
	\end{equation}
	For $ t \notin \mathcal{I}_E $, we can always find a $ \tilde{t} $ such that  $ \tilde{t} = \left\lfloor {\frac{t}{E}} \right\rfloor E $. Then, $ Q_t $ has the following upper bound,
	\begin{equation}\label{equ:Q_t_noagg}
		Q_t \leq \left(1+b\right) Q_{\tilde{t}} +  4\left( {1 + \frac{1}{b}} \right){\left(E-1\right)^2}{G^2}\eta _t^2.
	\end{equation}
	Here, $b$ is a positive constant, and its value will be discussed in detail in the subsequent paragraphs.
\end{lemma}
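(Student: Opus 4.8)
The plan is to treat the two regimes of the update rule~\eqref{equ:update_rule_w} separately, exploiting that at a synchronization step every device in a region carries the aggregated model, so $\mathbf{w}_{t,j} = \overline{\mathbf{w}}_{t,i}$ for $j \in \mathcal{N}_i$ and $Q_t$ collapses to the inter-region dispersion $\frac{1}{M}\sum_{i \in \mathcal{M}}\mathbb{E}\|\overline{\mathbf{w}}_{t,i} - \overline{\mathbf{w}}_t\|^2$. First, for $t \in \mathcal{I}_E$ I would substitute the mixing rule into both $\overline{\mathbf{w}}_{t,i}$ and its region-average $\overline{\mathbf{w}}_t$; using the circular-orbit identification of region indices over $\mathcal{M}$ (so that $\frac{1}{M}\sum_i \overline{\mathbf{w}}_{t-\delta E, i-1} = \overline{\mathbf{w}}_{t-\delta E}$) yields the centered identity
\begin{equation*}
\overline{\mathbf{w}}_{t,i} - \overline{\mathbf{w}}_t = (1-\alpha A_t)(\overline{\mathbf{v}}_{t,i} - \overline{\mathbf{v}}_t) + \alpha A_t(\overline{\mathbf{w}}_{t-\delta E, i-1} - \overline{\mathbf{w}}_{t-\delta E}).
\end{equation*}
I would then unroll $\overline{\mathbf{v}}_{t,i}$ back to the previous synchronization step $t-E$, writing $\overline{\mathbf{v}}_{t,i} - \overline{\mathbf{v}}_t = (\overline{\mathbf{w}}_{t-E,i} - \overline{\mathbf{w}}_{t-E}) - \sum_s \eta_s(\overline{\mathbf{g}}_{s,i} - \overline{\mathbf{g}}_s)$, where $\overline{\mathbf{g}}_{s,i}$ is the region-averaged stochastic gradient and $\overline{\mathbf{g}}_s$ its global average. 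This isolates a consensus contribution (which will feed $Q_{t-E}$ and $Q_{t-\delta E}$) from a stochastic-gradient drift contribution.

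Next I would regroup the identity as $C_i - \tilde{D}_i$, collecting the two consensus terms into $C_i = (1-\alpha A_t)(\overline{\mathbf{w}}_{t-E,i} - \overline{\mathbf{w}}_{t-E}) + \alpha A_t(\overline{\mathbf{w}}_{t-\delta E, i-1} - \overline{\mathbf{w}}_{t-\delta E})$ and the drift into $\tilde{D}_i = (1-\alpha A_t)\sum_s \eta_s(\overline{\mathbf{g}}_{s,i} - \overline{\mathbf{g}}_s)$, then apply the Young-type inequality $\|C_i - \tilde{D}_i\|^2 \le (1+b)\|C_i\|^2 + (1+\tfrac{1}{b})\|\tilde{D}_i\|^2$ with the free parameter $b>0$ that is calibrated later. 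Averaging $\|C_i\|^2$ over $\mathcal{M}$ and relabeling the shifted index in the historical term converts the two squared consensus contributions into $(1+b)(1-\alpha A_t)^2 Q_{t-E}$ and $(1+b)(\alpha A_t)^2 Q_{t-\delta E}$, matching~\eqref{equ:relationship_Qt}. For the drift, I would bound $\frac{1}{M}\sum_i \mathbb{E}\|\tilde{D}_i\|^2$ by Jensen/Cauchy--Schwarz over the at most $E-1$ local steps elapsed since the last synchronization, combined with the slowly varying learning rate ($\eta_s \le 2\eta_t$ on that window, which follows from $\eta_t = 5/[\mu(\gamma+t)]$) and the bounded-gradient assumption ($\frac{1}{M}\sum_i \mathbb{E}\|\overline{\mathbf{g}}_{s,i} - \overline{\mathbf{g}}_s\|^2 \le G^2$), giving the $4(1+\tfrac{1}{b})(1-\alpha A_t)^2 (E-1)^2 G^2 \eta_t^2$ term.

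The case $t \notin \mathcal{I}_E$ is the mixing-free specialization: here $\mathbf{w}_{t,j} = \mathbf{v}_{t,j}$, so there is a single consensus anchor, and repeating the same centered decomposition and Young-type split at the device level from the preceding synchronization step $\tilde{t} = \lfloor t/E \rfloor E$ delivers~\eqref{equ:Q_t_noagg} directly, with the mixing term absent and $A_t$ playing no role.

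I expect the main obstacle to be the cross term $2(1-\alpha A_t)(\alpha A_t)\frac{1}{M}\sum_i \mathbb{E}\langle \overline{\mathbf{w}}_{t-E,i} - \overline{\mathbf{w}}_{t-E},\, \overline{\mathbf{w}}_{t-\delta E, i-1} - \overline{\mathbf{w}}_{t-\delta E}\rangle$ produced when expanding $\|C_i\|^2$: it couples the current regional consensus error at time $t-E$ with the historical one at time $t-\delta E$ (and at the shifted region index), and it is not sign-definite, so it cannot simply be discarded to leave the clean squared coefficients. Controlling it --- either by dominating it through Cauchy--Schwarz and absorbing the residual into the calibrated parameter $b$, or by arguing it is non-positive on average under the dispersal dynamics --- is the delicate step that lets the stated squared-coefficient form survive; the remaining manipulations are routine expansion together with the standard FedAvg divergence estimate.
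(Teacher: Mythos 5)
Your proposal retraces the paper's own proof almost step for step. The paper likewise rewrites $\mathbf{w}_{t,j}-\overline{\mathbf{w}}_t$ (using the same circular index shift $\frac{1}{M}\sum_{i\in\mathcal{M}}\overline{\mathbf{w}}_{t-\delta E,i-1}=\overline{\mathbf{w}}_{t-\delta E}$) as a consensus part plus a gradient-drift part accumulated since the last synchronization step $\tilde t$, applies the same Young-type split $\|X_1+X_2\|^2\le(1+b)\|X_1\|^2+\left(1+\frac{1}{b}\right)\|X_2\|^2$ to separate them, and bounds the drift term by $4(E-1)^2G^2\eta_t^2$ using the bounded-gradient assumption, the inequality $\mathbb{E}\|X-\mathbb{E}X\|^2\le\mathbb{E}\|X\|^2$, and the slowly varying learning rate; the case $t\notin\mathcal{I}_E$ is handled identically with the mixing term absent. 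The only real difference is cosmetic: you collapse $Q_t$ to the inter-region dispersion (legitimate at synchronization steps, since by \eqref{equ:update_rule_w} all $j\in\mathcal{N}_i$ share one model there), whereas the paper keeps the device-level sum.

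The cross term you single out as the main obstacle is indeed the crux, and you should know that the paper does not resolve it either. After the Young split, the paper expands $\|(1-\alpha A_t)(\mathbf{w}_{\tilde t,j}-\overline{\mathbf{w}}_{\tilde t})+\alpha A_t(\overline{\mathbf{w}}_{t-\delta E,i-1}-\overline{\mathbf{w}}_{t-\delta E})\|^2$ and asserts that the squared-coefficient bound follows from ``the convexity of $\|\cdot\|^2$.'' But convexity yields the \emph{linear} coefficients $(1-\alpha A_t)$ and $\alpha A_t$, not $(1-\alpha A_t)^2$ and $(\alpha A_t)^2$; the squared form in \eqref{equ:relationship_Qt} requires the inner-product cross term to be non-positive, which is never established and, as you correctly observe, is not sign-definite. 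The distinction is not pedantic: Theorem \ref{theorem:Q_T_UpperBound} needs $\kappa_1=(1+b)^{K+\delta}(1-\alpha)^2+(1+b)\alpha^2<1$, and with linear coefficients one would instead get $(1+b)^{K+\delta}(1-\alpha)+(1+b)\alpha>1$, destroying the geometric-series bound. Of your two suggested remedies, the non-positivity argument has no apparent support under the dispersal dynamics; the workable route is to apply Young's inequality directly to the two-vector sum, $\|u+v\|^2\le(1+c)\|u\|^2+\left(1+\frac{1}{c}\right)\|v\|^2$ with $u=(1-\alpha A_t)(\mathbf{w}_{\tilde t,j}-\overline{\mathbf{w}}_{\tilde t})$ and $v=\alpha A_t(\overline{\mathbf{w}}_{t-\delta E,i-1}-\overline{\mathbf{w}}_{t-\delta E})$, which preserves the squared coefficients at the price of inflation factors $(1+c)$ and $\left(1+\frac{1}{c}\right)$ that must then be threaded through $\kappa_1$, $\kappa_2$, and the calibration of $b$ following Theorem \ref{theorem:Q_T_UpperBound}. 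In short, your plan is as complete as the paper's own argument, and your diagnosis pinpoints exactly the step where that argument is unjustified.
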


\begin{proof}
	Please see Appendix \ref{proof:lemma_relationship_Qt}.
\end{proof}

Based on the recursive relationship in Lemma \ref{lemma:relationship_Qt}, we can now derive an upper bound of $Q_{R-1}$, as summarized in Theorem \ref{theorem:Q_T_UpperBound}.

\begin{theorem}\label{theorem:Q_T_UpperBound}
	Assume that $ {R-1} $ is the final step of a particular mix cycle.
	The upper bound of $ Q_{R-1} $ satisfies
	\begin{equation}
		{Q_{R-1}} \leq   \eta _E^2\frac{{{\kappa _2}}}{{1 - {\kappa _1}}},
	\end{equation}
	where $ \kappa_1 = {\left( {1 + b} \right)^{ K + \delta }  }{\left( {1 - \alpha } \right)^2} + \left( {1 + b} \right){\alpha ^2} $ and 
	$ {\kappa _2} = 4\left( {1 + \frac{1}{b}} \right){\left( {E - 1} \right)^2}{G^2}\left[ {\frac{{{\kappa _1}}}{b} + {{\left( {1 - \alpha } \right)}^2}} \right] $. Here, $ \kappa_1 < 1 $.
\end{theorem}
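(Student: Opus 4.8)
The plan is to read the bound of Lemma~\ref{lemma:relationship_Qt} as a linear recursion for $Q_t$ along the synchronization steps and to unroll it over one complete mix cycle of $K+\delta$ global rounds, thereby obtaining a one-cycle contraction to which a geometric-series argument applies. A preliminary simplification makes this tractable: since $\eta_t=\frac{5}{\mu(\gamma+t)}$ is nonincreasing, $\eta_t\le\eta_E$ at every synchronization step $t\ge E$, so I may replace each $\eta_t^2$ in \eqref{equ:relationship_Qt} by $\eta_E^2$ and factor it out, turning every additive term into a fixed multiple of $D\eta_E^2$ with $D=4(1+\frac1b)(E-1)^2G^2$. I also use the initialization $Q_0=0$, which holds because all devices start from the common $\mathbf{w}_0$.

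The core step is the cycle unrolling, organized by the indicator $A_t$. Along the $K+\delta-1$ rounds with $A_t=0$, \eqref{equ:relationship_Qt} collapses to $Q_t\le(1+b)Q_{t-E}+D\eta_E^2$; composing these back to the end of the previous cycle yields the amplification $(1+b)^{K+\delta-1}$ together with the geometric noise sum $D\eta_E^2\sum_{l=0}^{K+\delta-2}(1+b)^l$. At the single round with $A_t=1$, \eqref{equ:relationship_Qt} multiplies this \emph{fresh} path by $(1+b)(1-\alpha)^2$, giving the coefficient $(1+b)^{K+\delta}(1-\alpha)^2$, and it attaches the factor $(1+b)\alpha^2$ to the \emph{stale} term $Q_{t-\delta E}$, plus the direct noise $D(1-\alpha)^2\eta_E^2$. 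The key structural observation is that, under store-carry-forward, the carried neighbour model is not updated in flight, so $Q_{t-\delta E}$ is controlled by the end-of-previous-cycle value and contributes only the single factor $(1+b)\alpha^2$ rather than any further power of $(1+b)$. Collecting the fresh and stale contributions gives the one-cycle recursion $Q_{\mathrm{end}}\le\kappa_1 Q_{\mathrm{prev}}+\eta_E^2\,D(1-\alpha)^2\frac{(1+b)^{K+\delta}-1}{b}$ with $\kappa_1=(1+b)^{K+\delta}(1-\alpha)^2+(1+b)\alpha^2$; bounding the accumulated noise by $\kappa_2\eta_E^2$ then uses the elementary slack $(1+b)[\alpha^2+(1-\alpha)^2]>0$, which is exactly what collapses the geometric sum into $\kappa_2=D[\frac{\kappa_1}{b}+(1-\alpha)^2]$.

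It then remains to close the recursion. I would first check $\kappa_1<1$: as $b\to0^+$, $\kappa_1\to(1-\alpha)^2+\alpha^2=1-2\alpha(1-\alpha)<1$ for $\alpha\in(0,1)$, so a positive $b$ enforcing contraction always exists (this fixes the admissible value of $b$ left open in Lemma~\ref{lemma:relationship_Qt}). Since $R-1$ is by hypothesis the final step of a mix cycle, iterating $Q_{\mathrm{end}}\le\kappa_1 Q_{\mathrm{prev}}+\kappa_2\eta_E^2$ from $Q_0=0$ and summing the geometric series gives $Q_{R-1}\le\eta_E^2\kappa_2\sum_{n\ge0}\kappa_1^n=\frac{\eta_E^2\kappa_2}{1-\kappa_1}$, as claimed. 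The main obstacle I anticipate is precisely the cycle-level bookkeeping: justifying that the stale term $Q_{t-\delta E}$ is dominated by the previous cycle-end value with only the factor $(1+b)\alpha^2$ (rather than an extra $(1+b)^K$ from the rounds separating them), and verifying that the amplified noise reassembles into exactly $\frac{\kappa_1}{b}+(1-\alpha)^2$. Matching these stated constants, rather than merely a bound of the same form with looser constants, is the delicate part.
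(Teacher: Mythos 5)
Your overall strategy is the same as the paper's: read Lemma~\ref{lemma:relationship_Qt} as a linear recursion along synchronization steps, unroll it over one mix cycle of $K+\delta$ global rounds to obtain a cycle-level contraction $Q_{\mathrm{end}}\le\kappa_1 Q_{\mathrm{prev}}+\kappa_2\eta_E^2$, then iterate from $Q_0=0$ and sum a geometric series. That skeleton, including bounding every $\eta_t^2$ by $\eta_E^2$ and absorbing the accumulated per-round noise into $\kappa_2$ by discarding a positive slack term, is exactly what the paper's Appendix C does.

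However, your ``key structural observation'' --- that the stale term $Q_{t-\delta E}$ is controlled by the end-of-previous-cycle value and therefore contributes only the single factor $(1+b)\alpha^2$ to $\kappa_1$ --- is a genuine gap, and it is precisely the step you flagged as delicate. What is frozen in flight is the \emph{carried model} $\overline{\mathbf{w}}_{t-\delta E,\,i-1}$; the quantity appearing in Lemma~\ref{lemma:relationship_Qt} is the \emph{global dispersion} $Q_{t-\delta E}$, evaluated at the pickup step $t-\delta E = t_{\mathrm{prev}}+KE$, i.e.\ $K$ global rounds \emph{after} the previous cycle end $t_{\mathrm{prev}}$. During those $K$ rounds the devices in every area keep training, so $Q_{t-\delta E}$ generally exceeds $Q_{t_{\mathrm{prev}}}$, and the only available control is to unroll it through the $K$ intervening non-mixing steps, giving $Q_{t-\delta E}\le (1+b)^K Q_{t_{\mathrm{prev}}}+\text{noise}$. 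This makes the stale contribution $(1+b)\alpha^2\cdot(1+b)^K=(1+b)^{K+1}\alpha^2$, which is what the paper's proof actually derives ($\kappa_1=C_1 C_4^{K+\delta-1}+C_2C_4^{K}$, i.e.\ $(1+b)^{K+\delta}(1-\alpha)^2+(1+b)^{K+1}\alpha^2$) and what the remark discussing the choice of $b$ uses: with $b=\rho^{1/(K+1)}-1$ it recasts $\kappa_1$ as $\rho^{(K+\delta)/(K+1)}(1-\alpha)^2+\rho\,\alpha^2$, whose stale coefficient is $\rho\,\alpha^2=(1+b)^{K+1}\alpha^2$, not $(1+b)^{1/(K+1)\cdot(K+1)}\alpha^2/\rho$-style anything smaller. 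The coefficient $(1+b)\alpha^2$ printed in the theorem statement (which your argument reproduces) is inconsistent with the paper's own proof and remark and appears to be a typo; in any case your justification for it does not go through, and no rearrangement of the bookkeeping avoids the $(1+b)^K$ amplification --- tracking dispersions at pickup times instead of at cycle ends yields exactly the same contraction factor.
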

\begin{proof}
	Please see Appendix \ref{proof:theorem_Q_T_UpperBound}.
\end{proof}

Having established the convergence bound under full participation, we extend our analysis to the more practical case of partial client participation.
\begin{lemma}\label{lemma:unbiased_sampling}
	(Bounding average variance under unbiased sampling scheme \cite[Lemma 1]{10082940}) If $t $ is a global round and satellites select clients randomly without replacement, then $\mathbb{E}_{\mathcal{U}_t} \overline{\mathbf{z}}_t = \overline{\mathbf{w}}_t$, where $\mathcal{U}_t = \bigcup_{i \in \mathcal{M}} \mathcal{U}_{t,i}$ represents the union of selected clients. Due to partial participation, we have
	\begin{equation}
		{\mathbb{E}_{{{\mathcal{U}}_t}}}{\left\| {{{\overline {\mathbf{z}}}_t} - {{\overline {\mathbf{w}}}_t}} \right\|^2} \leq \frac{1}{{{M^2}}}\sum\limits_{i \in {\mathcal{M}}} {{\frac{{{N_i} - {U_i}}}{{{U_i}{N_i}\left( {{N_i} - 1} \right)}}}} \sum\limits_{j \in {{\mathcal{N}}_i}} {{{\left\| {{\mathbf{w}_{t,j}} - {{\overline {\mathbf{w}}}_t}} \right\|}^2}}.
	\end{equation}
\end{lemma}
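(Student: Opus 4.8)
The plan is to prove the two assertions in turn, exploiting two structural facts: the client subsets $\mathcal{U}_{t,i}$ are drawn independently across areas $i$, and within each area the $U_i$ clients are sampled uniformly without replacement from $\mathcal{N}_i$. Writing $\mathbf{z}_{t,i}=\frac{1}{U_i}\sum_{j\in\mathcal{U}_{t,i}}\mathbf{w}_{t,j}$ for the per-area sample mean and recalling $\overline{\mathbf{w}}_{t,i}=\frac{1}{N_i}\sum_{j\in\mathcal{N}_i}\mathbf{w}_{t,j}$, both the bias and the variance decouple area-by-area, so the whole argument reduces to a single-area finite-population sampling computation followed by a relaxation step.

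For the unbiasedness claim I would introduce the selection indicator $\mathbf{1}_{\{j\in\mathcal{U}_{t,i}\}}$, which under uniform sampling without replacement satisfies $\mathbb{E}[\mathbf{1}_{\{j\in\mathcal{U}_{t,i}\}}]=U_i/N_i$ for every $j\in\mathcal{N}_i$. Hence $\mathbb{E}_{\mathcal{U}_{t,i}}\mathbf{z}_{t,i}=\frac{1}{U_i}\sum_{j\in\mathcal{N}_i}\frac{U_i}{N_i}\mathbf{w}_{t,j}=\overline{\mathbf{w}}_{t,i}$, and averaging over the $M$ areas gives $\mathbb{E}_{\mathcal{U}_t}\overline{\mathbf{z}}_t=\frac{1}{M}\sum_{i\in\mathcal{M}}\overline{\mathbf{w}}_{t,i}=\overline{\mathbf{w}}_t$. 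Next, for the variance I write $\overline{\mathbf{z}}_t-\overline{\mathbf{w}}_t=\frac{1}{M}\sum_{i\in\mathcal{M}}(\mathbf{z}_{t,i}-\overline{\mathbf{w}}_{t,i})$ and expand the squared norm; because the per-area summands are independent and each has mean zero by the previous step, all cross terms with $i\neq i'$ vanish in expectation, leaving $\mathbb{E}_{\mathcal{U}_t}\|\overline{\mathbf{z}}_t-\overline{\mathbf{w}}_t\|^2=\frac{1}{M^2}\sum_{i\in\mathcal{M}}\mathbb{E}_{\mathcal{U}_{t,i}}\|\mathbf{z}_{t,i}-\overline{\mathbf{w}}_{t,i}\|^2$.

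I expect the main obstacle to be evaluating the per-area term, since sampling without replacement induces negatively correlated indicators. I would write $\mathbf{z}_{t,i}-\overline{\mathbf{w}}_{t,i}=\frac{1}{U_i}\sum_{j\in\mathcal{N}_i}\left(\mathbf{1}_{\{j\in\mathcal{U}_{t,i}\}}-U_i/N_i\right)\mathbf{w}_{t,j}$ and expand into the variance and covariance contributions of the indicators, using $\mathrm{Var}(\mathbf{1}_{\{j\}})=\frac{U_i(N_i-U_i)}{N_i^2}$ and $\mathrm{Cov}(\mathbf{1}_{\{j\}},\mathbf{1}_{\{k\}})=-\frac{U_i(N_i-U_i)}{N_i^2(N_i-1)}$ for $j\neq k$. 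Substituting these moments, converting the off-diagonal inner products via $\sum_{j\neq k}\langle\mathbf{w}_{t,j},\mathbf{w}_{t,k}\rangle=N_i^2\|\overline{\mathbf{w}}_{t,i}\|^2-\sum_{j}\|\mathbf{w}_{t,j}\|^2$, and applying the centering identity $\sum_{j\in\mathcal{N}_i}\|\mathbf{w}_{t,j}\|^2-N_i\|\overline{\mathbf{w}}_{t,i}\|^2=\sum_{j\in\mathcal{N}_i}\|\mathbf{w}_{t,j}-\overline{\mathbf{w}}_{t,i}\|^2$, the algebra collapses to the exact finite-population variance $\mathbb{E}_{\mathcal{U}_{t,i}}\|\mathbf{z}_{t,i}-\overline{\mathbf{w}}_{t,i}\|^2=\frac{N_i-U_i}{U_iN_i(N_i-1)}\sum_{j\in\mathcal{N}_i}\|\mathbf{w}_{t,j}-\overline{\mathbf{w}}_{t,i}\|^2$. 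This bookkeeping of the negatively-correlated indicator moments is the delicate part; the finite-population correction $(N_i-U_i)/(N_i-1)$ arises precisely from the covariance term, and it is exactly the result invoked as \cite[Lemma 1]{10082940} should a self-contained derivation be deemed unnecessary.

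Finally, to match the stated right-hand side, which features the global mean $\overline{\mathbf{w}}_t$ rather than the area mean $\overline{\mathbf{w}}_{t,i}$, I would use that the sample mean minimizes the sum of squared deviations, so that $\sum_{j\in\mathcal{N}_i}\|\mathbf{w}_{t,j}-\overline{\mathbf{w}}_{t,i}\|^2\le\sum_{j\in\mathcal{N}_i}\|\mathbf{w}_{t,j}-\mathbf{c}\|^2$ for any fixed $\mathbf{c}$, in particular for $\mathbf{c}=\overline{\mathbf{w}}_t$. Replacing $\overline{\mathbf{w}}_{t,i}$ by $\overline{\mathbf{w}}_t$ in each area term converts the per-area equality into an inequality and, summed over $\mathcal{M}$ with the $1/M^2$ prefactor, yields the claimed bound, completing the proof.
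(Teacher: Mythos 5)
Your proposal is correct, but there is essentially nothing in the paper to compare it against: the paper does not prove this lemma, it imports the result wholesale by citing \cite[Lemma 1]{10082940}, and no derivation appears in the appendices. What you have produced is therefore a self-contained replacement for that citation, and it is sound: the indicator moments $\mathrm{Var}(\mathbf{1}_{\{j\}})=\frac{U_i(N_i-U_i)}{N_i^2}$ and $\mathrm{Cov}(\mathbf{1}_{\{j\}},\mathbf{1}_{\{k\}})=-\frac{U_i(N_i-U_i)}{N_i^2(N_i-1)}$ are the right ones, the cross-area terms vanish by independence and per-area unbiasedness, and the algebra does collapse to the finite-population variance $\frac{N_i-U_i}{U_iN_i(N_i-1)}\sum_{j\in\mathcal{N}_i}\|\mathbf{w}_{t,j}-\overline{\mathbf{w}}_{t,i}\|^2$. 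Two points in your write-up deserve emphasis. First, your final relaxation step is not cosmetic: the lemma as stated measures deviations from the global mean $\overline{\mathbf{w}}_t$ rather than the area means $\overline{\mathbf{w}}_{t,i}$, so the stated bound is genuinely an inequality, and your appeal to the variational characterization of the mean, $\sum_{j\in\mathcal{N}_i}\|\mathbf{w}_{t,j}-\overline{\mathbf{w}}_{t,i}\|^2\le\sum_{j\in\mathcal{N}_i}\|\mathbf{w}_{t,j}-\overline{\mathbf{w}}_t\|^2$, is exactly what bridges the gap; a derivation that stopped at the per-area equality would not match the statement. Second, your argument makes explicit an assumption the paper leaves implicit, namely that the selections $\mathcal{U}_{t,i}$ are independent across areas; without it the cross terms in your decomposition need not vanish. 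The trade-off between the two routes is the usual one: the paper's citation buys brevity, while your derivation buys verifiability and makes visible where the finite-population correction factor $(N_i-U_i)/(N_i-1)$ actually comes from.
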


For notational brevity,
we denote constants $ {\zeta _1}, {\zeta _2}, {\zeta _3} $ as follows,
\[\left\{ \begin{array}{l}
	{\zeta _1} = \frac{L}{{\mu \left( {R + \gamma } \right)}}\left[ {\frac{{2B}}{{{ \mu}}} + \frac{{\mu \left( {\gamma  + 1} \right)}}{2}\mathbb{E}{{\left\| {{{\overline {\mathbf{w}}}_1} - {{\mathbf{w}}^{ \star }}} \right\|}^2}} \right],\\
	{\zeta _2} = L\left[ {\frac{{1 + b}}{2}\mathop {\max }\limits_{i \in \mathcal{M}} \left\{ {\frac{{{N_i} - {U_i}}}{{M{U_i}\left( {{N_i} - 1} \right)}}} \right\} + 1} \right],\\
	{\zeta _3} = 2L\left( {1 + \frac{1}{b}} \right){\left( {E - 1} \right)^2}{G^2}\eta _R^2\mathop {\max }\limits_{i \in \mathcal{M}} \left\{ {\frac{{{N_i} - {U_i}}}{{M{U_i}\left( {{N_i} - 1} \right)}}} \right\}.
\end{array} \right.\]

Using the above Lemma \ref{lemma:unbiased_sampling}, the convergence bound under the partial participation scheme can be derived as follows.

\begin{theorem}[Convergence bound under partial participation]\label{theorem:convergence_bound_partial}
	With an unbiased sampling scheme, the convergence bound with partial client participation at step $ R $ can be expressed as 
	\begin{align}\label{equ:upperbound_partial}
		\resizebox{1\hsize}{!}{$	 \mathbb{E}\left[ {F\left( {{{\overline {\mathbf{z}}}_R}} \right)} \right] - F\left( {{{\mathbf{w}}^{ \star }}} \right)  \leq  {\zeta _1}\left( {\frac{1}{{1 - \alpha }} - \alpha + \frac{1}{2} } \right) + {\zeta _2}\frac{{{\kappa _2}}}{{1 - {\kappa _1}}} + {\zeta _3}. $}
	\end{align}
\end{theorem}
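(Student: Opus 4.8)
The plan is to reduce the partial-participation bound to the full-participation result of Theorem~\ref{theorem:convergence_bound} plus a controllable sampling-variance correction, exploiting that $\overline{\mathbf{z}}_R$ is an \emph{unbiased} estimate of $\overline{\mathbf{w}}_R$. First I would invoke the $L$-smoothness of $F$ (inherited from Assumption~\ref{ass:smooth}) at the final iterate to write
\begin{equation*}
F(\overline{\mathbf{z}}_R) \le F(\overline{\mathbf{w}}_R) + \left\langle \nabla F(\overline{\mathbf{w}}_R),\, \overline{\mathbf{z}}_R - \overline{\mathbf{w}}_R\right\rangle + \tfrac{L}{2}\left\| \overline{\mathbf{z}}_R - \overline{\mathbf{w}}_R\right\|^2 .
\end{equation*}
Taking the expectation over the client sampling $\mathcal{U}_R$ conditioned on the pre-sampling state (which fixes $\overline{\mathbf{w}}_R$ and all $\mathbf{w}_{R,j}$), the linear term vanishes because $\mathbb{E}_{\mathcal{U}_R}\overline{\mathbf{z}}_R = \overline{\mathbf{w}}_R$ by Lemma~\ref{lemma:unbiased_sampling}. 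Hence the partial-participation error splits cleanly as
\begin{equation*}
\mathbb{E}\left[F(\overline{\mathbf{z}}_R)\right] - F(\mathbf{w}^{\star}) \le \left(\mathbb{E}\left[F(\overline{\mathbf{w}}_R)\right] - F(\mathbf{w}^{\star})\right) + \tfrac{L}{2}\,\mathbb{E}\left\| \overline{\mathbf{z}}_R - \overline{\mathbf{w}}_R\right\|^2 .
\end{equation*}

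Next I would treat the two pieces separately. The first is exactly the full-participation quantity, so applying Theorem~\ref{theorem:convergence_bound} bounds it by $\left(\tfrac{1}{1-\alpha}-\alpha+\tfrac12\right)\zeta_1 + L\,Q_{R-1}$. For the second, I would apply the variance bound of Lemma~\ref{lemma:unbiased_sampling} and upper-bound the area-weighted sum by pulling out the worst-case coefficient, namely $\mathbb{E}\|\overline{\mathbf{z}}_R - \overline{\mathbf{w}}_R\|^2 \le \max_{i\in\mathcal{M}}\left\{\tfrac{N_i-U_i}{MU_i(N_i-1)}\right\} Q_R$, which recasts the sampling variance in terms of the dispersion quantity $Q_R$.

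Since Theorem~\ref{theorem:Q_T_UpperBound} controls $Q_{R-1}$ rather than $Q_R$, the third step is to propagate the bound across the final (non-mixing) step. Because $R-1$ closes a mix cycle, it is a synchronization step and $R\notin\mathcal{I}_E$, so the second case of Lemma~\ref{lemma:relationship_Qt} applies with $\tilde t = R-1$, giving $Q_R \le (1+b)Q_{R-1} + 4\left(1+\tfrac1b\right)(E-1)^2 G^2 \eta_R^2$. Substituting this into the variance term and collecting all multiples of $Q_{R-1}$ reproduces the constant $\zeta_2 = L\left[\tfrac{1+b}{2}\max_i\{\cdots\}+1\right]$, while the residual learning-rate term matches $\zeta_3$. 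A final application of Theorem~\ref{theorem:Q_T_UpperBound} to replace $Q_{R-1}$ by $\tfrac{\kappa_2}{1-\kappa_1}$ (up to the $\eta_E^2$ factor) then yields the claimed bound $\zeta_1\left(\tfrac{1}{1-\alpha}-\alpha+\tfrac12\right) + \zeta_2\tfrac{\kappa_2}{1-\kappa_1} + \zeta_3$.

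The main obstacle I anticipate is the conditioning argument in the first step: one must be careful that the sampling randomness at round $R$ is independent of everything determining $\overline{\mathbf{w}}_R$, so that the cross term genuinely vanishes in expectation, and that the per-round unbiasedness of Lemma~\ref{lemma:unbiased_sampling} composes correctly with the staleness-and-mixing structure of the earlier rounds already folded into $Q_{R-1}$. The remaining work—aggregating the heterogeneous area coefficients into a single maximum and matching the bookkeeping constants $\zeta_2$ and $\zeta_3$—is routine but must be tracked carefully.
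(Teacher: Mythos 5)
Your proposal is correct and takes essentially the same route as the paper: the same bias--variance split enabled by the unbiasedness in Lemma~\ref{lemma:unbiased_sampling}, the same use of the non-synchronization case of Lemma~\ref{lemma:relationship_Qt} (with $\tilde t = R-1$) to pass from $Q_R$ to $Q_{R-1}$, and the same final substitution of Theorem~\ref{theorem:Q_T_UpperBound}, producing exactly the constants $\zeta_1,\zeta_2,\zeta_3$. The only cosmetic difference is where smoothness is applied: you expand $F$ around $\overline{\mathbf{w}}_R$ and kill the linear term by unbiasedness before invoking Theorem~\ref{theorem:convergence_bound} as a black box, whereas the paper applies smoothness at $\mathbf{w}^{\star}$ and splits $\mathbb{E}\left\|\overline{\mathbf{z}}_R-\mathbf{w}^{\star}\right\|^2$ directly into the sampling variance plus $\Delta_R$; the two orderings yield the identical bound.
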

\begin{proof}
	Please see Appendix \ref{proof:convergence_bound_partial}.
\end{proof}

Building upon the convergence result established in Theorem \ref{theorem:convergence_bound_partial}, we next provide several remarks to interpret its implications and practical insights.

\begin{remark}[Impact of partial participation]
	The partial participation scheme will bring an additional term reflected by $ \mathop {\max }\limits_{i \in \mathcal{M}} {\frac{{{N_i} - {U_i}}}{{M{U_i}\left( {{N_i} - 1} \right)}}} $. When $ U_i $ is larger, the negative effect caused by this term will be mitigated. If all clients join the training process, this term will be eliminated.
\end{remark}

\begin{remark}[Discussion of $ b $ in Theorem \ref{theorem:Q_T_UpperBound}]
	It is clear that $ \kappa _1 $ has the following upper bound, i.e., $ {\kappa _1} \leq {\left( {1 + b} \right)^{K + \delta }}{c^2} \cdot \max \left\{ {{\alpha ^2},{{\left( {1 - \alpha } \right)}^2}} \right\} $, where $ c \geq 1$. 
	Define an auxiliary variable $ \rho $ satisfying $ 1 < \rho  \leq \min \left\{ {\frac{1}{\alpha },\frac{1}{{1 - \alpha }}} \right\} $.
	When choosing $ b = {\rho ^{\frac{1}{{K + 1}}}} - 1 $ and $ c = \frac{1}{{\rho  \cdot \max \left\{ {\alpha ,1 - \alpha } \right\}}} \geq 1 $ for $ \delta  < K + 2 $, $ \kappa_1 < 1$ can be guaranteed. 
	Then, $ \kappa_1 $ and $ \kappa_2 $ can be recast as 
	\begin{equation}
	\resizebox{1\hsize}{!}{$		\left\{
		\begin{aligned}
			\kappa_1 &= \rho^{\frac{K+\delta}{K+1}} (1-\alpha)^2 + \rho\alpha^2, \\
			\kappa_2 &= 4{\left( {E - 1} \right)^2}{G^2}\frac{{{\rho ^{\frac{1}{{K + 1}}}}}}{{{\rho ^{\frac{1}{{K + 1}}}} - 1}}  \left[ {\frac{{{\rho ^{\frac{{K + \delta }}{{K + 1}}}}{{\left( {1 - \alpha } \right)}^2} + \rho {\alpha ^2}}}{{{\rho ^{\frac{1}{{K + 1}}}} - 1}} + {{\left( {1 - \alpha } \right)}^2}} \right].
		\end{aligned}
		\right. $}
	\end{equation}
	For simplicity, $ \rho $ can be set as $ \frac{3}{2} $. 
\end{remark}

\begin{remark}[Impact of $\delta$ and $\alpha$ on model accuracy]
Given the communication scenario and initial training setting in SGINs, $E$, $G$, and $K$ (since the training rounds of each serving satellite are stable in the long term) are fixed. Therefore, the upper bound of the training loss in (\ref{equ:upperbound_partial}) can be adjusted by designing parameters $\delta$ and $\alpha$. We can find that the right side of (\ref{equ:upperbound_partial}) is a monotonically increasing function of $\delta$, indicating the tradeoff between latency and accuracy. However, the coupling relationship between $\delta$ and $\alpha$ not only exists in the upper bound of the training loss but also in the constraint presented in (\ref{theorem:convergence_bound}). Their complex dependency makes it challenging to derive their joint optimal solutions, motivating us to formulate an optimization problem and design an algorithm in the next section.
\end{remark}

\section{FedMeld Optimization}\label{sec:optimization}
In this section, the results from the preceding convergence analysis are applied to the FedMeld optimization.
First, we formulate a joint optimization problem of SC-MR to minimize the upper bound of the training loss under FedMeld within a fixed span of training time. Then, we demonstrate that the problem can be decomposed into two consecutive subproblems, which can be optimally solved using the algorithm we proposed.

\subsection{Problem Formulation}
To ensure convergence performance, we formulate an optimization problem to find the optimal global round interval of model mixing $\delta$ and model mixing ratio $\alpha$, which is cast as follows:
\begin{subequations}
	\begin{equation}\label{P0_obj:min_bound}
		\mathcal{P}1:	\mathop {\min }\limits_{\delta ,\alpha } \; f\left(\delta, \alpha\right) = {\zeta _1}\left( {\frac{1}{{1 - \alpha }} - \alpha + \frac{1}{2} } \right) + {\zeta _2}\frac{{{\kappa _2}}}{{1 - {\kappa _1}}} + {\zeta _3}
	\end{equation}
	\begin{equation}\label{P0_cst:fly_time}
		\begin{split}
			{\rm{s.t.}} \quad T_{i,i + 1}^{{\rm{idle}}} +& \sum\limits_{k' = k + 1}^{k + \delta } {T_{k',i + 1}^{{\rm{total}}}}  = T_{i,i + 1}^{{\rm{fly}}},\\
			& \forall k: k-K \bmod (K+\delta)=0, i \in {{\cal M}},
		\end{split}
	\end{equation}
	\begin{equation}\label{P0_cst:max_mix_time}
		\frac{{R - 1}}{{\left( {K + \delta } \right)E}} \cdot \mathop {\max }\limits_{i \in \mathcal{M}} \left\{ {T_{i,i + 1}^{{\rm{fly}}}} \right\} \le {T_{\max }},
	\end{equation}
	\begin{equation}\label{P0_cst:upper_deltaE}
		\delta E \le \frac{{m\left( \alpha  \right){{\left( {t + \gamma  + 1} \right)}^2}}}{{{{\left( {t + \gamma } \right)}^2} + m\left( \alpha  \right)\left( {t + \gamma  + 1} \right)}}, \; \forall t \in {\mathcal{R}},
	\end{equation}
\end{subequations}
where the objective (\ref{P0_obj:min_bound}) is to optimize the convergence bound. Constraint (\ref{P0_cst:fly_time}) ensures that the sum of idle duration and additional training time equals the satellite flight time between two adjacent areas. 
During the stage from when the SCF satellite leaves the visible range of area $ i $ until completing model aggregation for area $\left( i+1 \right)$, we define two parameters: $ T_{i,i + 1}^{{\rm{fly}}} $ and $T_{i,i+1}^{{\rm{idle}}}$. Here, $ T_{i,i + 1}^{{\rm{fly}}} $ is the total duration of this process, which can be determined using publicly available Two-Line Element sets (TLEs), and $T_{i,i+1}^{{\rm{idle}}}$ is the idle duration of clients in the region $(i+1)$, i.e., they are neither training with SCF satellites nor non-SCF satellites. 
Then, we have $ T_{i,i + 1}^{{\rm{idle}}} + \sum\limits_{k' = k+1}^{k + \delta } {T_{k',i + 1}^{{\rm{total}}}}  = T_{i,i + 1}^{{\rm{fly}}} $ for any $ k-K \bmod (K+\delta)=0 $.
Constraint (\ref{P0_cst:max_mix_time}) ensures that the total training time does not exceed the maximum constraint. Constraint (\ref{P0_cst:upper_deltaE}) limits the maximum round gap of the mixed models, which has been discussed in Theorem \ref{theorem:convergence_bound}. 

\subsection{Equivalent Problem Decomposition}
The joint SC-MR problem can be decoupled into two sequential subproblems.
The first subproblem SC is to minimize the model staleness with relevant constraints from $ \mathcal{P}1 $, which can be expressed as
\begin{subequations}
	\begin{equation*}
		({\mathcal{P}}2:{\text{SC}}) \quad	\min \; {\delta}
	\end{equation*}
	\begin{equation*}
		\mathrm{s.t.} \;
		{\text{(\ref{P0_cst:fly_time}) \&  (\ref{P0_cst:max_mix_time})}}.
	\end{equation*}
\end{subequations}
Given the optimal model staleness indicator $ \delta^* $ from solving $ \mathcal{P}2 $, the second subproblem MR can be simplified from $ \mathcal{P}1 $ as 

\begin{subequations}
	\begin{equation*}
		({\mathcal{P}}3:{\text{MR}})	\quad	\min \limits_{\alpha } \; f\left( {{\delta ^*},\alpha } \right)
	\end{equation*}
	\begin{equation*}
		\mathrm{s.t.} \; \delta^* E \le \frac{{m\left( \alpha  \right){{\left( {t + \gamma  + 1} \right)}^2}}}{{{{\left( {t + \gamma } \right)}^2} + m\left( \alpha  \right)\left( {t + \gamma  + 1} \right)}}, \; \forall t \in {\mathcal{R}}.
	\end{equation*}
\end{subequations}
The optimality of the aforementioned decomposition is demonstrated in the following lemma.

\begin{lemma}\label{lemma:subsequent}
	Solving $ \mathcal{P}1 $ is equivalent to first solving $ \mathcal{P}2 $ and then solving $ \mathcal{P}3 $.
\end{lemma}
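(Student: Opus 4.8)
The plan is to exploit a clean separation in how $\delta$ and $\alpha$ enter the objective and constraints of $\mathcal{P}1$. First I would note that in the objective \eqref{P0_obj:min_bound} the term $\zeta_1\left(\frac{1}{1-\alpha}-\alpha+\frac{1}{2}\right)$ and the constant $\zeta_3$ carry no dependence on $\delta$, so the only $\delta$-dependent part of $f$ is $\zeta_2\frac{\kappa_2}{1-\kappa_1}$. Using the recast expressions for $\kappa_1,\kappa_2$ in the preceding remark, $\delta$ enters both only through the factor $\rho^{(K+\delta)/(K+1)}$ with $\rho>1$, which is strictly increasing in $\delta$; since $\kappa_1<1$ is guaranteed by the choice of $b$, the map $\delta\mapsto\frac{\kappa_2}{1-\kappa_1}$ is strictly increasing. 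Hence $f(\delta,\alpha)$ is strictly increasing in $\delta$ for every fixed $\alpha$, which is the monotonicity already recorded after \eqref{equ:upperbound_partial}.

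Next I would classify the constraints. Constraints \eqref{P0_cst:fly_time} and \eqref{P0_cst:max_mix_time} involve $\delta$ but not $\alpha$; in particular \eqref{P0_cst:max_mix_time} forces $(K+\delta)E\ge\frac{(R-1)\max_{i}T_{i,i+1}^{\mathrm{fly}}}{T_{\max}}$, i.e.\ a lower bound on $\delta$, and this lower bound is precisely what $\mathcal{P}2$ drives $\delta$ down to, yielding $\delta^*$. The coupling constraint \eqref{P0_cst:upper_deltaE}, by contrast, reads $\delta E\le \frac{m(\alpha)(t+\gamma+1)^2}{(t+\gamma)^2+m(\alpha)(t+\gamma+1)}$ for all $t$, which for each fixed $\alpha$ is an upper bound on $\delta$. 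The crucial structural point is therefore that \eqref{P0_cst:upper_deltaE} can only restrict $\delta$ from above, so decreasing $\delta$ while holding $\alpha$ fixed never violates it.

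With these two facts I would run a reduction argument. Let $(\delta',\alpha')$ be any feasible point of $\mathcal{P}1$; then $\delta'\ge\delta^*$, since $(\delta',\alpha')$ satisfies the $\alpha$-independent constraints \eqref{P0_cst:fly_time}--\eqref{P0_cst:max_mix_time} whose feasible set $\mathcal{P}2$ minimizes $\delta$ over. Replacing $\delta'$ by $\delta^*$ while keeping $\alpha'$ fixed keeps \eqref{P0_cst:max_mix_time} satisfied (its lower bound still holds), keeps \eqref{P0_cst:fly_time} satisfiable (a smaller $\delta$ only enlarges the nonnegative idle slack $T_{i,i+1}^{\mathrm{idle}}=T_{i,i+1}^{\mathrm{fly}}-\sum_{k'}T_{k',i+1}^{\mathrm{total}}$), and keeps \eqref{P0_cst:upper_deltaE} satisfied (an upper bound that held at $\delta'\ge\delta^*$ still holds at $\delta^*$). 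Thus $(\delta^*,\alpha')$ is feasible for $\mathcal{P}1$, so the joint problem is feasible at $\delta=\delta^*$ whenever it is feasible at all, and by the strict monotonicity of $f$ in $\delta$ we get $f(\delta^*,\alpha')\le f(\delta',\alpha')$, so no feasible point with $\delta>\delta^*$ can be optimal.

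Consequently every optimizer of $\mathcal{P}1$ has $\delta=\delta^*$, and the residual minimization over $\alpha$ subject to \eqref{P0_cst:upper_deltaE} evaluated at $\delta=\delta^*$ is exactly $\mathcal{P}3$; this yields the claimed equivalence. The main obstacle I anticipate is the first step: verifying rigorously that $\frac{\kappa_2}{1-\kappa_1}$ is monotone in the integer variable $\delta$ and that $\kappa_1<1$ holds throughout the feasible range. Once monotonicity and the upper-bound direction of \eqref{P0_cst:upper_deltaE} are pinned down, the reduction argument is routine.
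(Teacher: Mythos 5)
Your proposal is correct and follows essentially the same route as the paper: both arguments rest on the monotonicity of $f(\delta,\alpha)$ in $\delta$ together with the observation that constraint \eqref{P0_cst:upper_deltaE} only bounds $\delta$ from above (so reducing $\delta$ to the $\alpha$-independent minimum $\delta^*$ from \eqref{P0_cst:fly_time}--\eqref{P0_cst:max_mix_time} can only enlarge the feasible set for $\alpha$), which is exactly what the paper's proof asserts. Your version merely makes the paper's informal "larger range of values for $\alpha$" claim rigorous via an explicit exchange argument replacing any feasible $(\delta',\alpha')$ by $(\delta^*,\alpha')$, which is a welcome tightening but not a different approach.
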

\begin{proof}
	The feasibility of $ \mathcal{P}1 $ indicates the right side of (\ref{P0_cst:upper_deltaE}) is large enough.
	It is clear that the objective function $ f\left(\delta, \alpha\right) $ of $ \mathcal{P}1 $ is monotonically increasing with respect to $ \delta $. 
	Simultaneously, the right side of (\ref{P0_cst:upper_deltaE}) is a monotonically decreasing function with respect to $ \alpha $. 
	Thus, solving $ \mathcal{P}2 $, which can obtain the minimum value of $ \delta $ (i.e., $ \delta^* $) under its constraints, leads to the largest feasible range for $ \alpha $, which must contain the optimal solution of $ \alpha $ (i.e., $\alpha^*$). Obviously, plugging $\delta^*$ into $ \mathcal{P}3 $ and then solving $ \mathcal{P}3 $ leads to the global optimal solution.
\end{proof}

\subsection{Optimal Solutions of Joint SC-MC Design}
\subsubsection{Optimal Solution of $ \delta $}
Since only Constraint (\ref{P0_cst:max_mix_time}) limits the lower bound of $ \delta $, the optimal $ \delta $ denoted by $ \delta^{\ast} $ is given by
\begin{align}\label{equ:optimal_delta}
	\delta  \geq  \delta^{\ast}  = \frac{{R - 1}}{{E{T_{\max }}}} \cdot \mathop {\max }\limits_i \left\{ {T_{i,i + 1}^{{\rm{fly}}}} \right\} - K.
\end{align}
With (\ref{equ:optimal_delta}), we have the following observation.
\begin{remark}[Discussion of the optimal model staleness indicator]
 When the training latency requirements become more stringent, or the satellite distribution becomes denser, $\delta^{\ast}$ increases. This indicates that the difference in the global round interval of model mixing between adjacent regions becomes larger.
\end{remark}

\subsubsection{Optimal Solution of $ \alpha $}
First, we discuss the feasible set of $ \alpha $. 
By taking the derivative, we can find that if the monotonic decreasing function $ m\left(\alpha\right) $ satisfies $ m\left(\alpha\right) \geq 2 $, i.e., $ \alpha \leq \frac{1}{2} $, the right side of (\ref{P0_cst:upper_deltaE}) is a non-decreasing function of $ t $ . 
Therefore, as long as $ {\delta ^*}E \le \frac{{m\left( \alpha  \right){{\left[ {\left( {K + {\delta ^*}} \right)E + \gamma } \right]}^2}}}{{{{\left[ {\left( {K + {\delta ^*}} \right)E + \gamma  - 1} \right]}^2} + m\left( \alpha  \right)\left[ {\left( {K + {\delta ^*}} \right)E + \gamma } \right]}} $ holds, 
which is equivalent to 
\begin{equation}\label{cst:alpha_delta*}
	m\left( \alpha  \right) \ge \frac{{{\delta ^*}E{{\left[ {\left( {K + {\delta ^*}} \right)E + \gamma  - 1} \right]}^2}}}{{\left[ {\left( {K + {\delta ^*}} \right)E + \gamma } \right]\left( {KE + \gamma } \right)}},
\end{equation}
then (\ref{P0_cst:upper_deltaE}) can be always satisfied.
Let $ \tilde{\alpha}$ be the value of $ \alpha $ which makes (\ref{cst:alpha_delta*}) an equality, i.e., $m\left(\tilde{\alpha} \right) = \frac{{{\delta ^*}E{{\left[ {\left( {K + {\delta ^*}} \right)E + \gamma  - 1} \right]}^2}}}{{\left[ {\left( {K + {\delta ^*}} \right)E + \gamma } \right]\left( {KE + \gamma } \right)}}$. 
Thus, the feasible set of $ \alpha $ is $ \alpha \in \left[0, \min \left\{\tilde{\alpha}, \frac{1}{2} \right\}  \right] $.

Then, we analyze the derivative of $ f\left( {{\delta ^*},\alpha } \right) $ and obtain the optimal solution of $ \alpha $.
By taking the derivative, we have
\begin{equation}\label{equ:derivative_f}
	f^{'}\left(\alpha\right)=	\frac{{\partial f\left( {{\delta ^*},\alpha } \right)}}{{\partial \alpha }} = {\zeta _1}\left[ {\frac{1}{{{{\left( {1 - \alpha } \right)}^2}}} - 1} \right] + {\zeta _2}\frac{{{g_1}\left( \alpha  \right)}}{\left[{{g_2}\left( \alpha  \right)}\right]^2},
\end{equation}
where ${g_1}\left( \alpha  \right) =  - \rho {D_2}{\alpha ^2} + \left[ {{\left( {{\rho ^{\frac{{K + \delta^* }}{{K + 1}}}} + \rho } \right){D_1}} + \left( {\rho  + 1} \right){D_2}} \right]\alpha - \left( {{\rho ^{\frac{{K + \delta^* }}{{K + 1}}}}{D_1} + {D_2}} \right)$
and ${g_2}\left( \alpha  \right) = { {\left( {{\rho ^{\frac{{K + \delta^* }}{{K + 1}}}} + \rho } \right){\alpha ^2} - 2{\rho ^{\frac{{K + \delta^* }}{{K + 1}}}}\alpha  + {\rho ^{\frac{{K + \delta^* }}{{K + 1}}}} - 1} }$.
Here, we use $ {D_1} = 4{\left( {E - 1} \right)^2}{G^2}\frac{{{\rho ^{\frac{1}{{K + 1}}}}}}{{{{\left( {{\rho ^{\frac{1}{{K + 1}}}} - 1} \right)}^2}}} $ and $ {D_2} = 4{\left( {E - 1} \right)^2}{G^2}\frac{{{\rho ^{\frac{1}{{K + 1}}}}}}{{{\rho ^{\frac{1}{{K + 1}}}} - 1}} $ for notation simplicity.
Note that $ g_1\left(\alpha\right) $ is monotonically increasing when $ \alpha $ is no more than $ \frac{{\left( {{\rho ^{\frac{{K + \delta^* }}{{K + 1}}}} + \rho } \right){D_1} + \left( {\rho  + 1} \right){D_2}}}{{2\rho {D_2}}}$, which is less than $ \frac{1}{2} $. 
Specifically, $ {g_1}\left( 0 \right) < 0 $, $ {g_1}\left( {\frac{1}{2}} \right) < 0  $ and $ {g_1}\left( 1 \right) > 0 $ always hold.
For $ g_2\left(\alpha\right) $, its discriminant is negative, denoting that $ g_2\left(\alpha\right) $ is always positive.
In addition, $ g_2\left(\alpha\right) $ is monotonically decreasing when $ \alpha \leq \frac{{{\rho ^{\frac{{K + \delta^* }}{{K + 1}}}}}}{{{\rho ^{\frac{{K + \delta^* }}{{K + 1}}}} + \rho }} $, where $ \frac{{{\rho ^{\frac{{K + \delta^* }}{{K + 1}}}}}}{{{\rho ^{\frac{{K + \delta^* }}{{K + 1}}}} + \rho }}  $ falls in the range of $ \left[ {\frac{1}{2}, 1} \right) $. The characteristics of $ g_1\left(\alpha\right) $ and $ g_2\left(\alpha\right) $ show that $ \frac{{{g_1}\left( \alpha  \right)}}{\left[{{g_2}\left( \alpha  \right)}\right]^2} $ is monotonically increasing when $ \alpha \in \left[0,\frac{1}{2}\right] $. Simultaneously, $ {\frac{1}{{{{\left( {1 - \alpha } \right)}^2}}} - 1} $ is a monotonically increasing function of $ \alpha $. Thus, we can say that derivative function $f^{'}\left(\alpha\right) $ is non-decreasing with respect to $ \alpha \leq \frac{1}{2}$.
Combining with the constraint (\ref{cst:alpha_delta*}), the optimal solution $ \alpha^{*} $ can be determined by
\begin{equation}
	{\alpha ^ * } = \min \left\{ {\dot \alpha ,\tilde \alpha } \right\},
\end{equation}
where
\begin{equation}
	{\dot \alpha } = \left\{ \begin{array}{l}
		\alpha_0 :{f^{'}}\left( \alpha_0  \right) = 0, \quad {\rm{if}} \; {f^{'}}\left( {\frac{1}{2}} \right) > 0, \\
		\frac{1}{2}, \quad  {\rm{if}} \; {f^{'}}\left( {\frac{1}{2}} \right) \leq 0.
	\end{array} \right.
\end{equation}
Here, $ \alpha_0 $ can be obtained by bisection method since the $f^{'}\left(\alpha\right) $ is monotonic and thus there is only one zero root if $ {f^{'}}\left( {\frac{1}{2}} \right) > 0 $.

\begin{remark}[Discussion of the optimal mixing ratio]\label{remark:optimal_mixing_ratio}
	The large non-IID degree of data $ \Gamma $ leads to bigger $ \zeta_1 $. When $ \zeta_1 $  is large enough, $ \alpha_0 $ will be smaller, making $ \alpha^* < \frac{1}{2} $ possible. This implies that when the non-IID degree of data is higher, the mixing ratio of historical models from adjacent areas should be less.
   Intuitively, when regional data distributions are highly heterogeneous, the historical model parameters from another region are not only stale but also biased toward a divergent data domain. Assigning a large weight to such a model may distort the optimization trajectory of the global model, resulting in slower or unstable convergence. Therefore, a smaller $\alpha$ helps reduce the influence of stale or biased historical models, thereby enhancing the stability of convergence.
\end{remark}

\section{Experimental Results}\label{sec:experiment}
In this section, we conduct experiments to evaluate the proposed FedMeld in terms of model accuracy and time consumption in comparison with other baselines.

\subsection{Experiment Settings}
In the simulations, we examine the SGINs consisting of an LEO constellation and $N=40$ ground devices. 
The constellation follows the Starlink system, consisting of 24 orbital planes with 66 satellites in each orbit, while the ground devices are distributed across $M=8$ clusters.

Similar to previous works \cite{10436088,9982621,10121575,wei2025pipelining}, 
we conduct image classification tasks using the CIFAR-10 \cite{krizhevsky2009learning} and MNIST \cite{726791} datasets, employing the ResNet-18 architecture \cite{7780459}. 
Given that image classification is a key task in remote applications like remote sensing and healthcare, these datasets provide a relevant and practical basis for validating our proposed FedMeld. The CIFAR-10 dataset comprises 50,000 training samples and 10,000 testing samples across 10 categories, while MNIST contains 60,000 training samples and 10,000 testing samples with labels ranging from 0 to 9.
Each client possesses an equal number of images, with distinct images allocated to each device. 
The computational capacity of each client is $h_j = 15.11$ TFLOPS. In each global round, 80\% of clients from each cluster are randomly selected to participate in the training process. Furthermore, we set $E = 5$ \cite{9759241} and $|\varsigma| = 64$ \cite{10415235}. The initial learning rate is determined using a grid search method \cite{10082940}. The maximum tolerable delay, $T_{\max}$, is 32 hours for CIFAR-10 and 20 hours for MNIST. Additional key wireless parameters are provided in Table \ref{tab:parameter_wireless} \cite{9978924}.

\begin{table}[t]
	\centering
	\caption{Main parameter setting}\label{tab:parameter_wireless}
	\begin{tabular}{|c|c|}	\hline 
		Parameter & Value \\ \hline
		Uplink and downlink frequency, $ \lambda_{k,j} $  and $ \lambda_{k,j} $ & 14 GHz, 12 GHz \\ \hline
		Transmit power, $ P_{\rm{UE}} $ and $ P_{\rm{SAT}} $ & 1 W and 5 W \\ \hline
		Additional loss, $ L^{\rm{AL}} $ & 5 dB \\ \hline
		Noise power spectral density $ n_0 $ & 1.38 $ \times $ 10$ ^{-21} $ \\ \hline
		Radius of satellite antenna &  0.48 m \\ \hline
		Radius of ground terminal antenna & 0.5 m \\ \hline
		Bandwidth of each active client & 5 MHz \\ \hline
	\end{tabular}
\end{table}

\begin{table*}[h]
	\centering
	\caption{Comparison of typical federated learning framework with $ M $ clusters and $ U $ participating clients in one global round}\label{tab:compare_FL}
	\begin{tabular}{|m{3.6cm}<{\centering}|m{2.5cm}<{\centering}|m{3.5cm}<{\centering}|m{4cm}<{\centering}|c|}	\hline 
		Learning Framework	& HFL \cite{10121575}  & PFL \cite{10082940} & Ring Allreduce \cite{9982621} & \textbf{Ours} \\ \hline
		Learning type & Centralized FL & \multicolumn{3}{|c|}{Decentralized FL} \\ \hline
		Extra infrastructure & Ground station & \multicolumn{2}{|c|}{ISL} & None \\ \hline
		Communication traffic (bits)& $ 2\left( {U + M} \right)q $ & $ 2\left( {U + 2M} \right)q $ & $ 2\left[ {U + \left( {M - 1} \right)} \right]q $ & $ 2Uq $ \\ \hline
		Number of established links& $ 2\left( {U{l_{\rm{U2S}}} + M{l_{\rm{S2G}}}} \right) $ & $ \min: 2\left( {U{l_{\rm{U2S}}} + 2M{l_{\rm{ISL}}}} \right) $ $ \max: 2\left( {U{l_{\rm{U2S}}} + 4M{l_{\rm{S2G}}}} \right) $ & $ \min: 2\left[ {U{l_{\rm{U2S}}} + M\left( {M - 1} \right){l_{\rm{ISL}}}} \right] $ $ \max: 2\left[ {U{l_{\rm{U2S}}} + 2M\left( {M - 1} \right){l_{\rm{S2G}}}} \right] $ & $ 2U{l_{\rm{U2S}}}$ \\ \hline
		
	\end{tabular}
\end{table*}

We compare the proposed FedMeld with the following baseline algorithms.

\begin{itemize}
	\item \textbf{Hierarchical FL (HFL)}  \cite{10121575}. 
	In each global round, each serving satellite activates a subset of clients within its coverage area to perform $E$ local iterations before aggregating the models from these clients. After $K+\delta$ global rounds, the satellites transmit the aggregated models to a ground station for global aggregation. Upon re-establishing contact with the ground station, satellites receive the updated global model. In our experiments, two ground stations are used for global aggregation, with parameter sharing facilitated through ground optical fibers.

	\item \textbf{Parallel FL (PFL)} \cite{10082940}. After satellites average the models from the devices within their coverage, they periodically exchange their parameters with neighboring satellites via ISLs, only once during each exchange period. In our experiments, the exchange period is set to $K+\delta$ global rounds.
	
	\item \textbf{Ring Allreduce} \cite{9982621}. 
	Initially, the aggregated model of each satellite is divided into multiple chunks. Each satellite then sends a chunk to the next satellite while simultaneously receiving a chunk from the previous satellite. This iterative process continues until all data chunks are fully aggregated across all serving satellites, resulting in a synchronized model at each satellite. The parameter exchange period is also set to $K+\delta$ global rounds. However, this scheme relies on tightly costly ISLs, which are not fully applied in many existing commercial SGINs.
\end{itemize}

\subsection{Estimation of Hyperparameters and Impact of Estimation Errors}
\subsubsection{Estimation of Hyperparameters}
First, we discuss the hyperparameters $L$ and $\mu$.
Define the function $\phi_j\left ( \mathbf{w},\mathbf{w}' \right ) = \frac{2\left [ F_j\left ( \mathbf{w}  \right ) -F_j\left ( \mathbf{w}'\right ) - \left ( \mathbf{w} -\mathbf{w'} \right ) ^T \triangledown F_j\left ( \mathbf{w}'\right ) \right ] }{\left \|  \mathbf{w} -\mathbf{w'} \right \|^2 }$. 
In principle, the smoothness and strong-convexity constants of each client objective $F_j$ are obtained as follows: $L_j = \sup_{\mathbf{w}, \mathbf{w'}} \phi_j\left ( \mathbf{w},\mathbf{w}' \right )$ and $\mu_j = \inf_{\mathbf{w}, \mathbf{w'}} \phi_j\left ( \mathbf{w},\mathbf{w}' \right )$.
However, directly computing these values is infeasible for deep neural networks due to the extremely high dimensionality and the prohibitive cost of evaluating Hessian eigenvalues.
Following the lightweight empirical approach in~\cite{wei2025optimizing}, we approximate these quantities using local finite-difference tests and cross-client comparisons.

First, on each device $j \in \mathcal{N}$, we perform a finite-difference test on a minibatch of size $| \varsigma_j |$.
Given the model parameter $\mathbf{w}_j$, we apply a small perturbation $\varepsilon$ and compute $\phi_j \left(\mathbf{w}_j+\varepsilon, \mathbf{w}_j \right)$.
Repeating this test for multiple perturbations $\{\varepsilon^{(m)}\}$ yields a set of empirical values $\{\phi_j^{(m)}\}$, where each $\phi_j^{(m)}=\phi_j \left(\mathbf{w}_j+\varepsilon^{(m)}, \mathbf{w}_j \right)$ corresponds to the perturbation $\varepsilon^{(m)}$. The local smoothness estimation is obtained via a minibatch-size weighted average, which is given by $L_{\rm{local}}=\frac{\sum_{j\in \mathcal{N}}  \left ( \max_m \phi_j^{(m)} \right )| \varsigma_j | }{ \sum_{j\in \mathcal{N}} | \varsigma_j |}$. Similarly, the local $\mu$-strongly convex constant is given by $\mu_{\rm{local}} = \frac{\sum_{j\in \mathcal{N}}  \left ( \min_m \phi_j^{(m)} \right )| \varsigma_j | }{ \sum_{j\in \mathcal{N}} | \varsigma_j |}$.

Second, to capture the variance induced by model discrepancy across devices, we further compare client models $\left(\mathbf w_j,\mathbf w_p \right)$ with $j\neq p$ evaluated on the same minibatch and compute $\phi_j\left ( \mathbf{w}_j,\mathbf{w}_p \right )$. This yields the cross-client upper and lower curvature bounds:
$L_{\rm{cross}} = \max_{j \in \mathcal{N}, j \neq p} \phi_j\left ( \mathbf{w}_j,\mathbf{w}_p \right )$ and $ \mu_{\rm{cross}} = \min_{j \in \mathcal{N}, j \neq p} \phi_j\left ( \mathbf{w}_j,\mathbf{w}_p \right )$.
Finally, the global curvature estimates are taken conservatively as $\hat{L} = \max \left \{ L_{\rm{local}}, L_{\rm{cross}} \right \} $ and $\hat{\mu}  = \min \left \{ \mu_{\rm{local}}, \mu_{\rm{cross}} \right \}  $.

Then, we provide the estimation of stochastic gradient variance and magnitude. Similar to \cite{10082940}, we conduct estimation in the first global round.
Each device records the stochastic gradients $\nabla F_j\left({\mathbf{w}}_j ; \varsigma_j\right)$ evaluated on its local minibatch. The empirical gradient variance and magnitude bound are given by $\hat{\sigma}_j^2 = {\rm{Var}}(\nabla F_j\left({\mathbf{w}}_j ; \varsigma_j\right))$ and $\hat{G}= \max_{j \in \mathcal{N}} \left \{ {{{\left\| \nabla {F_j}\left( {{\mathbf{w}}_j;{\varsigma _j}} \right) \right\|}^2}} \right \} $.

\subsubsection{Impact of Estimation Errors}
The empirical quantities $\hat L$, $\hat{\mu}$, $\hat{\sigma}_j^2$, and $\hat{G}$ obtained above are only approximations of the practical values.
Here we analyze how such estimation inaccuracies influence the optimization dynamics and the resulting learning performance.

First, we discuss the relationship between the empirical estimates and the true constants. Since the empirical values $\hat{L}$ and $\hat{\mu}$ are obtained from only a few samples around certain local model points, they may underestimate the global smoothness constant and overestimate the global strong-convexity constant. As a result, it typically holds that $\hat{L} \leq L$ and $\hat{\mu} \geq \mu$.
In addition, the estimation errors in $\hat{\sigma}_j^2$ and $\hat G$ do not exhibit a consistent directional bias, and thus do not systematically enlarge or shrink the optimization region.

Then, we discuss the influence on the optimization variables and learning performance. From (\ref{equ:optimal_delta}), we observe that $\delta^{\ast}$ is independent of all hyperparameters considered in this estimation procedure. Therefore, its value is unaffected by estimation errors. However, the optimal mixing ratio $\alpha^\ast$ depends on the curvature-related constants through the derivative $f'(\alpha)$ of the objective function.
Let $\alpha^\ast$ denote the true optimal mixing ratio and $\hat{\alpha}^\ast$ the value obtained under estimated hyperparameters. Since the estimation biases of $\hat{\sigma}_j^2$ and $\hat G$ are nondirectional, we focus on the effect of the curvature constants $L$ and $\mu$. These constants influence the coefficients $\zeta_1$ and $\zeta_2$, and therefore change the shape of $f'(\alpha)$. When $8L/{\mu} \leq E$ holds, it satisfies
$8\hat{L}/{\hat{\mu}} \leq E$ and
the derivative satisfies $f'\left ( \alpha \mid \hat{L}, \hat{\mu} \right )  < f'\left ( \alpha \mid L, \mu \right ) $ for any $\alpha$. Since $f'(\alpha)$ is non-decreasing related to $\alpha$, the root of $f'\left ( \alpha \right ) $ shifts to the right, implying that $\hat{\alpha}^* > \alpha^*$ when  $f'(\frac{1}{2} \mid \hat{L}, \hat{\mu}) > 0$. If $f'(\frac{1}{2} \mid \hat{L}, \hat{\mu}) \leq 0$, the optimum remains unchanged and $\hat{\alpha}^* = \alpha^*$. On the other hand, when $8\hat{L}/{\hat{\mu}} > E$, the relationship between $\hat{\alpha}^*$ and $\alpha^*$ becomes non-monotonic and cannot be characterized analytically. 
When the estimated hyperparameters lead to $\hat{\alpha}^\ast \neq \alpha^\ast$, the mismatch may lead to a higher upper bound on the training loss.

Despite the estimation errors, the resulting change in the final training loss of FedMeld is modest. As demonstrated in the following experimental results, FedMeld outperforms all benchmark methods, highlighting the robustness of the proposed framework.

\subsection{Communication Costs}
Before evaluating the learning performance, Table \ref{tab:compare_FL} provides a comparison between FedMeld and three baseline algorithms with respect to learning type, model aggregation methods, and communication costs per global round. This analysis considers a scenario with $M$ clusters and $U$ participating clients. In addition, $l_{\rm{U2S}}$, $l_{\rm{S2G}}$, and $l_{\rm{ISL}}$ denote the numbers of user-to-satellite links, satellite-to-ground-station links, and inter-satellite links, respectively.
The $ \min $ and $ \max $ in Table \ref{tab:compare_FL} mainly target the SGI-FL frameworks using ISLs for parameter exchange. That is, PFL and Ring AllReduce. Here, $ \min $ refers to the optimal condition where the satellites serving any two adjacent clusters can directly establish ISLs, thereby minimizing the number of communication links required. In contrast, $ \max $ denotes the worst-case condition where the satellites serving any two adjacent clusters cannot establish ISLs. In this case, if two satellites wish to exchange models, each must first send its model to a ground station and then receive the counterpart model via the ground station. Consequently, a model exchange that would otherwise require one ISL now requires two satellite-to-ground station link transmissions.
We can conclude that the proposed FedMeld is an infrastructure-independent framework with the lowest communication costs.

\begin{figure*}[!h]
\centering\includegraphics[width= 0.8\textwidth]{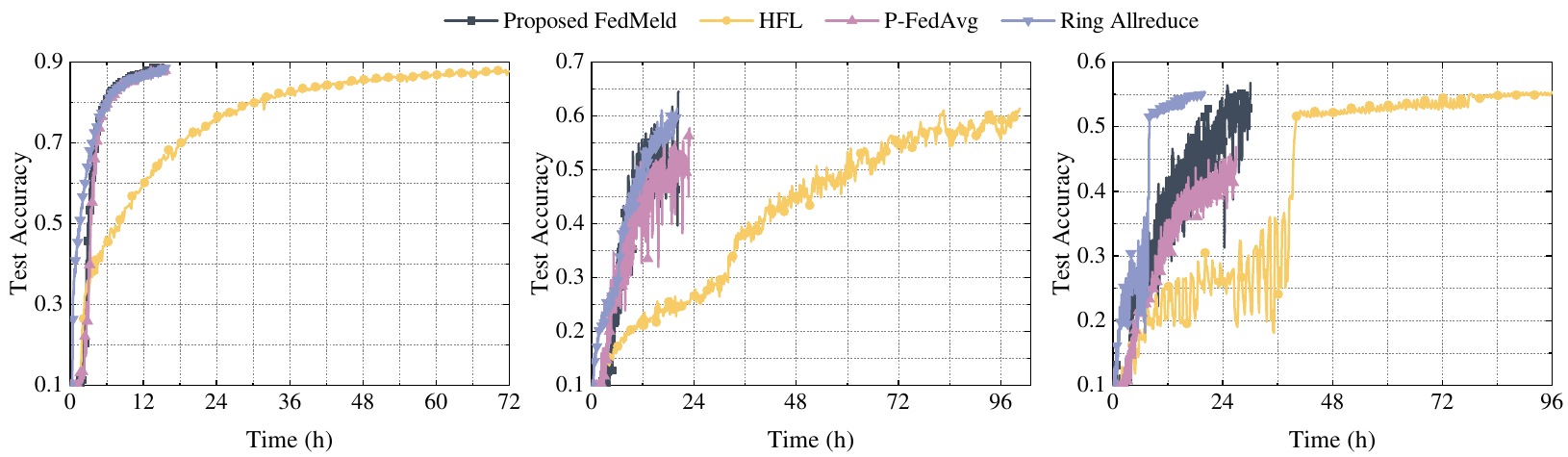}
	\caption{{Test accuracy versus time on CIFAR-10 with IID clients (left), IID clusters with non-IID clients (center), and non-IID clusters with non-IID clients (right).}}
	\label{fig:AccVsDuration_CIFAR}
\end{figure*}

\begin{figure*}[!h]
	\centering
	\includegraphics[width= 0.8\textwidth]{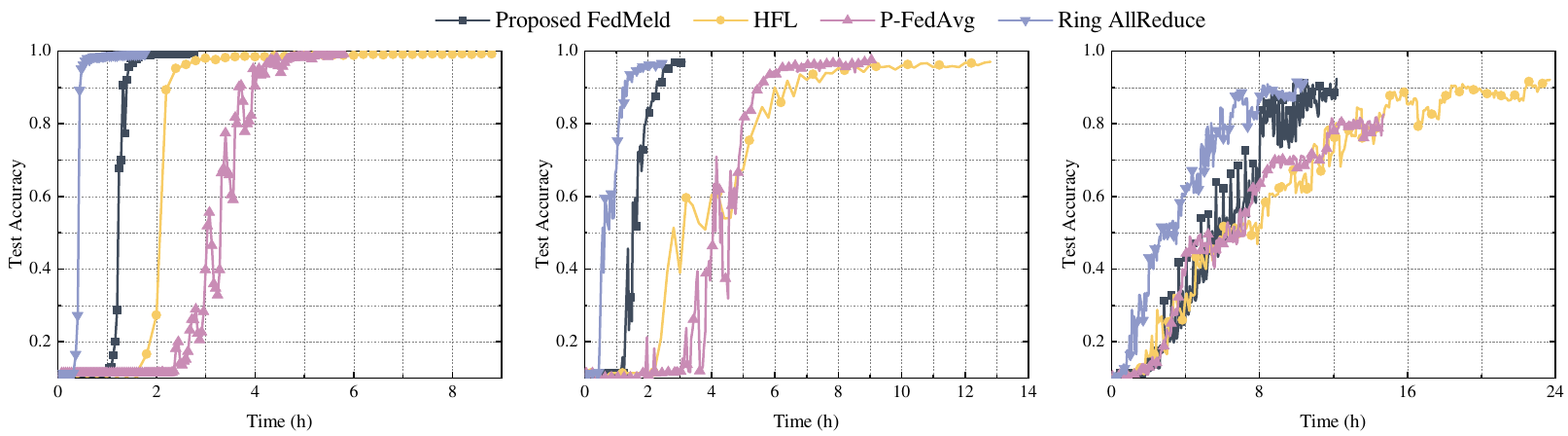}
	\caption{Test accuracy versus time on MNIST in client IID (left), cluster IID with client non-IID (center), and cluster non-IID with client non-IID (right).}
	\label{fig:AccVsDuration_MNIST}
\end{figure*}

\subsection{Learning Performance}
In this part, we evaluate the learning performance of various FL frameworks across two datasets under three distinct data distribution settings. 
Fig. \ref{fig:AccVsDuration_CIFAR} and Fig. \ref{fig:AccVsDuration_MNIST} illustrate the learning performance of the FedMeld compared to three baseline schemes across different data distribution scenarios in each dataset.
These three data distribution settings emulate different levels of task and data heterogeneity across clients and clusters.
\begin{itemize}
	\item \textbf{IID clients}:  Each user receives an equal number of samples randomly drawn from the entire dataset, with images assigned to users following an IID pattern.

	\item \textbf{IID clusters (with non-IID clients)}: Each cluster selects images in proportion to the number of users in that cluster, randomly drawing from all labels. Within each cluster, users are assigned an equal number of images, typically from an average of 3 labels.
	
	\item \textbf{Non-IID clusters (with non-IID clients)}:  Clients within the same cluster receive images drawn from only 3 specific labels, and each client further selects images from 2 labels within the cluster’s data.
\end{itemize}

From the experiment results, the following observations can be made:
1) The proposed FedMeld consistently achieves the highest model accuracy among all frameworks, demonstrating the superior performance of our approach. Compared to HFL, this decentralized FL scheme allows adjacent regions to mix parameters directly and update models more flexibly, avoiding the excessive time required for model aggregation at ground stations.
The advantages of FedMeld over PFL and Ring Allreduce in terms of model accuracy stem from two key aspects: First, the careful design of mixing ratio and global round interval between adjacent regions. Second, FedMeld avoids reliance on ISLs that are not yet fully deployed across the entire constellation.
2) Ring Allreduce exhibits the fastest convergence speed. This is because, after the same number of global rounds, Ring Allreduce completes a full global average, whereas FedMeld and P-FedAvg perform local parameter exchanges between neighboring regions. However, despite its faster convergence, Ring Allreduce introduces significantly higher communication overhead, as it requires frequent utilization of ISLs to mix models across neighboring regions. The reliance on extensive ISL communication can become a bottleneck, particularly in large-scale or bandwidth-constrained satellite networks.
3) The model accuracy of P-FedAvg declines markedly as the non-IID degree of the data increases.  
4) The accuracy of HFL is comparable to that of FedMeld. However, HFL always has the longest training time. This is because the aggregation frequency in HFL is determined by the interval between consecutive passes of all serving satellites over the ground stations.

Next, we will focus on the impact of key parameters on model accuracy and the time cost of model training. Given that the model accuracy under IID clients is quite similar across all frameworks, the subsequent discussion will focus solely on the non-IID cases under varying key parameters.

\subsection{Effects of Satellite Number}
We investigate the effect of the number of satellites on learning performance across different frameworks, as shown in Fig. \ref{fig:SatNumber_Cifar} and Fig. \ref{fig:SatNumber_MNIST}. The number of satellites directly influences the session duration between space and ground, affecting learning performance by controlling $K$.
As satellites become denser, the model accuracy of FedMeld initially improves and then declines in Fig. \ref{fig:satnum_cifar_clusteriid}, while accuracy increases monotonically in other settings. Compared to other methods, FedMeld generally demonstrates superior model accuracy, except in the case of non-IID clusters in Fig. \ref{fig:acc_satnum_mnist} with 88 satellites per orbit.
Regarding training time, it shows a monotonically increasing trend due to the shorter serving time of each satellite. Additionally, dense satellite networks lead to more frequent handovers and increased launch costs. Thus, beyond the trade-off between model accuracy and training time, there is also a trade-off between learning performance and handover/engineering costs.

\begin{figure}[h]
	\centering
	\subfigure[{Test accuracy of IID clusters (left) and non-IID clusters (right)}]{\includegraphics[width =0.47\textwidth]{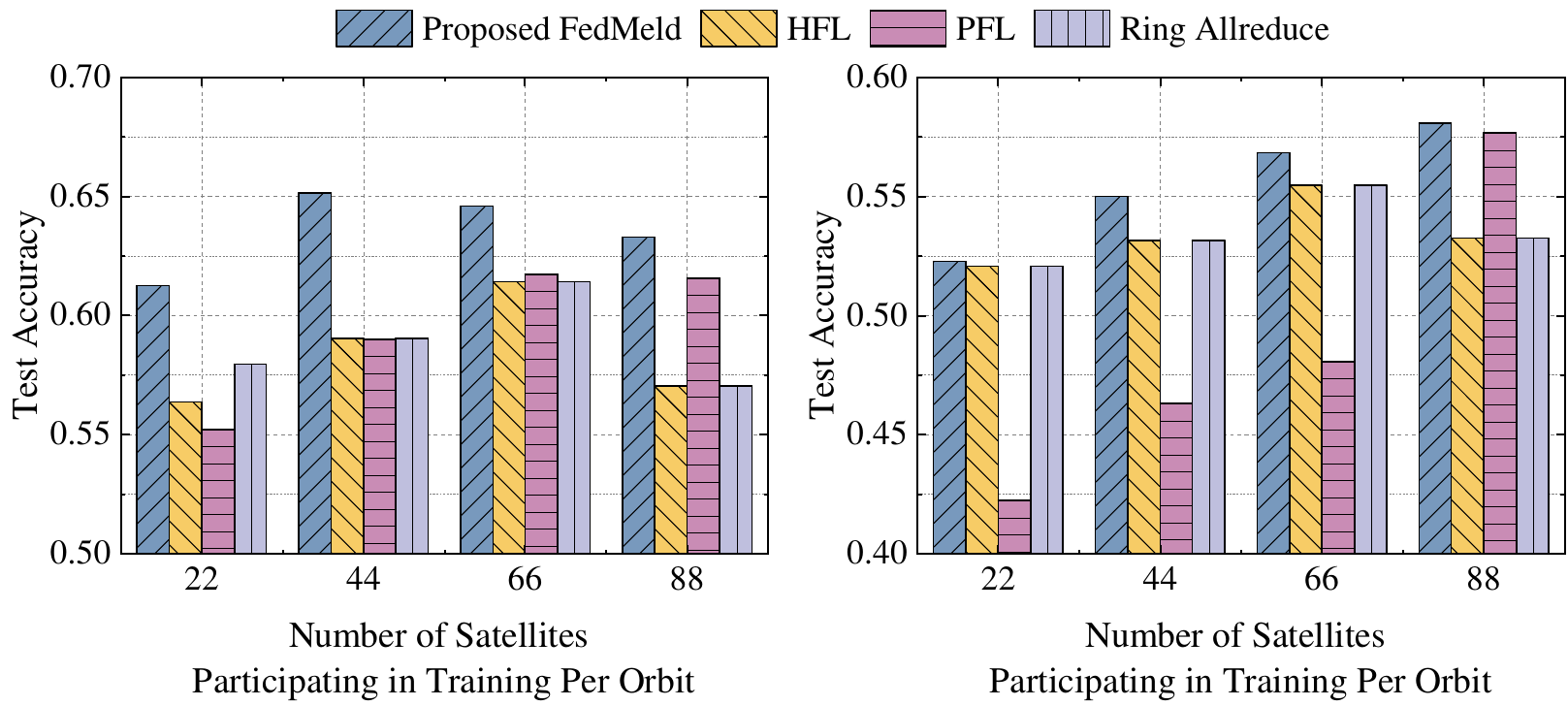}\label{fig:satnum_cifar_clusteriid}}
	\subfigure[{Training time of IID clusters (left) and non-IID clusters (right)}]{\includegraphics[width =0.47\textwidth]{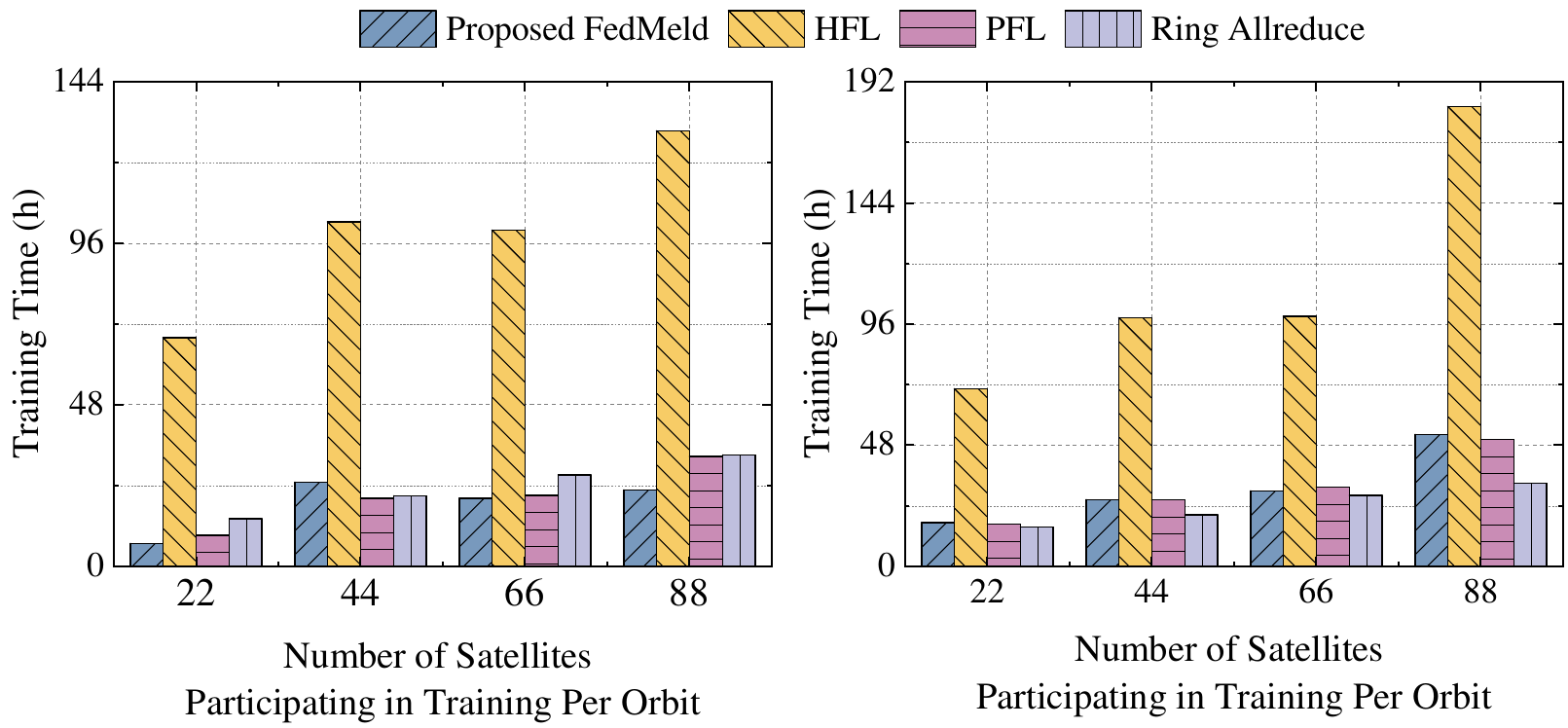}}
	\caption{{Effect of number of satellites participating in training per orbit on test accuracy and training time on CIFAR-10.}} 
	\label{fig:SatNumber_Cifar}  
\end{figure}

\begin{figure}[h]
	\centering
	\subfigure[{Test accuracy of IID clusters (left) and non-IID clusters (right)}]{\includegraphics[width =0.47\textwidth]{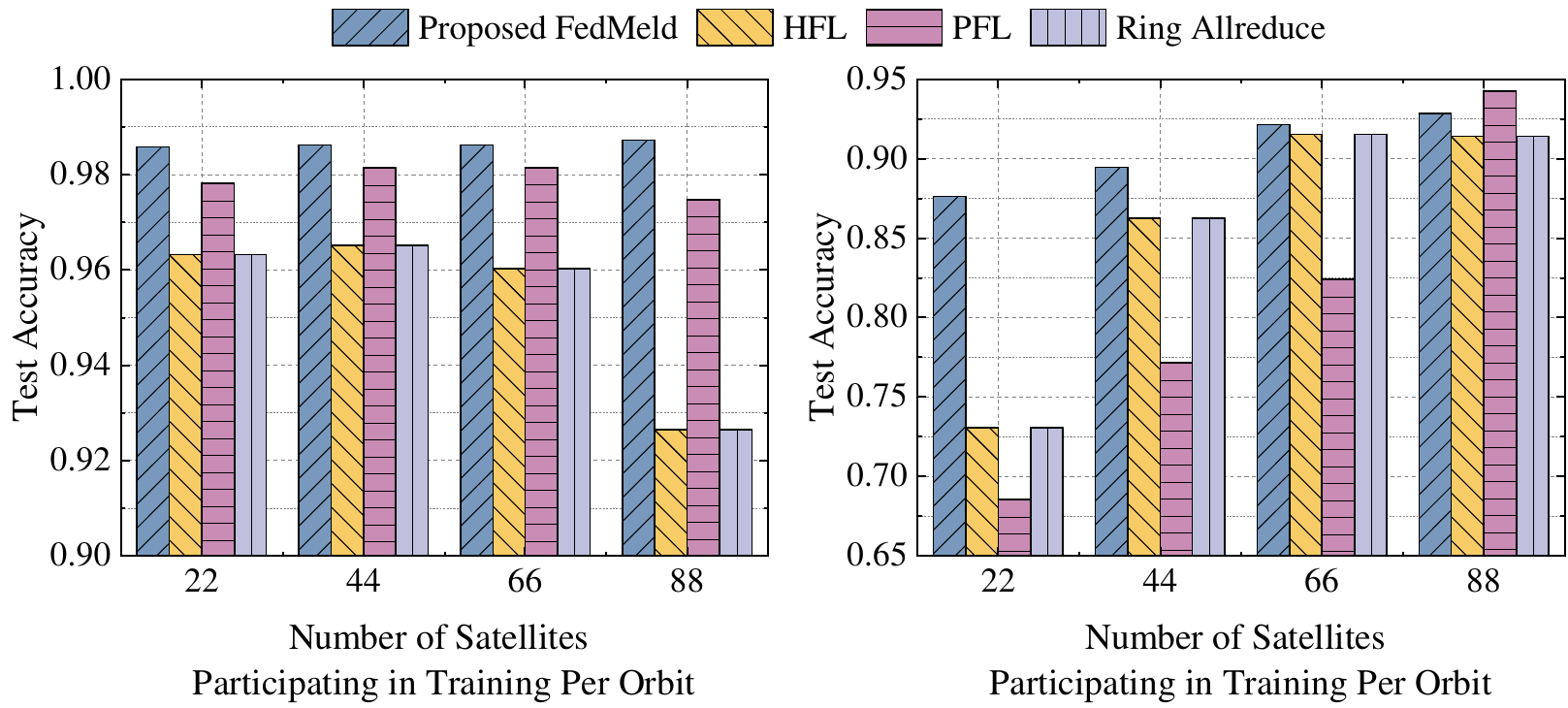}\label{fig:acc_satnum_mnist}}
	\subfigure[{Training time of IID clusters (left) and non-IID clusters (right)}]{\includegraphics[width =0.47\textwidth]{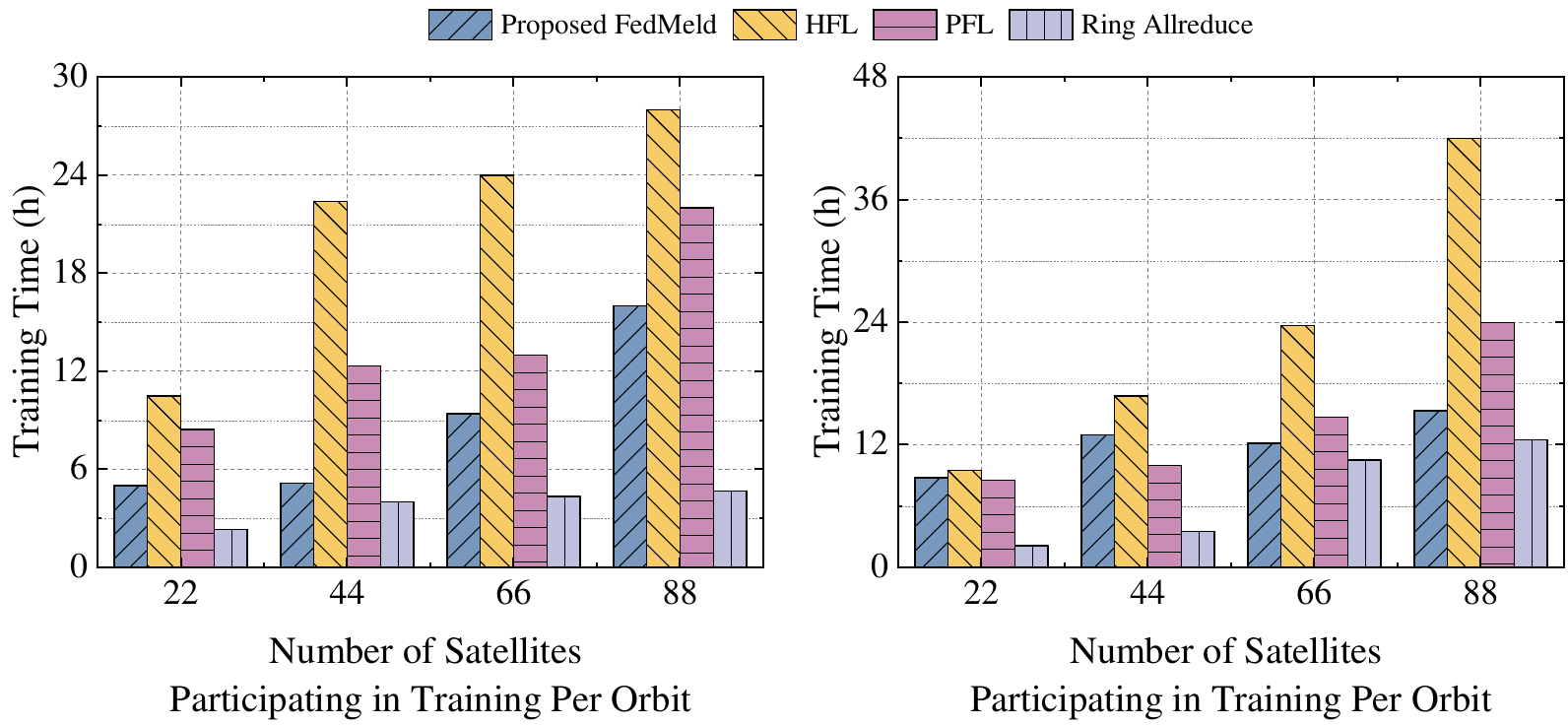}}
	\caption{{Effect of satellite number on test accuracy and training time on MNIST.}} 
	\label{fig:SatNumber_MNIST}  
\end{figure}

\subsection{Effects of Latency Constraint}
In Fig. \ref{fig:MaximalDelay_Cifar} and Fig. \ref{fig:MaximalDelay_MNIST}, we analyze the impact of maximal tolerable delay $T_{\max}$ on the performance of FedMeld. As the latency constraint becomes more relaxed, it allows more fresh parameter mixing when the serving satellite flies over the next cluster, i.e., $\delta^{\ast}$ is smaller. We observe that when $T_{\max}$ increases, model accuracy improves while convergence speed slows. These observations confirm the effectiveness of the proof that the objective function $f\left(\delta, \alpha\right)$ of $\mathcal{P}1$ is monotonically increasing with respect to $\delta$ in Lemma \ref{lemma:subsequent}.

\begin{figure}[h]
	\centering
	\subfigure[IID clusters]{\includegraphics[width =0.24\textwidth]{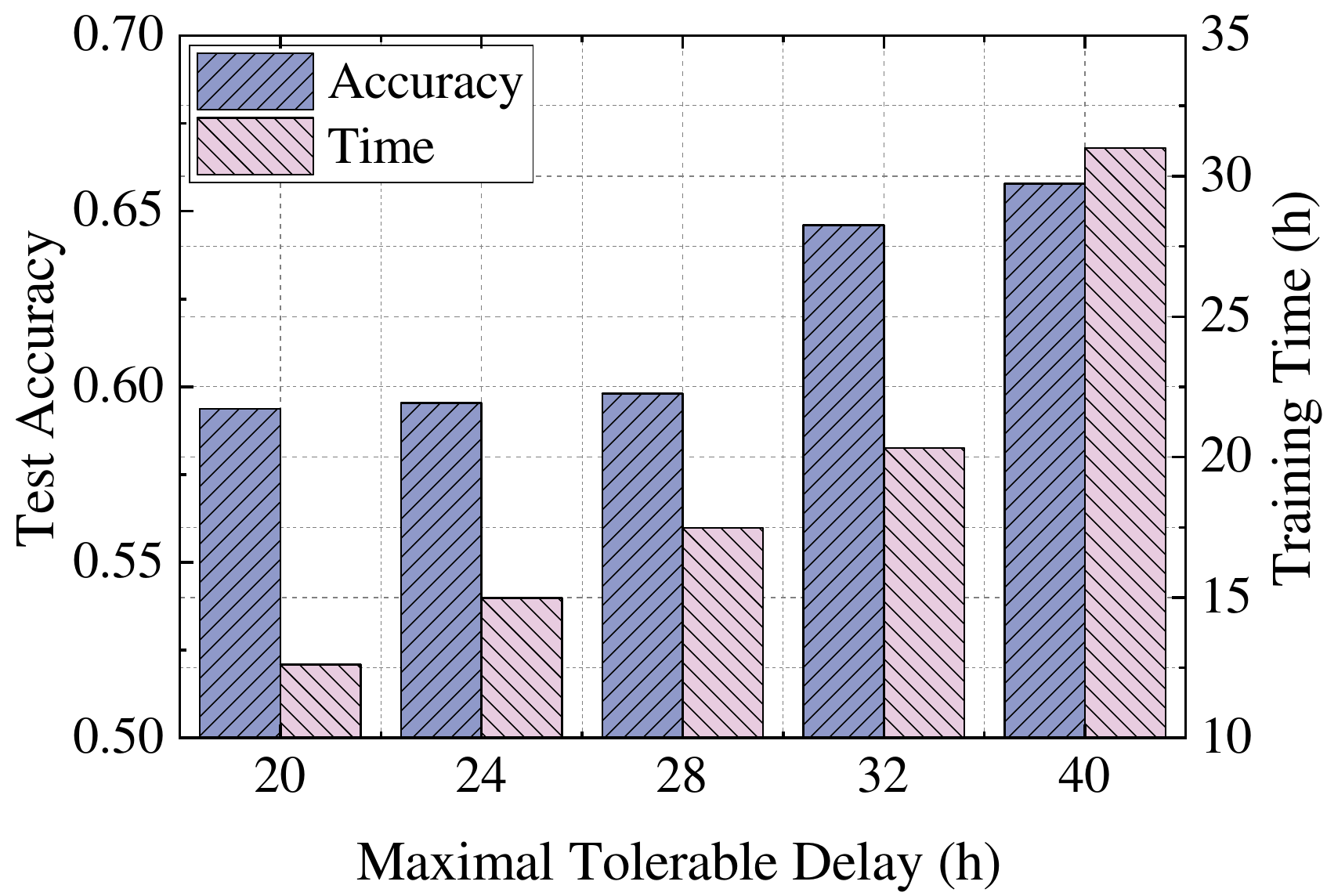}}
	\subfigure[Non-IID clusters]{\includegraphics[width =0.24\textwidth]{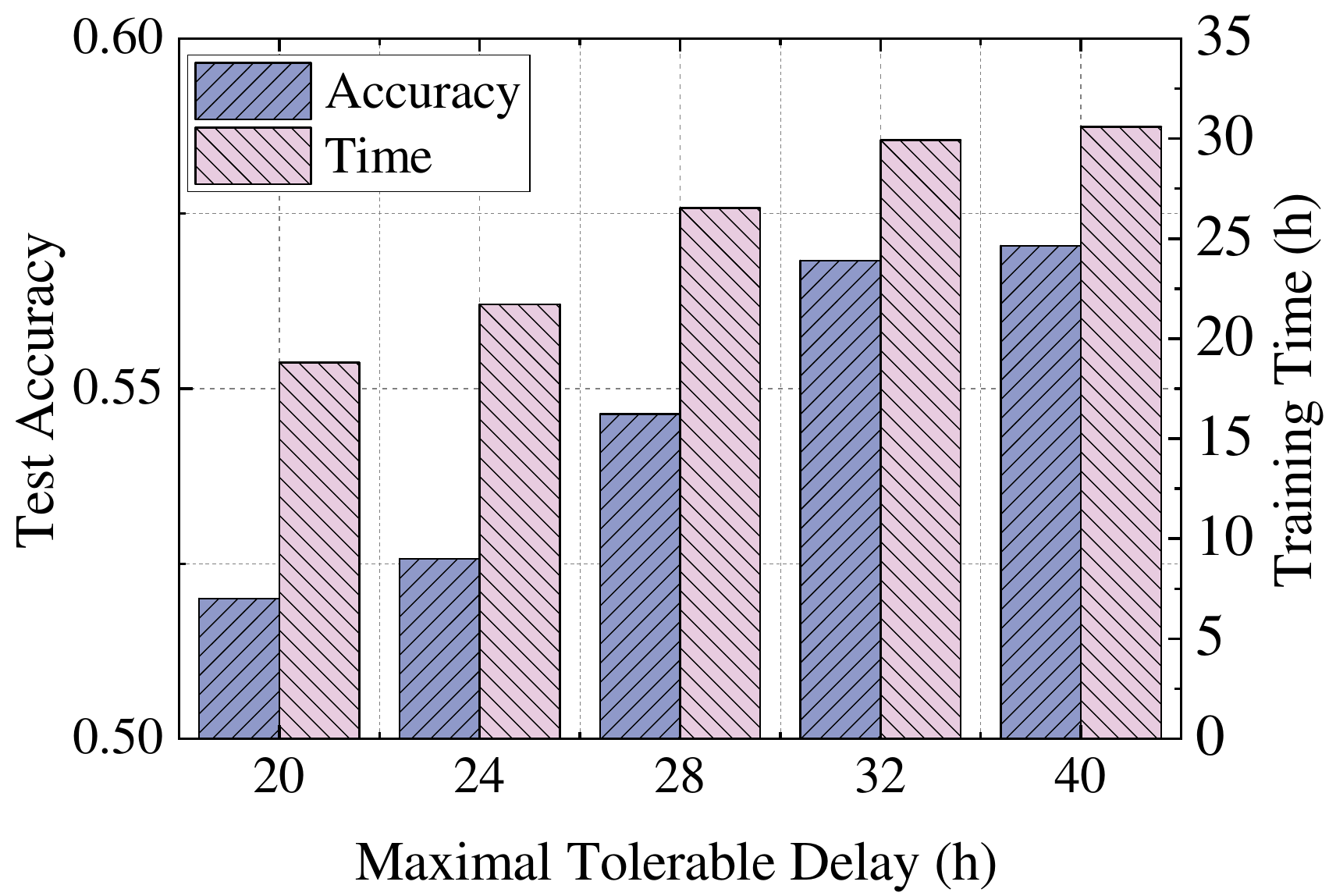}}
	\caption{Effect of maximal tolerable delay with different non-IID degrees on CIFAR-10.} 
	\label{fig:MaximalDelay_Cifar}  
\end{figure}

\begin{figure}[!h]
	\centering
	\subfigure[IID clusters]{\includegraphics[width =0.24\textwidth]{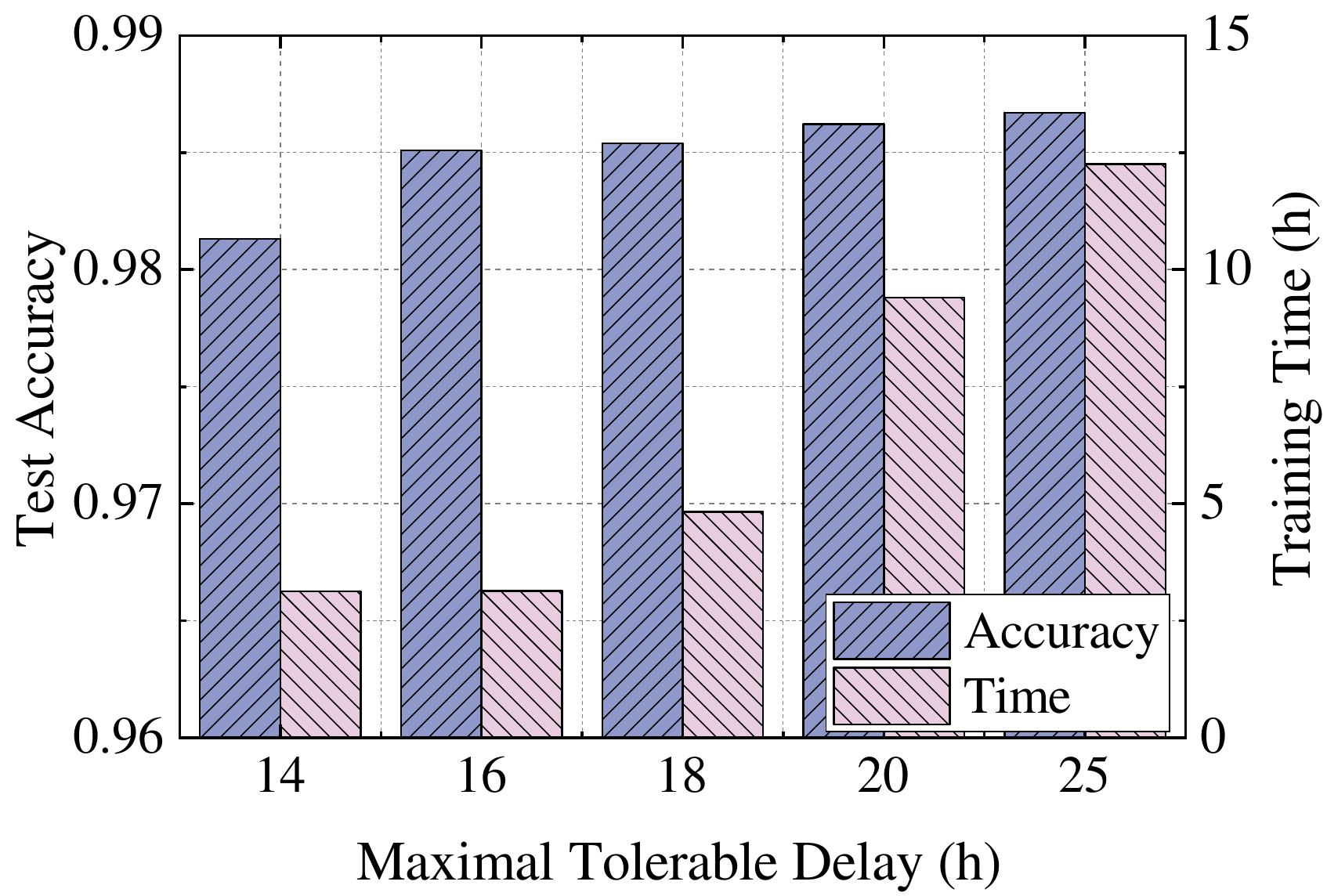}}
	\subfigure[Non-IID clusters]{\includegraphics[width =0.24\textwidth]{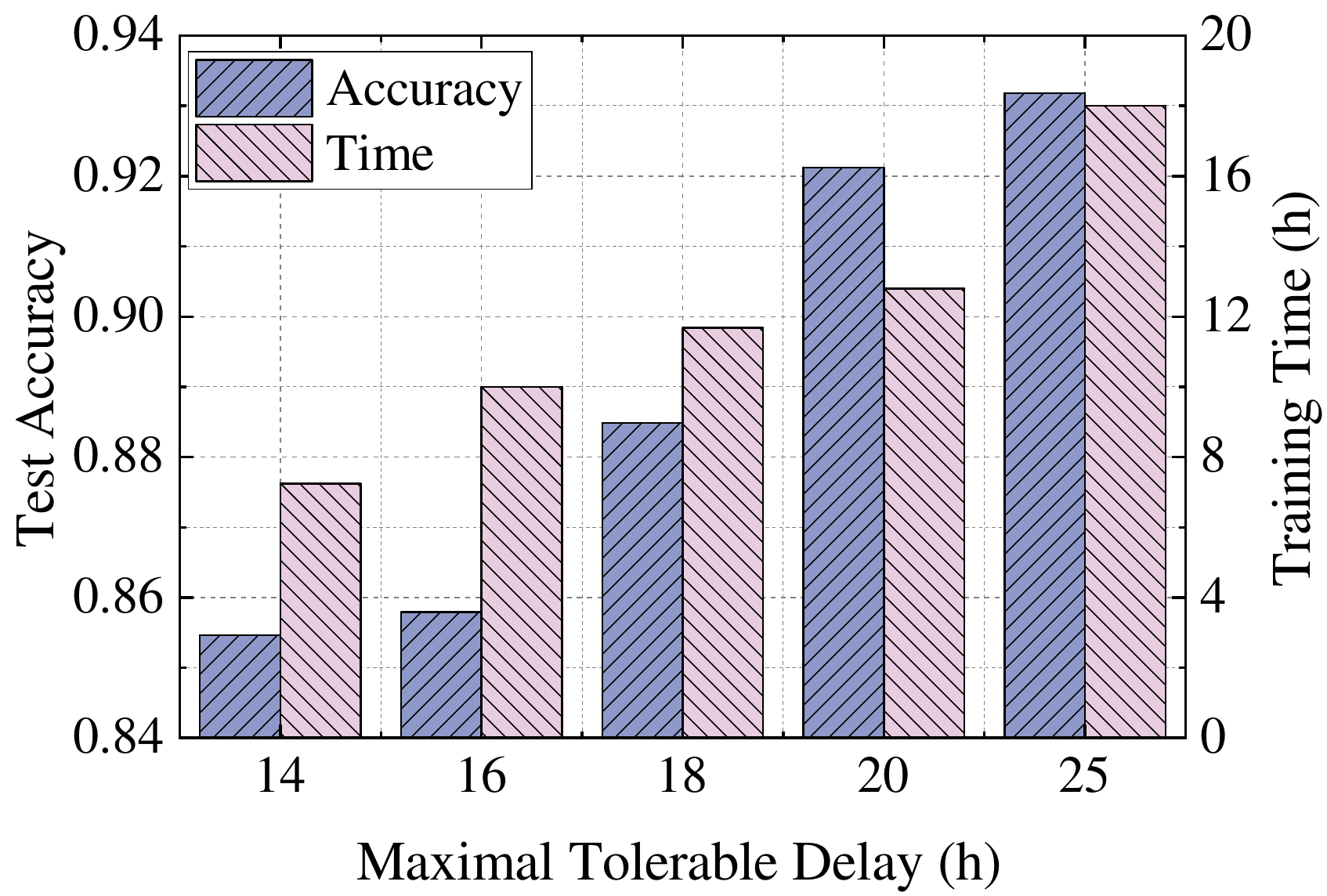}}
	\caption{Effect of maximal tolerable delay with different non-IID degrees on MNIST.} 
	\label{fig:MaximalDelay_MNIST}
\end{figure}

\subsection{Effect of Data Heterogeneity}

Fig. \ref{fig:gamma_Cifar} and Fig. \ref{fig:gamma_MNIST} illustrate the impact of intra-cluster and inter-cluster data heterogeneity on the test accuracy and optimal mixing ratio for CIFAR-10 and MNIST, respectively.
In these experiments, the degree of data heterogeneity is controlled by the Dirichlet concentration parameters, which determine the skewness of the data distributions across clusters and among users within each cluster~\cite{Li2020n,chen2024heterogeneity}.
The concentration parameters are varied in $\left \{  0.5, 1, 2, 5, 10\right \} $, covering a range from highly non-IID to nearly IID distributions.
A smaller concentration parameter corresponds to a more imbalanced (highly non-IID) data distribution, whereas a larger parameter leads to a more uniform (approaching IID) allocation of samples.
As shown in the figures, when the heterogeneity increases, i.e., the Dirichlet concentration parameter decreases, the optimal mixing ratio consistently decreases, confirming our Remark \ref{remark:optimal_mixing_ratio} that greater distribution disparity requires weaker inter-cluster or inter-user model mixing to mitigate negative transfer across dissimilar datasets.
Meanwhile, a lower degree of heterogeneity generally yields higher test accuracy, as more balanced data partitions enable stable aggregation and faster convergence.
These results collectively demonstrate that the proposed adaptive mixing mechanism can dynamically adjust to different levels of data heterogeneity.

\begin{figure}[t]
	\centering
	\subfigure[IID clusters]{\includegraphics[width =0.24\textwidth]{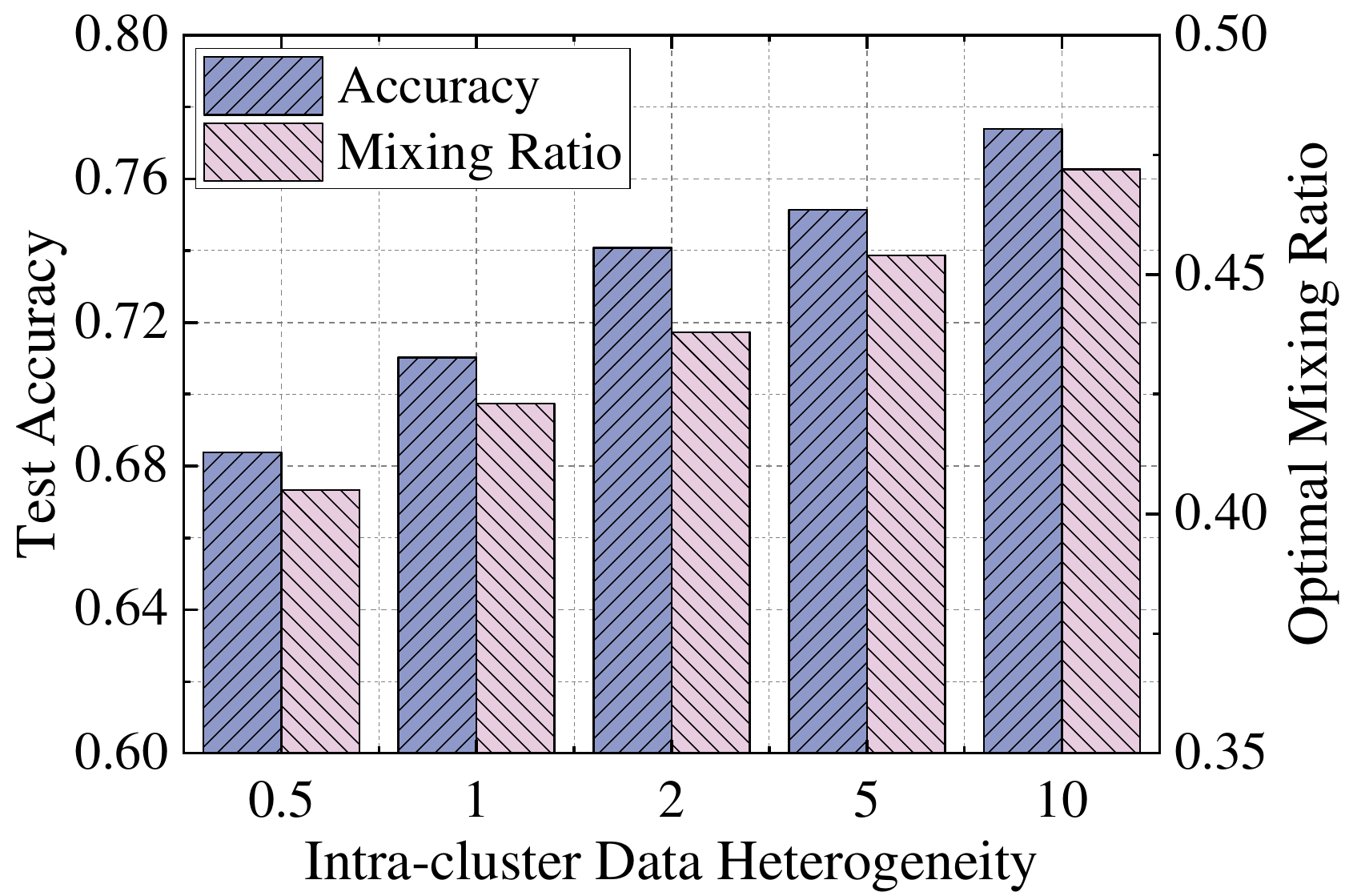}}
	\subfigure[Non-IID clusters]{\includegraphics[width =0.24\textwidth]{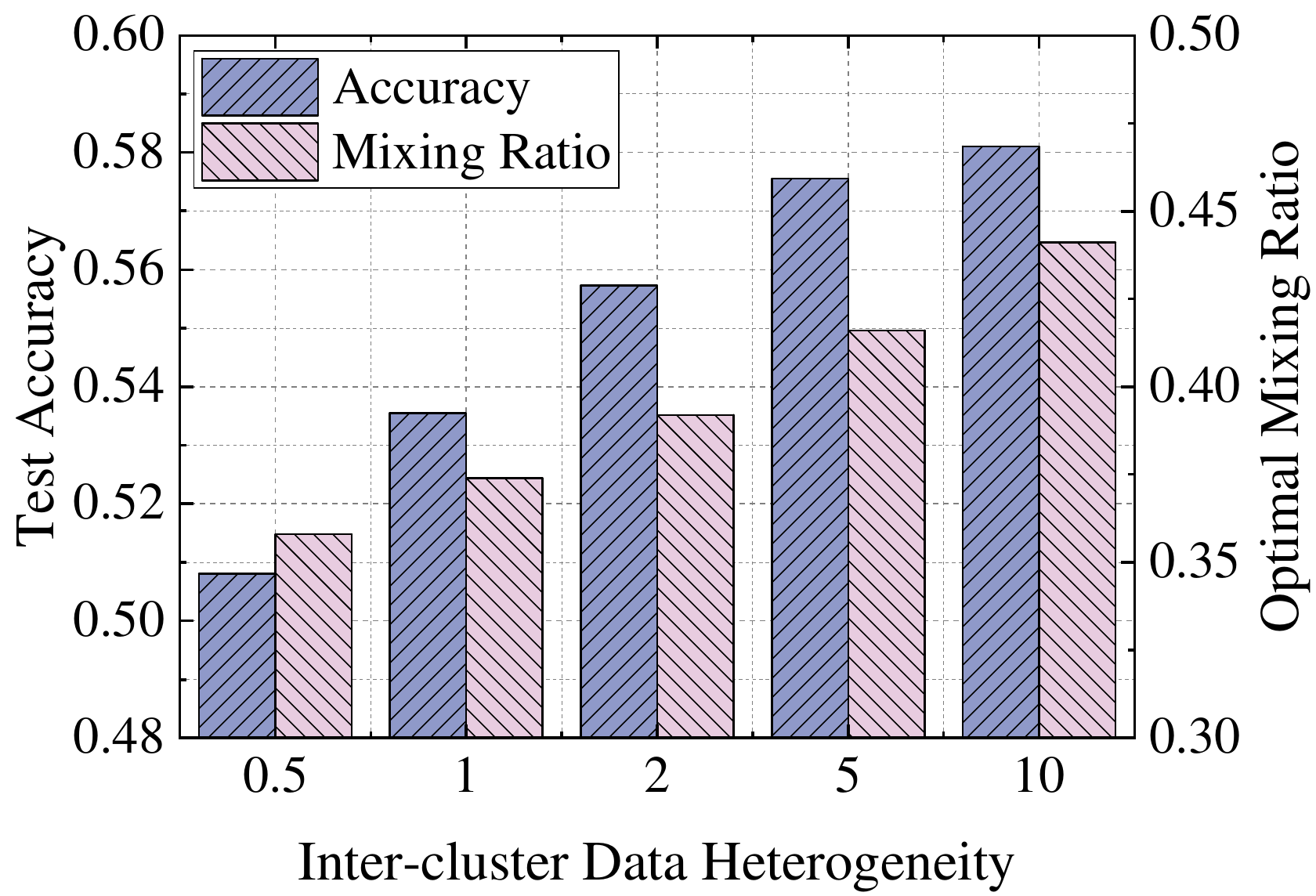}}
	\caption{{Effect of data heterogeneity with different non-IID degrees on CIFAR-10.}} 
	\label{fig:gamma_Cifar}  
\end{figure}

\begin{figure}[t]
	\centering
	\subfigure[IID clusters]{\includegraphics[width =0.24\textwidth]{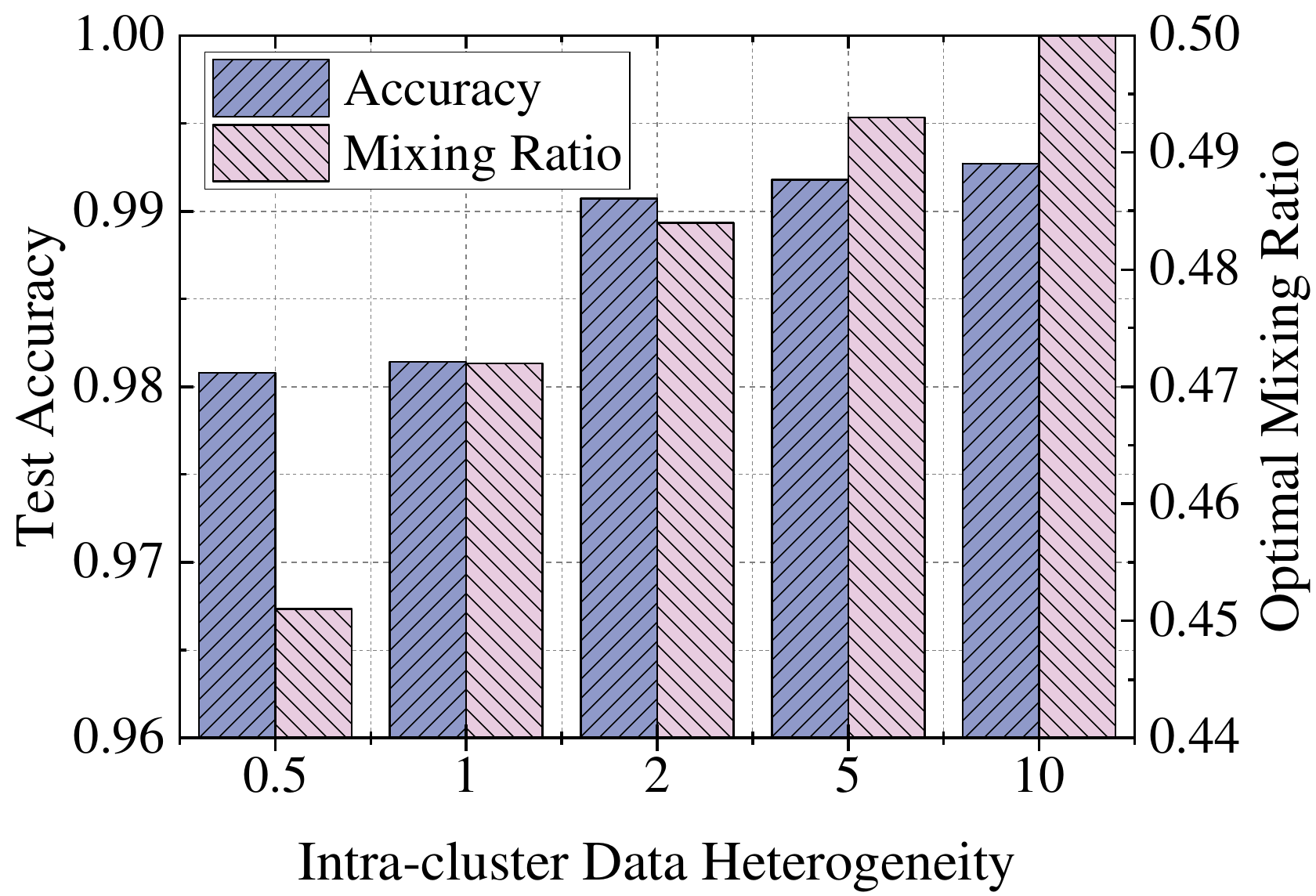}}
	\subfigure[Non-IID clusters]{\includegraphics[width =0.24\textwidth]{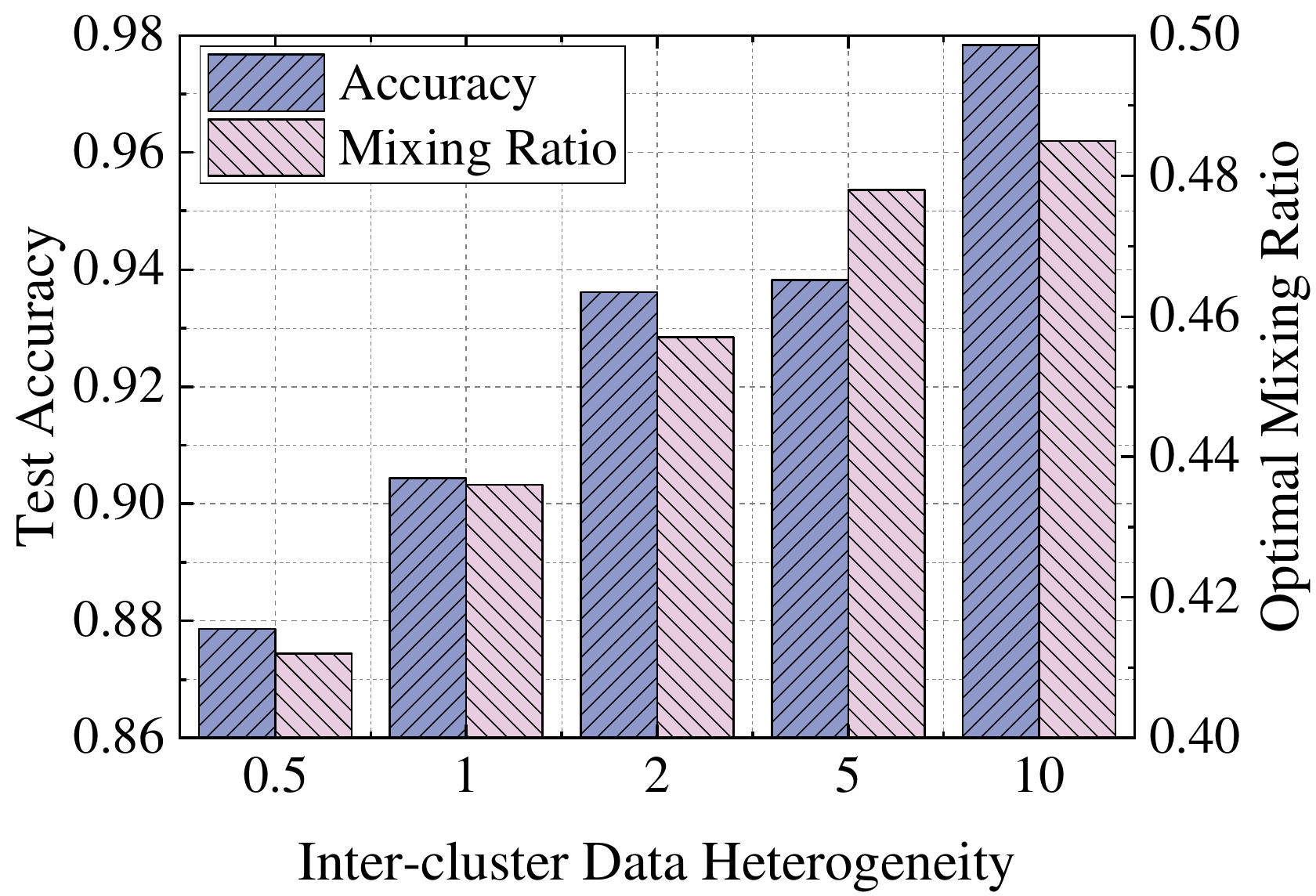}}
	\caption{{Effect of data heterogeneity with different non-IID degrees on MNIST.}} 
	\label{fig:gamma_MNIST}  
\end{figure}

\section{Conclusion}\label{sec:conclusion}
In this paper, we propose the novel FedMeld framework for SGINs. By leveraging the predictable movement patterns and SCF capabilities of satellites, the infrastructure-free FedMeld enables parameter aggregation across different terrestrial regions without relying on ground stations or ISLs. Our findings indicate that the optimal global round interval of model mixing between adjacent areas is highly influenced by latency constraints and handover frequency, while the optimal mixing ratio of historical models from adjacent regions is determined by the degree of non-IID data distribution.

This work pioneers the concept of an infrastructure-free FL framework within the context of SGI-FL. The current FedMeld algorithm can be extended to more complex scenarios, such as implementing region-specific round intervals and adaptive mixing ratios. Practical challenges, such as balancing learning performance with handover and launch costs, merit further exploration. Additionally, investigating efficient inference strategies and model downloading mechanisms for SGINs represents a promising avenue for future research.

\ifCLASSOPTIONcaptionsoff
\newpage
\fi

\begin{appendices}

\section{Proof of Theorem \ref{theorem:convergence_bound}} \label{proof:theorem_convergence_bound}

From Assumption \ref{ass:unbiased_and_variance},  the upper bound of $ \mathbb{E}{\left\| {{{\mathbf{g}}_t} - {{\overline {\mathbf{g}}}_t}} \right\|^2} $ follows that
\begin{equation}\label{equ:square_norm_gradient}
	\begin{split}
		&	\mathbb{E}{\left\| {{{\mathbf{g}}_t} - {{\overline {\mathbf{g}}}_t}} \right\|^2} \\
		&	= {\left\| {\sum\limits_{i \in {\mathcal{M}}} {\sum\limits_{j \in {{\mathcal{N}}_i}} {\frac{1}{{M{N_i}}}\left[ {\nabla {F_j}\left( {{\mathbf{w}}_{t,j}^s,\varsigma _{t,j}^s} \right) - \nabla {F_j}\left( {{{\mathbf{w}}_{t,j}}} \right)} \right]} } } \right\|^2}  \\
		&	\leq \sum\limits_{i \in {\mathcal{M}}} {\sum\limits_{j \in {{\mathcal{N}}_i}} {\frac{{\sigma _j^2}}{{{{\left( {M{N_i}} \right)}^2}}}} }.  \\ 
	\end{split}
\end{equation}

Let $ {\Delta _t} = \mathbb{E}{\left\| {{\overline{\mathbf{w}}_t} - {{\mathbf{w}}^{ \star }}} \right\|^2} $.
By substituting (\ref{equ:square_norm_gradient}) into (\ref{equ:one_step_SGD}), we have
\begin{equation}\label{equ:upper_vw}
	{\left\| {{{\overline {\mathbf{v}}}_{t + 1}} - {{\mathbf{w}}^{ \star }}} \right\|^2}	\leq \left( {1 - \mu {\eta _t}} \right){\Delta _t} + \eta _t^2B +2Q_t,
\end{equation}
where $ B = \sum\limits_{i \in {\mathcal{M}}} {\sum\limits_{j \in {{\mathcal{N}}_i}} {\frac{{\sigma _j^2}}{{{{\left( {M{N_i}} \right)}^2}}}} }  + 6L\Gamma  $ and $ {Q_t} = \mathbb{E}\sum\limits_{i \in {\mathcal{M}}} {\sum\limits_{j \in {{\mathcal{N}}_i}} {\frac{1}{{M{N_i}}}} } {\left\| {{{\overline {\mathbf{w}} }_t} - {{\mathbf{w}}_{t,j}}} \right\|^2} $.

First, we discuss the case that $ A_{t+1} \neq 0 $.
From (\ref{equ:update_rule_w}), it is clear that when $ A_{t+1} \neq 0 $, $ {{\overline {\mathbf{w}}}_{t+1}} = {{\overline {\mathbf{v}}}_{t+1}} $ holds, and we have 
\begin{equation}\label{equ:delta_nohandover}
	{\Delta _{t + 1}} \leq \left( {1 - {\eta _t}\mu } \right){\Delta _t} + \eta _t^2B + 2{Q_t}.
\end{equation}
Thus, we only need to verify $ {\Delta _{t + 1}} \leq \left( {1 - {\eta _t}\mu } \right){\Delta _t} + \eta _t^2B $, and then (\ref{equ:delta_nohandover}) always holds.

Choose a diminishing stepsize learning rate $ \eta_t = \frac{\beta}{t + \gamma}  $ for some $ \mu \beta > 1 $ and $ \gamma > 0 $ such that $ \eta_1 \leq \min \left\{\frac{1}{\mu}, \frac{1}{4L} \right\} =\frac{1}{4L} $ and $ \eta_t \leq 2\eta_{t+E} $. By setting $  v = \max \left\{ \frac{\beta^2 B}{\beta \mu - 1}, (\gamma + 1) \Delta_1 \right\}
$, then we will prove $ {\Delta _t} \leq \left( {\frac{1}{{1 - \alpha }} - \alpha + \frac{1}{2} } \right)\frac{v}{{t + \gamma }} $ for $ A_{t+1} \neq 0 $. Assume that this conclusion holds for some $ t $, then we have
\begin{align}
	&	{\Delta _{t + 1}} \leq \left( {1 - {\eta _t}\mu } \right){\Delta _t} + \eta _t^2B  \nonumber \\
	&	\leq \left( {1 - \frac{{\mu \beta }}{{t + \gamma }}} \right)\left( {\frac{1}{{1 - \alpha }} - \alpha + \frac{1}{2}} \right)\frac{v}{{t + \gamma }} + \frac{{B{\beta ^2}}}{{{{\left( {t + \gamma } \right)}^2}}} \nonumber \\
	&	= \frac{{t + \gamma  - 1}}{{{{\left( {t + \gamma } \right)}^2}}}\left( {\frac{1}{{1 - \alpha }} - \alpha+ \frac{1}{2} } \right)v \nonumber \\
	& + \left[ {\frac{{B{\beta ^2}}}{{{{\left( {t + \gamma } \right)}^2}}} - \frac{{\mu \beta  - 1}}{{{{\left( {t + \gamma } \right)}^2}}}\left( {\frac{1}{{1 - \alpha }} - \alpha + \frac{1}{2} } \right)v} \right] \nonumber \\
	\label{equ:expression_larger_one}	&	\leq \left( {\frac{1}{{1 - \alpha }} - \alpha + \frac{1}{2} } \right)\frac{{t + \gamma  - 1}}{{{{\left( {t + \gamma } \right)}^2}}}v  \\
	&	\leq \left( {\frac{1}{{1 - \alpha }} - \alpha + \frac{1}{2} } \right)\frac{1}{{t + \gamma  + 1}}v, \nonumber
\end{align}
where (\ref{equ:expression_larger_one}) follows from $ B{\beta ^2} \leq \left(\mu \beta  - 1 \right)v  $ and $ {\frac{1}{{1 - \alpha }} - \alpha + \frac{1}{2} } > 1 $.

Then, we discuss the case that $ A_{t+1} = 1 $.
When $ A_{t+1} = 1 $, we have $ {\overline {\mathbf{w}}_{t+1}} = \left( {1 - \alpha } \right){\overline {\mathbf{v}}_{t+1}} + \alpha {\overline {\mathbf{w}}_{{t+1} - \delta E }} $. Then, the lower bounded of $ \mathbb{E}{\left\| {{{\overline {\mathbf{v}}}_{t + 1}} - {{\mathbf{w}}^{ \star }}} \right\|^2} $ can be obtained as follows.
\begin{align}
	&	\mathbb{E}{\left\| {{{\overline {\mathbf{v}}}_{t + 1}} - {{\mathbf{w}}^{ \star }}} \right\|^2} \nonumber \\
	&	= \mathbb{E}{\left\| {\frac{{{{\overline {\mathbf{w}}}_{t + 1}} - \alpha {{\overline {\mathbf{w}}}_{t + 1 - \delta E }}}}{{1 - \alpha }} - {{\mathbf{w}}^{ \star }}} \right\|^2} \nonumber \\
	&	= \mathbb{E}{\left\| {\frac{1}{{1 - \alpha }}\left( {{{\overline {\mathbf{w}}}_{t + 1}} - {{\mathbf{w}}^{ \star }}} \right) - \frac{\alpha }{{1 - \alpha }}\left( {{{\overline {\mathbf{w}}}_{t + 1 - \delta E }} - {{\mathbf{w}}^{ \star }}} \right)} \right\|^2}  \nonumber \\ 
	&  = \frac{1}{{{{\left( {1 - \alpha } \right)}^2}}}{\left\| {{{\overline {\mathbf{w}}}_{t + 1}} - {{\mathbf{w}}^{ \star }}} \right\|^2} + \frac{{{\alpha ^2}}}{{{{\left( {1 - \alpha } \right)}^2}}}{\left\| {{{\overline {\mathbf{w}}}_{t + 1 - \delta E }} - {{\mathbf{w}}^{ \star }}} \right\|^2} \nonumber \\
	&- \frac{{2\alpha }}{{{{\left( {1 - \alpha } \right)}^2}}}\left\langle {{{\overline {\mathbf{w}}}_{t + 1}} - {{\mathbf{w}}^{ \star }},{{\overline {\mathbf{w}}}_{t + 1 - \delta E }} - {{\mathbf{w}}^{ \star }}} \right\rangle \nonumber \\
	& \geq \frac{1}{{1 - \alpha }}{\left\| {{{\overline {\mathbf{w}}}_{t + 1}} - {{\mathbf{w}}^{ \star }}} \right\|^2} - \frac{\alpha }{{1 - \alpha }}{\left\| {{{\overline {\mathbf{w}}}_{t + 1 - \delta E}} - {{\mathbf{w}}^{ \star }}} \right\|^2}, \label{equ:lower_bound}
\end{align}
The last inequality comes from Cauchy-Schwarz Inequality and Young's inequality: 
\begin{equation*}
	\begin{split}
		&	\left\langle {{{\overline {\mathbf{w}}}_{t + 1}} - {{\mathbf{w}}^{ \star }},{{\overline {\mathbf{w}}}_{t + 1 - \delta E }} - {{\mathbf{w}}^{ \star }}} \right\rangle  \\
		& \leq \left\| {{{\overline {\mathbf{w}}}_{t + 1}} - {{\mathbf{w}}^{ \star }}} \right\|\left\| {{{\overline {\mathbf{w}}}_{t + 1 - \delta E }} - {{\mathbf{w}}^{ \star }}} \right\| \\
		& \leq \frac{1}{2}\left( {{{\left\| {{{\overline {\mathbf{w}}}_{t + 1}} - {{\mathbf{w}}^{ \star }}} \right\|}^2} + {{\left\| {{{\overline {\mathbf{w}}}_{t + 1 - \delta E}} - {{\mathbf{w}}^{ \star }}} \right\|}^2}} \right).
	\end{split}
\end{equation*}
By substituting (\ref{equ:lower_bound}) into (\ref{equ:upper_vw}), we have 
\begin{equation}\label{equ:delta_t+1}
	{\Delta _{t + 1}} \leq \left( {1 - \alpha } \right)\left[ {\left( {1 - {\eta _t}\mu } \right){\Delta _t} + \eta _t^2B} \right] + \alpha {\Delta _{t + 1 - \delta E}} + 2{Q_t},
\end{equation}
Similarly, we only need to verify $ {\Delta _{t + 1}} \leq \left( {1 - \alpha } \right)\left[ {\left( {1 - {\eta _t}\mu } \right){\Delta _t} + \eta _t^2B} \right] + \alpha {\Delta _{t + 1 - \delta E}} $, and then (\ref{equ:delta_t+1}) always holds.
We also prove $ {\Delta _t} \le \left( {\frac{1}{{1 - \alpha }} - \alpha  + \frac{1}{2}} \right)\frac{v}{{t + \gamma }} $ by induction. Assume that this conclusion holds for some $ t $, then we have
\begin{equation}\label{equ:upper_new}
\resizebox{1\hsize}{!}{$	\begin{split}
		&	{\Delta _{t + 1}} \leq \left( {1 - \alpha } \right)\left( {1 - {\eta _t}\mu } \right){\Delta _t} + \left( {1 - \alpha } \right)\eta _t^2B + \alpha {\Delta _{t + 1 - \delta E}}  \\
		&	\leq \left( {1 - \frac{{\mu \beta }}{{t + \gamma }}} \right)\frac{{\left( {2{\alpha ^2} - 3\alpha  + 3} \right)v}}{{2\left( {t + \gamma } \right)}} + \frac{{\left( {1 - \alpha } \right)B{\beta ^2}}}{{{{\left( {t + \gamma } \right)}^2}}} \\
		&	+ \frac{{\alpha \left( {2{\alpha ^2} - 3\alpha  + 3} \right)}}{{2\left( {1 - \alpha } \right)}}\frac{v}{{t + 1 - \delta E + \gamma }}  \\
		& = \frac{{\left( {2{\alpha ^2} - 3\alpha  + 3} \right)\left( {t + \gamma  - 1} \right)v}}{{2{{\left( {t + \gamma } \right)}^2}}} + \frac{{\alpha \left( {2{\alpha ^2} - 3\alpha  + 3} \right)v}}{{2\left( {1 - \alpha } \right)\left( {t + 1 - \delta E + \gamma } \right)}}\\
		&  - \frac{{\left( {2{\alpha ^2} - \alpha  + 1} \right)\left( {\mu \beta  - 1} \right)v}}{{2{{\left( {t + \gamma } \right)}^2}}}  \\
		& + \left( {1 - \alpha } \right)\left[ {\frac{{B{\beta ^2}}}{{{{\left( {t + \gamma } \right)}^2}}} - \frac{{\mu \beta  - 1}}{{{{\left( {t + \gamma } \right)}^2}}}v} \right]\\
		& \leq  \frac{{\left( {2{\alpha ^2} - 3\alpha  + 3} \right)\left( {t + \gamma  - 1} \right)v}}{{2{{\left( {t + \gamma } \right)}^2}}} + \frac{{\alpha \left( {2{\alpha ^2} - 3\alpha  + 3} \right)v}}{{2\left( {1 - \alpha } \right)\left( {t + 1 - \delta E + \gamma } \right)}} \\
		&  - \frac{{\left( {2{\alpha ^2} - \alpha  + 1} \right)\left( {\mu \beta  - 1} \right)v}}{{2{{\left( {t + \gamma } \right)}^2}}} \\
		&  \le \left( {\frac{1}{{1 - \alpha }} - \alpha  + \frac{1}{2}} \right)\frac{v}{{t + \gamma  + 1}}.
	\end{split} $}
\end{equation}
When $ \delta E \le \frac{{m\left( \alpha  \right){{\left( {t + \gamma  + 1} \right)}^2}}}{{{{\left( {t + \gamma } \right)}^2} + m\left( \alpha  \right)\left( {t + \gamma  + 1} \right)}} $, the last inequality of (\ref{equ:upper_new}) holds.

By setting $ \beta = \frac{2}{\mu} $ and $ \gamma = \max \left\{ \frac{8L}{\mu}, E  \right\} - 1 $, then $ \eta_t $ satisfies $ \eta_t  \leq 2 \eta_{t+E}$ for any step $ t $. Then, we have  
\begin{align*}
	v = \max \left\{ {\frac{{{\beta ^2}B}}{{\beta \mu  - 1}},\left( {\gamma  + 1} \right){\Delta _1}} \right\} & \leq \frac{{{\beta ^2}B}}{{\beta \mu  - 1}} + \left( {\gamma  + 1} \right){\Delta _1} \\
	& \leq \frac{{4B}}{{{ \mu ^2}}} + \left( {\gamma  + 1} \right){\Delta _1}.
\end{align*}
According to Assumption \ref{ass:smooth}, the convergence upper bound can be given by
\begin{equation}\label{equ:gap_barw_wstar}
\resizebox{1\hsize}{!}{$	\begin{split}
		&	\mathbb{E}\left[ {F\left( {{{\overline {\mathbf{w}}}_t}} \right)} \right] - F\left( {{{\mathbf{w}}^{ \star }}} \right)  \leq \frac{L}{2}{\Delta _t} \\
		&	\leq \frac{L}{2}\left[ \left( {\frac{1}{{1 - \alpha }} - \alpha + \frac{1}{2} } \right){\frac{{v}}{{t + \gamma }} + {2Q_{t - 1}}} \right] \\
		& \leq \left( {\frac{1}{{1 - \alpha }} - \alpha  + \frac{1}{2} } \right)
		\frac{{L}}{{ \mu \left( {t + \gamma  } \right)}} \left[ {\frac{{2B}}{{{ \mu}}} + \frac{{\mu \left( {\gamma  + 1} \right)}}{2}{\Delta _1}} \right] + L{Q_{t - 1}}.
	\end{split} $}
\end{equation}
This completes the proof.

\section{Proof of Lemma \ref{lemma:relationship_Qt}} \label{proof:lemma_relationship_Qt}
We introduce an auxiliary variable $ \tilde{t} $. When $ t \notin {\mathcal{I}}_E $, then  $ \tilde t = \left\lfloor {t/E} \right\rfloor E $. When $ t \in {\mathcal{I}}_E $, $ \tilde t = t - E $ holds.
Then, the update rule of local model can be written as 
\begin{equation}
	{\mathbf{w}_{t,j}} 		= \left\{ \begin{array}{l}
		{\mathbf{w}_{\tilde t,j}} - \sum\limits_{s = \tilde t}^{t - 1} {{\eta _s}{\mathbf{g}_{s,j}}},  \;  {\text{if}} \; t \notin {\mathcal{I}}_E, \\ 
		\frac{{1 - \alpha {A_t}}}{{{N_i}}}\sum\limits_{j \in {{{\cal N}}_i}} {\left( {{\mathbf{w}_{\tilde t,j}} - \sum\limits_{s = \tilde t}^{t - 1} {{\eta _s}{\mathbf{g}_{s,j}}} } \right)}  + \alpha {A_t}{{\overline {\mathbf{w}}}_{t - \delta E ,i - 1}},  \\
		\qquad \qquad \qquad \qquad \qquad \qquad 	\qquad \;  {\text{if}} \; t \in {\mathcal{I}}_E, \\
	\end{array} \right.
\end{equation}
where $ {\mathbf{g}_{s,j}} = {\nabla {F_j}\left( {\mathbf{w}_{s,j},\varsigma _{s,j}} \right)} $ for notational brevity.
The virtual global model can be written as
\begin{equation}
	\begin{split}
		&	{{\overline {\mathbf{w}}}_t} = \left( {1 - \alpha {A_t}} \right){{\overline {\mathbf{v}}}_t} + \alpha {A_t}{{\overline {\mathbf{w}}}_{t - \delta E }} \\
		& = \frac{1}{M}\sum\limits_{i \in {\mathcal{M}}} {\left[ {\left( {1 - \alpha {A_t}} \right)\left( {{{\overline {\mathbf{w}}}_{\tilde t,i}} - \sum\limits_{s = \tilde t}^{t - 1} {{\eta _s}{\mathbf{g}_{s,i}}} } \right) + \alpha {A_t}{{\overline {\mathbf{w}}}_{t - \delta E ,i - 1}}} \right]}.
	\end{split}
\end{equation}

First, we discuss the upper bound of $ Q_t $ when $ t \in {\mathcal{I}}_E$, and we have
\begin{align}
	&	\mathbb{E}{Q_t} = \sum\limits_{i \in {\mathcal{M}}} {\sum\limits_{j \in {{\mathcal{N}}_i}} {\frac{1}{{M{N_i}}}\mathbb{E}{{\left\| {{\mathbf{w}_{t,j}} - {{\overline {\mathbf{w}}}_t}} \right\|}^2}} }  \nonumber \\
	&=	\sum\limits_{i \in {\mathcal{M}}} {\sum\limits_{j \in {{\mathcal{N}}_i}} {\frac{1}{{M{N_i}}}\mathbb{E}} } \left\| {\left( {1 - \alpha {A_t}} \right)\left( {{\mathbf{w}_{\tilde t,j}} - {{\overline {\mathbf{w}}}_{\tilde t}}} \right)} \right. \nonumber \\
	&{ + \alpha {A_t}\left( {{{\overline {\mathbf{w}}}_{t - \delta E ,i - 1}} - \frac{1}{M}\sum\limits_{i \in {\mathcal{M}}} {{{\overline {\mathbf{w}}}_{t - \delta E ,i - 1}}} } \right)}  \nonumber \\
	& \left. { - \left( {1 - \alpha {A_t}} \right)\left( {\sum\limits_{s = \tilde t}^{t - 1} {{\eta _s}{\mathbf{g}_{s,j}}}  - \frac{1}{M}\sum\limits_{i \in {\mathcal{M}}} {\sum\limits_{s = \tilde t}^{t - 1} {{\eta _s}{\mathbf{g}_{s,i}}} } } \right)} \right\| ^2 \nonumber \\
	&\leq \left(1+b\right)	\sum\limits_{i \in {\mathcal{M}}} {\sum\limits_{j \in {{\mathcal{N}}_i}} {\frac{1}{{M{N_i}}}\mathbb{E}} } \left\| {\left( {1 - \alpha {A_t}} \right)\left( {{\mathbf{w}_{\tilde t,j}} - {{\overline {\mathbf{w}}}_{\tilde t}}} \right)} \right. \nonumber \\
	&\left. { + \alpha {A_t}\left( {{{\overline {\mathbf{w}}}_{t - \delta E ,i - 1}} - {\overline {\mathbf{w}}}_{t - \delta E }} \right)} \right\| ^2  \nonumber \\
	& + \left( {1 + \frac{1}{b}} \right)\sum\limits_{i \in {\mathcal{M}}} {\sum\limits_{j \in {\mathcal{N}_i}} {\frac{1}{{M{N_i}}}\mathbb{E}\left\| {\left( {1 - \alpha {A_t}} \right)} \right.} } \nonumber \\
	\label{equ:jensen_inequality}	& {\left. { \cdot \left( {\sum\limits_{s = \tilde t}^{t - 1} {{\eta _s}{\mathbf{g}_{s,j}}}  - \frac{1}{M}\sum\limits_{i \in {\mathcal{M}}} {\sum\limits_{s = \tilde t}^{t - 1} {{\eta _s}{\mathbf{g}_{s,i}}} } } \right)} \right\|^2}  \\
	& \leq \left( {1 + b} \right){\left( {1 - \alpha {A_t}} \right)^2}\sum\limits_{i \in {\mathcal{M}}} {\sum\limits_{j \in {{\mathcal{N}}_i}} {\frac{1}{{M{N_i}}}\mathbb{E}{{\left\| {{\mathbf{w}_{\tilde t,j}} - {{\overline {\mathbf{w}}}_{\tilde t}}} \right\|}^2}} } \nonumber\\
	&  + \left( {1 + b} \right){\left( {\alpha {A_t}} \right)^2}\sum\limits_{i \in {\mathcal{M}}} {\sum\limits_{j \in {{\mathcal{N}}_i}} {\frac{1}{{M{N_i}}}\mathbb{E}{{\left\| {{{\overline {\mathbf{w}}}_{t - \delta E ,i - 1}} - {{\overline {\mathbf{w}}}_{t - \delta E }}} \right\|}^2}} } \nonumber \\
	& + \left( {1 + \frac{1}{b}} \right){\left( {1 - \alpha {A_t}} \right)^2}\sum\limits_{i \in {\mathcal{M}}} \sum\limits_{j \in {{\mathcal{N}}_i}} {\frac{1}{{M{N_i}}} } \nonumber \\
	\label{equ:convexity_squarenorm}	& \cdot {\mathbb{E}{{\left\| {\sum\limits_{s = \tilde t}^{t - 1} {{\eta _s}{\mathbf{g}_{s,j}}}  - \frac{1}{M}\sum\limits_{i \in {\mathcal{M}}} {\sum\limits_{s = \tilde t}^{t - 1} {{\eta _s}{\mathbf{g}_{s,i}}} } } \right\|}^2}}  \\ 
	& \leq \left( {1 + b} \right){\left( {1 - \alpha {A_t}} \right)^2}{Q_{\tilde t}} + \left( {1 + b} \right){\left(\alpha {A_t}\right) ^2}{Q_{t - \delta E }} \nonumber \\
	& + 4\left( {1 + \frac{1}{b}} \right){\left( {1 - \alpha {A_t}} \right)^2}{\left(E-1\right)^2}{G^2}\eta _t^2, \nonumber
\end{align}
where we use Jensen inequality in (\ref{equ:jensen_inequality}). That is, for any $ X_1, X_1 \in \mathbb{R}^d $, $ \|X_1 + X_2\|^2 \leq (1 + b) \|X_1\|^2 + \left(1 + \frac{1}{b}\right) \|X_2\|^2 $, where $ b $ can be any positive real number.
In (\ref{equ:convexity_squarenorm}), the first two terms come from the convexity of $\left\| \cdot \right\|^2  $, and the expectation in the last term can be calculated by
\begin{align*}
	\mathbb{E}{\left\| {\sum\limits_{s = \tilde t}^{t - 1} {{\eta _s}{\mathbf{g}_{s,j}}}  - \frac{1}{M}\sum\limits_{i \in {\mathcal{M}}} {\sum\limits_{s = \tilde t}^{t - 1} {{\eta _s}{\mathbf{g}_{s,i}}} } } \right\|^2} 	\leq \mathbb{E}{\left\| {\sum\limits_{s = \tilde t}^{t - 1} {{\eta _s}{\mathbf{g}_{s,j}}} } \right\|^2} \\
	\leq \mathbb{E}{\left\| {\sum\limits_{s = \tilde t}^{t - 1} {{\eta _{\tilde t}}{\mathbf{g}_{s,j}}} } \right\|^2} 
	\leq 4\eta _t^2{\left(E-1\right)^2}{G^2},
\end{align*}
where the first inequality comes from $ \mathbb{E} \| X - \mathbb{E}X \|^2 \leq \mathbb{E} \| X \|^2 $.

When $ t \notin {\mathcal{I}}_E$, we have
\begin{equation}
	\begin{split}
		&	\mathbb{E}{Q_t} = \sum\limits_{i \in {\mathcal{M}}} {\sum\limits_{j \in {{\mathcal{N}}_i}} {\frac{1}{{M{N_i}}}\mathbb{E}{{\left\| {{\mathbf{w}_{t,j}} - {{\overline {\mathbf{w}}}_t}} \right\|}^2}} }  \\
		&	= \sum\limits_{i \in {\mathcal{M}}} {\sum\limits_{j \in {{\mathcal{N}}_i}} {\frac{1}{{M{N_i}}}\mathbb{E}\left\| {\left( {{\mathbf{w}_{\tilde t,j}} - {{\overline {\mathbf{w}}}_{\tilde t}}} \right)} \right.} }   \\
		&	{\left. { - \left( {\sum\limits_{s = \tilde t}^{t - 1} {{\eta _s}{\mathbf{g}_{s,j}}}  - \frac{1}{M}\sum\limits_{i \in {\mathcal{M}}} {\sum\limits_{s = \tilde t}^{t - 1} {{\eta _s}{\mathbf{g}_{s,i}}} } } \right)} \right\|^2} \\
		& \leq \left( {1 + b} \right)\sum\limits_{i \in {\mathcal{M}}} {\sum\limits_{j \in {{\mathcal{N}}_i}} {\frac{1}{{M{N_i}}}\mathbb{E}{{\left\| {{\mathbf{w}_{\tilde t,j}} - {{\overline {\mathbf{w}}}_{\tilde t}}} \right\|}^2}} } \\
		& + \left( {1 + \frac{1}{b}} \right)\sum\limits_{i \in {\mathcal{M}}} \sum\limits_{j \in {{\mathcal{N}}_i}} {\frac{1}{{M{N_i}}} } \nonumber \\
		& \cdot {\mathbb{E}{{\left\| {\sum\limits_{s = \tilde t}^{t - 1} {{\eta _s}{\mathbf{g}_{s,j}}}  - \frac{1}{M}\sum\limits_{i \in {\mathcal{M}}} {\sum\limits_{s = \tilde t}^{t - 1} {{\eta _s}{\mathbf{g}_{s,i}}} } } \right\|}^2}}  \\ 
		&  \leq \left( {1 + b} \right){Q_{\tilde t}} + 4\left( {1 + \frac{1}{b}} \right){E^2}{G^2}\eta _t^2.
	\end{split}		
\end{equation}

This completes the proof.

\section{Proof of Theorem \ref{theorem:Q_T_UpperBound}} \label{proof:theorem_Q_T_UpperBound}

First, we define the following variables $ C_1 - C_5 $ for notational brevity. That is, $ {C_1} = \left( {1 + b} \right){\left( {1 - \alpha } \right)^2} $, $ {C_2} = \left( {1 + b} \right){\alpha ^2} $, $ {C_3} = 4\left( {1 + \frac{1}{b}} \right){\left( {1 - \alpha } \right)^2}{\left( {E - 1} \right)^2}{G^2} $, $ {C_4} = \left( {1 + b} \right) $, and $ {C_5} = 4\left( {1 + \frac{1}{b}} \right){\left( {E - 1} \right)^2}{G^2} $.
When $ R- 1 $ is the step of different area mixing models, i.e., $ A_{R-1} = 1 $, we have the following set of inequalities from step $ {R - 1 - \left( {K + \delta } \right)E + E} $ to $ R-1 $ based on (\ref{equ:relationship_Qt}).
\begin{equation*}
  \left\{ \begin{gathered}
	{Q_{R - 1}} \leq {C_1}{Q_{R - 1 - E}} + {C_2}{Q_{R - 1 - \delta E}} + {C_3}\eta _{R - 1}^2, \hfill \\
	{Q_{R - 1 - E}} \leq {C_4}{Q_{R - 1 - 2E}} + {C_5}\eta _{R - 1 - E}^2, \hfill \\
	\ldots  \hfill \\
	{Q_{R - 1 - \left( {K + \delta } \right)E + E}} \leq {C_4}{Q_{R - 1 - \left( {K + \delta } \right)E}} + {C_5}\eta _{R - 1 - \left( {K + \delta } \right)E + E}^2. \hfill \\ 
\end{gathered}  \right. 
\end{equation*}
Based on the recurrence relation, we have
\begin{equation}\label{equ:QT_kappa}
	{Q_{R - 1}} \leq {\kappa _1}{Q_{R - 1 - \left( {K + \delta } \right)E}} + {\kappa _2}\eta _{R - 1 - \left( {K + \delta } \right)E + E}^2,
\end{equation}
where $ {\kappa _1} = {C_1}{\left( {{C_4}} \right)^{K + \delta  - 1}} + {C_2}{\left( {{C_4}} \right)^R} ={\left( {1 + b} \right)^{K + \delta }}{\left( {1 - \alpha } \right)^2} + {\left( {1 + b} \right)^{K + 1}}{\alpha ^2} $ and $ {\kappa _2} = {C_1}{C_5} \allowbreak \sum\limits_{l = 1}^{K + \delta  - 1} {{{\left( {{C_4}} \right)}^{l - 1}}}  + {C_2}{C_5}\sum\limits_{l = \delta }^{K + \delta  - 1} {{{\left( {{C_4}} \right)}^{l - \delta }}}  + {C_3} = 4\left( {1 + \frac{1}{b}} \right){\left( {E - 1} \right)^2}{G^2}\left[ {\frac{{{{\left( {1 + b} \right)}^{K + \delta }}{{\left( {1 - \alpha } \right)}^2} + {{\left( {1 + b} \right)}^{K + 1}}{\alpha ^2}}}{b} + {{\left( {1 - \alpha } \right)}^2}} \right] $.

Assume that  $ R -1= q \left( K + \delta\right)E  $, where $ q $ denotes the total model mixing times from $ t=0 $ to $ R-1 $. With the similar form of (\ref{equ:QT_kappa}), we have the following inequalities of $ Q_t $ when $ A_t = 1 $.
\begin{equation*}
   \resizebox{1\hsize}{!}{$ \left\{ \begin{gathered}
	{Q_{R - 1}} \leq {\kappa _1}{Q_{R - 1 - \left( {K + \delta } \right)E}} + {\kappa _2}\eta _{T -1- \left( {K + \delta } \right)E+E}^2, \hfill \\
	{Q_{R - 1 - \left( {K + \delta } \right)E}} \leq {\kappa _1}{Q_{R - 1 - 2\left( {K + \delta } \right)E}} + {\kappa _2}\eta _{R - 1 - 2\left( {K + \delta } \right)E + E}^2, \hfill \\
	... \hfill \\
	{Q_{R - 1 - \left( {q - 1} \right)\left( {K + \delta } \right)E}} \leq {\kappa _1}{Q_{R - 1 - q\left( {K + \delta } \right)E}} + {\kappa _2}\eta _{R - 1 - q\left( {K + \delta } \right)E + E}^2, \hfill \\ 
\end{gathered}  \right. $}
\end{equation*}
where the last inequality can also be represented by $ {Q_E} \leq {\kappa _1}{Q_0} + {\kappa _2}\eta _E^2 $.
According to the recurrence relationship, $ Q_{R-1} $ can be upper bounded by
\begin{align*}
	{Q_{R - 1}} & \leq {\kappa_1^q}{Q_0} + \kappa_2\sum\limits_{l = 0}^{q - 1} {{\kappa_1^l}\eta _{R - 1 - \left( l+1\right) \left( {K + \delta } \right) +E}^2} \\
	& = \kappa_2\sum\limits_{l = 0}^{q - 1} {{\kappa_1^l}\eta _{R - 1 - \left( l+1\right) \left( {K + \delta } \right) +E}^2} \\
	& \leq \eta _E^2\kappa_2\sum\limits_{l = 0}^{q - 1} \kappa_1^l   \leq \eta _E^2\frac{{{\kappa _2}}}{{1 - {\kappa _1}}},
\end{align*}
where the first equality follows from $ Q_0 = 0  $ due to initialization. This completes the proof.

\section{Proof of Theorem \ref{theorem:convergence_bound_partial}}\label{proof:convergence_bound_partial}

According to Lemma \ref{lemma:unbiased_sampling}, $ {\mathbb{E}_{{{\mathcal{U}}_t}}}{\left\| {{{\overline {\mathbf{z}}}_t} - {\mathbf{w}^ * }} \right\|^2} $ has the following upper bound,
\begin{align}
	&	{\mathbb{E}_{{{\mathcal{U}}_t}}}{\left\| {{{\overline {\mathbf{z}}}_t} - {\mathbf{w}^ * }} \right\|^2} = {\mathbb{E}_{{{\mathcal{U}}_t}}}{\left\| {{{\overline {\mathbf{z}}}_t} - {{\overline {\mathbf{w}}}_t} + {{\overline {\mathbf{w}}}_t} - {\mathbf{w}^ * }} \right\|^2} \nonumber \\
	\label{equ:vanish_term}	&	= {\mathbb{E}_{{{\mathcal{U}}_t}}}{\left\| {{{\overline {\mathbf{z}}}_t} - {{\overline {\mathbf{w}}}_t}} \right\|^2} + {\left\| {{{\overline {\mathbf{w}}}_t} - {\mathbf{w}^ * }} \right\|^2} + 2{\mathbb{E}_{{{\mathcal{U}}_t}}}\left\langle {{{\overline {\mathbf{z}}}_t} - {{\overline {\mathbf{w}}}_t},{{\overline {\mathbf{w}}}_t} - {\mathbf{w}^ * }} \right\rangle  \\
	&	\leq \frac{1}{{{M^2}}}\sum\limits_{i \in {\mathcal{M}}} {\frac{{{N_i} - {U_i}}}{{{U_i}{N_i}({N_i} - 1)}}} \sum\limits_{j \in {{\mathcal{N}}_i}} {{{\left\| {{\mathbf{w}_{t,j}} - {{\overline {\mathbf{w}}}_t}} \right\|}^2}}  + {\left\| {{{\overline {\mathbf{w}}}_t} - {\mathbf{w}^ * }} \right\|^2} \nonumber \\
	&	\leq \mathop {\max }\limits_{i \in \mathcal{M}} \left\{ {\frac{{{N_i} - {U_i}}}{{M{U_i}\left( {{N_i} - 1} \right)}}} \right\}\sum\limits_{i \in {\mathcal{M}}} {\sum\limits_{j \in {{\mathcal{N}}_i}} {\frac{1}{{M{N_i}}}{{\left\| {{\mathbf{w}_{t,j}} - {{\overline {\mathbf{w}}}_t}} \right\|}^2}} } \nonumber \\
	& + {\left\| {{{\overline {\mathbf{w}}}_t} - {\mathbf{w}^ * }} \right\|^2} \nonumber \\
	&	= \mathop {\max }\limits_{i \in \mathcal{M}} \left\{ {\frac{{{N_i} - {U_i}}}{{M{U_i}\left( {{N_i} - 1} \right)}}} \right\}{Q_t} + {\Delta _t}, \nonumber 
\end{align}
where the last term in (\ref{equ:vanish_term}) is eliminated in the next line due to the unbiasedness of $ {\overline {\mathbf{z}}}_t $. Similar to (\ref{equ:gap_barw_wstar}), we have
\begin{align*}
	&	\mathbb{E}\left[ {F\left( {{{\overline {\mathbf{z}}}_t}} \right)} \right] - F\left( {{{\mathbf{w}}^{ \star }}} \right) \leq \frac{L}{2}{{\mathbb{E}_{{{\mathcal{U}}_t}}}{\left\| {{{\overline {\mathbf{z}}}_t} - {\mathbf{w}^ * }} \right\|^2}} \\
	& \le \frac{L}{2}\left[ {\mathop {\max }\limits_{i \in {{\cal M}}} \left\{ {\frac{{{N_i} - {U_i}}}{{M{U_i}\left( {{N_i} - 1} \right)}}} \right\}{Q_t}} \right.\\
	&	\left. { + \left( {\frac{1}{{1 - \alpha }} - \alpha  + \frac{1}{2}} \right)\frac{v}{{t + \gamma }} + 2{Q_{t - 1}}} \right].
\end{align*}
When $ t=R $ and $ A_{R-1} = 1 $, $ Q_R $ can be obtained by (\ref{equ:Q_t_noagg}) where $ \tilde{t} = R-1 $. This completes the proof.

\end{appendices}

\bibliographystyle{IEEEtran}
\bibliography{IEEEabrv,reference}

\end{document}